\definecolor{Gray}{gray}{0.9}
\pgfplotsset{compat=1.8, xlabel style={anchor=west, align=center}, ylabel style={anchor=south, align=center}, samples=200, ymin=0, ymax=1, width=3cm, height=3cm, axis lines=middle, xticklabel style={/pgf/number format/.cd,frac,frac TeX=\frac}, yticklabel style={/pgf/number format/.cd,frac,frac TeX=\frac}, xtick=\empty,ytick=\empty, no markers, cycle list={{black,solid}}, samples=200} 
\title{\papertitle}
\author{
  Richard Nock$^{\dagger}$ \qquad Aditya Krishna Menon$^{\ddagger}$ \\
 $^{\dagger}$Data61 $\&$ the Australian National University,
 $^\ddagger$Google Research\\
{\normalsize richard.nock@data61.csiro.au, adityakmenon@google.com} \\
}
\begin{document}

\date{}

\maketitle

\begin{abstract}

Supervised learning requires the specification of a loss function to minimise.
While the theory of admissible losses from both a computational and statistical perspective is well-developed,
these offer a panoply of different choices.
In practice, this choice is typically made in an \emph{ad hoc} manner.
In hopes of making this procedure more principled,
the problem of \emph{learning the loss function} for a downstream task (e.g., classification) has garnered recent interest.
However, works in this area have been generally empirical in nature.

In this paper, 
we revisit the {\sc SLIsotron} algorithm of~\citet{kkksEL} through a novel lens, 
derive a generalisation based on Bregman divergences,
and show how it provides a principled procedure for learning the loss.
In detail, 
we cast
{\sc SLIsotron}
as learning a loss from a family of composite square losses.
By interpreting this through the lens of \emph{proper losses},
we derive a generalisation of {\sc SLIsotron} based on Bregman divergences.
The resulting {\sc BregmanTron} algorithm
jointly learns the loss along with the classifier. 
It comes equipped with a simple guarantee of convergence for the loss it learns, and its set of possible outputs comes with a guarantee of agnostic approximability of Bayes rule.
Experiments indicate that the {\sc BregmanTron} substantially outperforms the {\sc SLIsotron}, and that the loss it learns can be minimized by other algorithms for different tasks, thereby opening the interesting problem of \textit{loss transfer} between domains.

\end{abstract}

\newpage

\section{Introduction}\label{sec-int}

Computationally efficient supervised learning essentially started with the PAC framework of \citet{vAT}, in which the goal was to learn in polynomial
time a function being able to predict a label (or class, among two possible)
for i.i.d. inputs. 
The initial
loss, whose minimization enforces the accurate prediction of
labels, was the binary \textit{zero-one loss} which returns 1 iff a mistake is made.

The zero-one loss was later progressively replaced in learning
algorithms for tractability reasons, including its non-differentiability
and the structural complexity of its minimization \citep{kvAI,ahwEM}. 
From the late nineties, a zoo of losses started to be used for
tractable machine learning (ML), the
most popular ones built from the \textit{square loss} and the \textit{logistic loss}. 
Recently, there has been a significant push to
widen even more the choice of loss;
to pick a few, see~\cite{gssLS,kkksEL,lwdUU,mmSS,nnOT,nnBD,rwCB,ssckLB,sLE,sdlskdAT}. 

Seldom do such works ground reasons for change of the loss outside of
tractability at large. It turns out that statistics and Bayes
decision theory
give a precise reason, one which has long been the object of philosophical
and formal debates~\citep{fRED}. It starts from a
simple principle:
\begin{center}
\textit{Bayes rule is optimal for the loss at hand,}
\end{center}
a property known as \textit{properness}~\citep{sEO}. 
Then
comes a less known subtlety: a proper loss as
commonly used for real-valued prediction, such as the square and
logistic loss, involves an implicit 
\textit{canonical link} \citep{rwCB}
function that maps
class probabilities (such as the output of Bayes rule) to
real values.
This is
exemplified by the sigmoid (inverse) link in deep learning.

Supervised learning in a Bayesian framework can thus
be more broadly addressed by learning a classifier \textit{and} a
link for the domain at hand, \textit{which implies learning a proper canonical
loss} with the classifier. This loss, if suitably expressed, can be used for training. This kills two birds in one shot: we get access not just to real valued predictions, but also a way to embed them into class probability
estimates via the inverse link: we directly learn to estimate Bayes
rule.

A large number of papers, especially recently, have tried to push
forward the problem of learning the loss, including \textit{e.g.}
\cite{gssLS,lwdUU,mmSS,ssckLB,sLE,sdlskdAT}, \textit{but} none of
those dealing with supervised learning alludes to properness to
ground the choice of the loss, therefore taking the risk of fitting a
loss whose (unknown) optima may fail to contain Bayes rule. To the
best of our knowledge, \citet{nnOT} is the first paper grounding the
search of the loss within properness and \citet{kkksEL} brings the
first algorithm (\slisotron) and associated
theoretical results for fitting the link --- though subject to restrictive
assumptions on Bayes rule and on the target distribution, 
the risk to fit probabilities outside $[0,1]$,
and finally falling short of showing
convergence that would comply with the classical picture of ML, either
for training or generalization.

\textbf{Our major contribution} is a new algorithm, the \clu~(Algorithm~\ref{fig:bregmantron}),
a generalisation of the \slisotron~\citep{kkksEL} to learn
proper canonical losses.
\clu{} exploits two dual views of proper losses, 
guarantees class probability estimates in $[0,1]$, 
and uses Lipschitz constraints that can be tuned at runtime.

\textbf{Our formal contribution} includes a simple convergence guarantee for
this algorithm which alleviates all assumptions on the domain and
Bayes rule in \cite{kkksEL}. Our result shows that convergence happens
as a function of the discrepancy between our estimate and the true
value of of the mean operator -- a sufficient statistic for
the class \citep{pnrcAN}. As the discrepancy converges to zero, the estimated (link, classifier) by
the \clu~converges to a stable output. To come to this result, we
pass through an intermediate step in which we show a particular
explicit form for any differentiable proper composite loss, of a
Bregman divergence \citep{bTR}.
All proofs are given in an appendix, denoted \supplement.

\section{Definitions and notations}
\label{sec:defs}

The following shorthands are used: $[n]
\defeq \{1, 2, ..., n\}$ for $n \in \mathbb{N}_*$, 
for $z \geq 0, a \leq  b \in \mathbb{R}$,
denote
$z \cdot [a, b]
\defeq [za, zb]$  and $z + [a, b]
\defeq [z+a, z+b]$. We also let
$\overline{\mathbb{R}} \defeq [-\infty, \infty]$.
In (batch) supervised
learning, one is given a training set of $m$ examples
$S \defeq \{({\ve{x}}_i, y^*_i), i \in [m]\}$, where ${\ve{x}}_i
\in {\mathcal{X}}$ is an observation
(${\mathcal{X}}$ is called the domain: often,  ${\mathcal{X}}\subseteq {\mathbb{R}}^d$) and $y^*_i
\in \mathcal{Y} \defeq \{-1,1\}$ is a label, or class. The objective
is to learn a \textit{classifier} $h
: \mathcal{X} \rightarrow \mathbb{R}$ which belongs to a given
set $\mathcal{H}$. The goodness of fit of some $h$ on ${S}$ is
evaluated by a \textit{loss}. 

\noindent $\triangleright$ \textbf{Losses}: 
A loss for binary class probability
estimation~\cite{bssLF} is some $\ell : \mathcal{Y} \times [0,1]
\rightarrow \overline{\mathbb{R}}$ whose expression can be split according to
\textit{partial} losses $\partialloss{1}, \partialloss{-1}$,
\begin{eqnarray}
\ell(y^*,u) & \defeq & \iver{y^*=1}\cdot \partialloss{1}(u) +
                     \iver{y^*=-1}\cdot \partialloss{-1}(u), \label{eqpartialloss}
\end{eqnarray}
Its \emph{conditional Bayes risk} function
is
the best achievable loss
when labels are drawn with a particular positive base-rate,
\begin{eqnarray}
\cbr(\pi) & \defeq &
\inf_u \E_{\Y\sim \pi} \properloss(\Y, u),\label{defCBR}
\end{eqnarray} 
where 
$\Pr[\Y = 1] = \pi$. 
A loss for class probability
estimation $\ell$ is \textit{\textbf{proper}} iff Bayes prediction
locally achieves the minimum everywhere: $ \cbr(\pi) = \E_{\Y} \properloss(\Y, \pi), \forall \pi
\in [0,1]$,
and strictly proper if Bayes is the unique minimum.
Fitting a prediction $h(\ve{x}) \in \mathbb{R}$ into some $u \in [0,1]$
as required in \eqref{eqpartialloss} is done via a 
\emph{link function}.

\noindent $\triangleright$ \textbf{Links, composite and canonical
  proper losses}. A link $\psi : [0,1] \rightarrow \mathbb{R}$ allows
to connect real valued prediction and class probability estimation. A
loss can be augmented with a \emph{link} to account for real valued prediction, $\ell_\psi( y^*, z ) \defeq \ell( y^*,
\psi^{-1}( z ) )$ with $z\in \mathbb{R}$ \citep{rwCB}. There exists a
particular link uniquely defined\footnote{Up to multiplication or addition by a scalar
\citep{bssLF}.} for any proper differentiable loss, the
\textit{canonical link}, as: $\psi \defeq - \cbr'$ \citep[Section
6.1]{rwCB}. We note that the differentiability condition can be
removed \citep[Footnote 6]{rwCB}.
As an example, for log-loss we find
the link
$\psi( u ) = \log \frac{u}{1 - u}$,
with inverse the well-known sigmoid $\psi^{-1}( z ) = ({1 +
  e^{-z}})^{-1}$. A canonical
proper loss is a proper loss using the canonical link. 

\noindent $\triangleright$ \textbf{Convex surrogates}.  
 When the loss
is proper canonical and symmetric ($\partialloss{1}(u) = \partialloss{-1}(1-u),
\forall u \in (0,1)$), it was shown in \citet{nnOT,nnBD} that 
there exists a convenient dual formulation amenable to direct
minimization with real valued classifiers: a \emph{convex surrogate} loss
\begin{eqnarray}
F_\ell(z) & \defeq & (-\cbr)^\star(-z),\label{defCS}
\end{eqnarray}
where $\star$ denotes the Legendre conjugate
of $F$, $F^\star(z)\defeq \sup_{z' \in \mathrm{dom}(F)}\{zz' - F(z')\}$
\citep{bvCO}. For simplicity, we just call $F_\ell$ the convex
surrogate of $\ell$. The logistic, square and Matsushita losses are
all surrogates of proper canonical and symmetric losses. 
Such
functions are called surrogates since they all 
define convenient upperbounds of the 0/1 loss. 
Any proper canonical
and symmetric loss has $\properloss( y^*, z ) \propto F_\ell(y^*z)$ so
both dual forms are equivalent in terms of minimization \cite{nnOT,nnBD}.

\noindent $\triangleright$ \textbf{Learning}. Given a sample $S$, we
learn $h$ by the empirical minimization of a proper loss on $S$
that we denote $\ell_\psi (S, h) \defeq \expect_{i}[\ell( y^*_i,
\psi^{-1}( h(\ve{x}_i) ) )]$. We insist on the fact that minimizing
any such loss does not just give access to a real valued predictor
$h$: it \emph{also} gives access to a 
class probability
estimator given the loss \citep[Section 5]{nnOT}, \cite{nwLO}, 
\begin{eqnarray}
\Pr[\Y = 1 | \ve{x}; h, \psi] & \defeq & \psi^{-1}(h(\ve{x})),
\end{eqnarray}
so in the Bayesian framework, supervised learning can also encompass
learning the link $\psi$ of the loss as well. 
\textit{If the loss is proper
canonical, learning the link implies learning the loss}. 
As usual, we assume 
$S$ sampled i.i.d. according to an unknown but fixed
 $\mathcal{D}$ and let $\ell_. (\mathcal{D}, h) \defeq
\expect_{S \sim \mathcal{D}} [\ell_. (S, h)]$. 

\section{Related work}\label{sec-rel}

Our
problem of interest is
learning not only a classifier, but also a \emph{loss function} itself.
A minimal requirement for the loss to be useful is that it is proper,
i.e.,
it preserves the Bayes classification rule.
Constraining our loss to this set ensures standard guarantees on the classification performance using this loss,
e.g.,
using surrogate regret bounds.

Evidently, when choosing amongst losses, we must have a well-defined objective.
We now 
reinterpret an algorithm of~\citet{kkksEL}
as providing such an objective.


\noindent $\triangleright$ \textbf{The \slisotron{} algorithm}.
\citet{kkksEL} considered the problem of learning 
a class-probability model of the form
$ \Pr( \Y = 1 \mid \ve{x} ) = u( \ve{w}_*^\top \ve{x} ) $
where $u( \cdot )$ is a 1-Lipschitz, non-decreasing function,
and $\ve{w}_* \in \mathbb{R}^d$ is a fixed vector.
They proposed \slisotron{},
an 
iterative algorithm that alternates between gradient steps to estimate $\ve{w}_*$,
and 
nonparametric \emph{isotonic regression} steps to estimate $u$.
\slisotron{} provably bounds the expected \emph{square loss},
i.e.,
\begin{eqnarray}
\sqloss_\psi (S, h) & = & \expect_{\ve{x}\sim
                          S}\left[\expect_{y^* \sim S}[(y -
                          \psi^{-1}(h(\ve{x})))^2|
                          \ve{x}]\right], \label{properSQ}
\end{eqnarray}
where $h(\ve{x}) = \ve{w}^\top \ve{x}$ is a linear scorer and 
$y \defeq (y^* + 1)/2$.
The 
square loss has $2
\cdot\partialsqloss{1}(u) \defeq (1-u)^2$, $2
\cdot\partialsqloss{-1}(u) \defeq u^2$ ,
and conditional Bayes risk
$2 \cdot\cbrsq(u) \defeq u(1-u)$.

Observe now that the \slisotron{} algorithm can be interpreted as follows:
we jointly learn a \emph{classifier} $h \in \mathcal{H}$ and
\emph{composite link} $\psi$ for the square loss $\ell \in
\mathcal{L}$, as
$$ \mathcal{L} \defeq \{ (y, z) \mapsto ( y - \psi^{-1}( z ) )^2 \colon \psi \text{ is 1-Lipschitz, invertible} \}. $$
That is, \slisotron{} can be interpreted as finding a classifier and
a link via all compositions of the square loss with a 1-Lipschitz, invertible function.
\citet{kkksEL} in fact do not directly analyze \eqref{properSQ} but
a \textit{lowerbound} that directly follows from Jensen's inequality:
\begin{eqnarray}
\sqloss_\psi (S, h) & = & \expect_{\ve{x}\sim
                          S}\left[\expect_{y^* \sim S}[(y -
                          \psi^{-1}(h(\ve{x})))^2|
                          \ve{x}]\right],\nonumber\\
  & \geq & \expect_{\ve{x}\sim
                          S}\left[(\expect_{y\sim S}[y|\ve{x}]-
                          \psi^{-1} \circ h(\ve{x}))^2
                          \right].\label{kkkLOSS}
\end{eqnarray}
This
does not change the problem as the slack is the expected (per
observation) variance of labels in
the sample, a constant given $S$. 
We shall return to this point in the sequel.

\citet{kkksEL} make an assumption
about Bayes rule, $\expect_{y^* \sim \mathcal{D}}[y|
\ve{x}] =
\psi_{\opt}^{-1}(\ve{w}_{\opt}^\top
\ve{x})$ with $\psi_{\opt}^{-1}$ Lipschitz and $\|\ve{w}_{\opt}\| \leq
R$.
Under such an assumption, it is shown that \textit{there exists} an iteration
$t = O((Rm/d)^{1/3})$ of the \slisotron{} with $\max\{\tilde{\sqloss}_\psi (S, h),
\tilde{\sqloss}_\psi (\mathcal{D}, h)\}\leq \tilde{O}((dR^2/m)^{1/3})$
with high probability. Nothing is guaranteed outside this unknown "hitting"
point, which we partially attribute to the lack of convergence
results on
training. Another potential downside from the Bayesian standpoint is
that the estimates learned are not guaranteed to be in $[0,1]$ by the
isotonic regression as modeled. 

%
\begin{figure*}[t]
\begin{mdframed}[style=MyFrame]
  {\vspace{-0.61cm}\colorbox{gray!20}{\fbox{\textbf{Algorithm 0} \slisotron}}}\\

\textbf{Input}: sample $S = \{(\bm{x}_i, y_i), i = 1, 2, ..., m\}$, iterations $T \in \mathbb{N}_*$.
  
\textbf{For} $t = 0, 1, ..., T-1$

\hspace{0.5cm} [Step 1] \textbf{If} $t=0$ \textbf{Then}
$\ve{w}_{t+1} = \ve{w}_1 = \ve{0}$ 
  \textbf{Else} fit $\ve{w}_{t+1}$ using 
  
  
  \vspace{-0.55cm}
  
  \begin{eqnarray}
    \ve{w}_{t+1} & = & \ve{w}_{t} - \frac{1}{m} \sum_{i = 1}^m (u_{t}( \ve{x}_i )  - y_i) \cdot \ve{x}_i
  \end{eqnarray}
  
  \vspace{-0.35cm}
  
  \hspace{0.5cm} [Step 2] order indexes
  in $S$ so that $\ve{w}_{t+1}^\top\ve{x}_{i+1} \geq
  \ve{w}_{t+1}^\top\ve{x}_i, \forall i\in [m-1]$; 

  \hspace{0.5cm} [Step 3] let $\ve{z}_{t+1} \defeq \ve{w}_{t+1}^\top \ve{x}_{i + 1}$

\hspace{0.5cm} [Step 4] fit next link

  \vspace{-0.75cm}
  
\begin{eqnarray}
u_{t+1} & \leftarrow & \sc{\rm IsotonicReg}(\hve{z}_{t+1}, S); \mbox{
                    \hspace{1.1cm}//fitting of
                         $u_{t+1}$ given $\ve{z}_{t+1}$}
  \end{eqnarray}

  \vspace{-0.35cm}
  
  \textbf{Output}: $u_T, \ve{w}_T$.
\end{mdframed}
\caption{The \slisotron~algorithm of~\citet{kkksEL}.}
\label{fig:slisotron}
\end{figure*}

\noindent $\triangleright$ \textbf{Learning the loss}.
Over the last decade, the problem of learning the loss has seen a
considerable push for a variety of reasons: \citet{sdlskdAT} introduced a family of tunable losses, a subset of
which being proper, aimed at increasing robustness in
classification. \citet{mmSS} formulated the generalized linear model
using Bregman divergences, though no relationship with proper losses
is made, the loss function used integrates several regularizers
breaking properness, the
formal results rely on several quite restrictive assumptions and the
guarantees can be quite loose if the true composite link comes from a
loss that is not strongly convex "enough". In \citet{sLE}, the problem
studied is in fact learning the regularized part of the logistic
loss, with no approximation guarantee. In \citet{gssLS}, the goal is to learn a loss defined by a
neural network, without reference to proper losses and no
approximation guarantee. Such a line of work also appears in a
slightly different form in \citet{lwdUU}. In \citet{ssckLB}, the loss considered is
mainly used for metric learning, but integrates Bregman divergences. No mention of properness is
made. Perhaps the most restrictive part of the approach is that it
fits piecewise linear divergences, which are therefore not differentiable nor
strictly convex. 

Interestingly, none of these recent references
alludes to properness to constrain the choice of the loss. 
Only the
modelling of \citet{mmSS} can be related to properness via Theorem
\ref{thLBREG} proven below.
The problem of learning the loss was introduced
as loss \textit{tuning} in \citet{nnOT} (see also \citet{rwCB}). Though a general boosting
result was shown for any tuned loss following a particular
construction on its Bayes risk, it was restricted to losses defined from a
convex combination of a basis set and no insight on improved convergence rates was given.
\section{Learning proper canonical losses}\label{sec-clu} 

\begin{figure*}[t]
\begin{mdframed}[style=MyFrame]
  {\vspace{-0.61cm}\colorbox{gray!20}{\fbox{\textbf{Algorithm 1} \clu}}}\\

\textbf{Input}: sample $S = \{(\bm{x}_i, y_i), i
  = 1, 2, ..., m\}$, iterations $T \in \mathbb{N}_*$, 
  parameters $a, b \in \mathbb{R}_+$.
  
Initialize $u_0(z) \defeq 0\vee (1\wedge (az+b)) $;
  
\textbf{For} $t = 0, 1, ..., T-1$

\hspace{0.5cm} [Step 1] \textbf{If} $t=0$ \textbf{Then}
$\ve{w}_{t+1} = \ve{w}_1 = \ve{0}$ 
  \textbf{Else} fit $\ve{w}_{t+1}$ using a
  gradient step towards:
  
  \vspace{-0.55cm} 
  
  \begin{eqnarray}
\ve{w}^* \hspace{-0.3cm}& \defeq & \hspace{-0.3cm}\arg\min_{\ve{w}}
                    \expect_{{S}}[D_{U_{t}}(\ve{w}^\top\ve{x}\|u^{-1}_{t}(y))];\label{pbregW}\mbox{
                    \hspace{-0.15cm}//proper canonical fitting of
                    $\ve{w}_{t+1}$ given $\hve{y}_{t}, u_t$}
  \end{eqnarray}

  \vspace{-0.25cm}
  
  \hspace{0.5cm} [Step 2] order indexes
  in $S$ so that $\ve{w}_{t+1}^\top\ve{x}_{i+1} \geq
  \ve{w}_{t+1}^\top\ve{x}_i, \forall i\in [m-1]$; 

  \hspace{0.5cm} [Step 3] fit $\hve{y}_{t+1}$ by solving for
  global optimum ($n_t, N_t$ chosen so that $0 < n_t \leq N_t $):
  
  \vspace{-0.55cm}
  
\begin{eqnarray}
\hve{y}_{t+1} \hspace{-0.1cm}& \defeq & \hspace{-0.1cm}\arg\min_{\hat{\ve{y}}}
\expect_{{S}}[D_{U^\star_{t}}(\hat{y}\|\hve{y}_{t})];\mbox{
                    \hspace{1.1cm}//proper composite fitting of
                         $\hve{y}_{t+1}$ given $\ve{w}_{t+1}, u_t$}\nonumber\\
& & \mbox{ s.t. }  \left\{\begin{array}{l}
\hat{y}_{i+1} - \hat{y}_{i}
\in [n_t\cdot (\ve{w}_{t+1}^\top(\ve{x}_{i+1} - \ve{x}_i)), N_t\cdot
(\ve{w}_{t+1}^\top(\ve{x}_{i+1} - \ve{x}_i))]\:\:, \forall i \in
                        [m]\\
  \hat{y}_1 \geq 0, \hat{y}_m \leq 1
\end{array}\right.\label{pbregU}.
\end{eqnarray}

\hspace{0.5cm} [Step 4] fit next inverse link

  \vspace{-0.75cm}
  
\begin{eqnarray}
u_{t+1} & \leftarrow & \fit(\hve{y}_{t+1}, \ve{w}_{t+1}, S); \mbox{
                    \hspace{2.8cm}//fitting of
                         $u_{t+1}$ given $\ve{w}_{t+1}, \hve{y}_{t+1}$}
  \end{eqnarray}

  \vspace{-0.35cm} 
  
  \textbf{Output}: $u_T, \ve{w}_T$.
\end{mdframed}
\caption{The \clu~algorithm.}
\label{fig:bregmantron}
\end{figure*}

We now present {\clu}, our algorithm to learn proper canonical losses
by learning a link function.
We proceed in two steps. We first show an explicit form to proper differentiable
composite losses and then provide our approach, the \clu, to fit such losses.

\noindent $\triangleright$ \textbf{Every proper differentiable
composite loss is Bregman} Let $F:\mathbb{R} \rightarrow \mathbb{R}$ be convex differentiable. The Bregman divergence $D_F$ with
generator $F$ is:
\begin{eqnarray}
D_{F}(
  z\|z') & \defeq & F(z) - F(z') - 
  (z-z')F' (z'). \label{eqBreg}
\end{eqnarray}
Bregman divergences satisfy a number of convenient properties, many of
which are going to be used in our results. In order not to laden the
paper's body, we have summarized in \supplement~(Section
\ref{sec_FactSheetBreg}) a factsheet of all the results we use. 

Our first result gives a way to move between proper composite losses and Bregman divergences.
\begin{theorem}\label{thLBREG}
Let $\ell : \mathcal{Y} \times [0,1]
\rightarrow \overline{\mathbb{R}}$ be differentiable and $\psi: [0,1]
\rightarrow \mathbb{R}$ invertible. Then $\ell$ is a
proper composite loss with link $\psi$ iff it is a Bregman divergence
with:
\begin{eqnarray}
\ell(y^*,h(\ve{x})) & = & D_{-\cbr}(y\|\link^{-1}\circ
                          h(\ve{x})),\label{eqFFF}\\
  & = & D_{(-\cbr)^\star}(\canolink \circ\link^{-1}\circ h(\ve{x}) \| \link(y))\nonumber,
\end{eqnarray}
where $\cbr$ is the conditional Bayes risk defined in \eqref{defCBR},
and we remind the correspondence  $y = (y^* + 1)/2$.
\end{theorem}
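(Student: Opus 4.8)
The plan is to combine the classical Savage-type tangent-line representation of proper losses with two standard Bregman identities (conjugate duality and the ``bias--variance'' decomposition, both collected in \supplement). \textbf{Forward direction.} Assume $\ell$ is proper and set $L(\pi,u)\defeq\E_{\Y\sim\pi}\ell(\Y,u)=\pi\,\partialloss{1}(u)+(1-\pi)\,\partialloss{-1}(u)$, which is affine in $\pi$. As an infimum of affine functions, $\cbr(\pi)=\inf_u L(\pi,u)$ is concave; properness gives $L(\pi,\pi)=\cbr(\pi)$ and $L(\pi,u)\ge\cbr(\pi)$ for every $u$. Thus, for each fixed $u$, the affine map $\pi\mapsto L(\pi,u)$ sits above the concave differentiable function $\cbr$ and touches it at $\pi=u$, so it is exactly the supporting line $\pi\mapsto\cbr(u)+\cbr'(u)(\pi-u)$. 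Evaluating at $\pi\in\{0,1\}$ recovers
\[
\partialloss{1}(u)=\cbr(u)+(1-u)\cbr'(u),\qquad\partialloss{-1}(u)=\cbr(u)-u\,\cbr'(u),
\]
hence, since $y=(y^*+1)/2\in\{0,1\}$, $\ell(y^*,u)=L(y,u)=\cbr(u)+\cbr'(u)(y-u)$. Expanding \eqref{eqBreg} with generator $F=-\cbr$ gives $D_{-\cbr}(y\|u)=\cbr(u)+\cbr'(u)(y-u)-\cbr(y)$, so $\ell(y^*,u)=D_{-\cbr}(y\|u)+\cbr(y)$; after the harmless normalisation $\cbr(0)=\cbr(1)=0$ (equivalently, subtracting label-dependent constants from $\partialloss{\pm 1}$, which alters neither the minimiser structure nor $D_{-\cbr}$) the term $\cbr(y)$ vanishes, and substituting $u=\link^{-1}\circ h(\ve x)$ yields the first line of \eqref{eqFFF}. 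The second line is then the conjugate duality of Bregman divergences, $D_F(p\|q)=D_{F^\star}(F'(q)\|F'(p))$, together with $F'=-\cbr'=\canolink$ (the canonical link).

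\textbf{Reverse direction.} Conversely, suppose $\ell(y^*,u)=D_{-\cbr}(y\|u)$ for a concave differentiable $\cbr$ with $\cbr(0)=\cbr(1)=0$. For $\Y\sim\pi$ the variable $y=(\Y+1)/2$ has mean $\pi$, so the Bregman bias--variance identity yields
\[
\E_{\Y\sim\pi}\ell(\Y,u)=\E_{\Y\sim\pi}\!\big[D_{-\cbr}(y\|\pi)\big]+D_{-\cbr}(\pi\|u)\ \ge\ \E_{\Y\sim\pi}\!\big[D_{-\cbr}(y\|\pi)\big],
\]
with equality when $u=\pi$. Hence Bayes prediction minimises the conditional risk, i.e.\ $\ell$ is proper (strictly so if $-\cbr$ is strictly convex); its conditional Bayes risk is $\E_{\Y\sim\pi}[D_{-\cbr}(y\|\pi)]$, which a one-line expansion using $\cbr(0)=\cbr(1)=0$ identifies with $\cbr(\pi)$. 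That this $\ell$ is \emph{composite} with the invertible link $\psi$ is built in: writing $u=\psi^{-1}(h(\ve x))$ turns $\ell(y^*,u)=D_{-\cbr}(y\|u)$ into the form in \eqref{eqFFF}, and the link is $\psi$ by construction.

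\textbf{Main obstacle.} The only genuinely non-mechanical step is the tangent-line identification in the forward direction --- converting ``$L(\pi,\cdot)$ affine, $\ge\cbr$, tangent at $u$'' into the closed forms for $\partialloss{\pm 1}$; this is exactly where differentiability of $\ell$ (uniqueness of the supergradient of $\cbr$) enters, and it is the core of the Savage representation. Everything else is bookkeeping: the affine normalisation of $\cbr$, and invoking the two Bregman identities from \supplement. A minor care point is that $\cbr'$ (and the canonical link) may take infinite values at the endpoints $0,1$, handled by the usual limiting conventions on $\overline{\mathbb{R}}$.
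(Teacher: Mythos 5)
Your proof is correct, and it takes a genuinely different route from the paper's in the forward direction. The paper goes through the Reid--Williamson weight-function characterisation: it invokes the Shuford--Albert--Massengill condition $w(c)=\partialloss{-1}'(c)/c=-\partialloss{1}'(c)/(1-c)$, sets $\cbr''\defeq -w$, writes the partial losses via the integral representation $\partialloss{1}(c)=\int_c^1 -(1-u)\cbr''(u)\,\mathrm{d}u$ (and the analogue for $\partialloss{-1}$), and then integrates by parts to land on $D_{-\cbr}(1\|c)$ and $D_{-\cbr}(0\|c)$. You instead use the Savage supporting-line argument directly: the map $\pi\mapsto L(\pi,u)$ is affine, dominates the concave $\cbr$, touches it at $\pi=u$, hence \emph{is} the tangent line, and evaluating at $\pi\in\{0,1\}$ gives the same closed forms for $\partialloss{\pm1}$. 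The two routes are arithmetically equivalent (your tangent-line identities are exactly what the paper obtains after its integration by parts), but yours is more self-contained in that it does not need the integral representation imported as a black box from \citet{rwCB}; on the other hand, the paper's route makes the role of differentiability of $\ell$ (via the weight function $w$) more visible, which you correctly flag as the place where the argument would break without it. Both proofs quietly rely on the same normalisation: you state it as $\cbr(0)=\cbr(1)=0$ (equivalently $\partialloss{1}(1)=\partialloss{-1}(0)=0$), while the paper's integral representation sets the constants of integration to enforce exactly the same thing; so this is not a gap, just something you made explicit and the paper left implicit. For the reverse direction your argument is essentially the paper's with one cosmetic difference: you invoke the Bregman Pythagorean (bias--variance) identity to exhibit the mean as the right-side minimiser, whereas the paper cites the ``right population minimiser is the mean'' property directly; the paper then additionally spells out the identification of $-F$ with $-\cbr$ up to an affine term via affine invariance, and checks separately that $g=-F'$ yields the canonical case --- your version bakes the normalisation in rather than deriving the affine discrepancy, which is slightly less general but reaches the same conclusion. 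One small observation worth making: the Bregman dual-symmetry step gives the second argument as $\canolink(y)=(-\cbr)'(y)$, not $\link(y)$; your phrasing is consistent with the former, which matches the paper's own proof (the $\link(y)$ appearing in the theorem's displayed equation is a typo).
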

Though similar forms of this Theorem have been proven in the past in \citet[Theorem
2]{cmnoswMB}, \citet[Lemma 3]{nnOT}, \citet[Corollary 13]{rwCB},
\citet[Theorem 2.1]{zSB}, \citet[Section 4]{sEO}, none fit exactly to
the setting of Theorem \ref{thLBREG}, which is therefore of
independent interest and proven in \supplement,
Section \ref{proof_thLBREG}. We now remark that the approach of
\citet{kkksEL} in \eqref{kkkLOSS} in fact \textit{cannot} be
replicated for proper canonical losses in general:
because any Bregman divergence is convex
in its left parameter, we still have as in \eqref{kkkLOSS}
\begin{eqnarray}
\ell_\psi(S, h) & \geq & \expect_{\ve{x}\sim S}[D_{-\cbr}(\expect_{y\sim S}[y|\ve{x}]\|{\link}^{-1}\circ
h(\ve{x}))],\nonumber
\end{eqnarray}
\textit{but} the slack in the
generalized case can easily be found to be the expected Bregman information of
the class \citep{bmdgCW}, $\expect_{\ve{x}\sim S} [I_{-\cbr}(\Y|\ve{x})]$, with $I_{-\cbr}(\Y|\ve{x}) =
\expect_{y \sim S}[-\cbr (y)|\ve{x}] + \cbr (\expect_{y \sim
  S}[y|\ve{x}])$, which therefore depends on the loss at hand (which
in our case is learned as well). 

\noindent $\triangleright$ \textbf{Learning proper canonical losses} 
We now focus on learning class
probabilities unrestricted to all losses having the expression in
\eqref{eqFFF}, but
with the requirement that we use the canonical link: $\psi \defeq -
\cbr'$, thereby imposing that we learn the loss as well via its
link.
Being the central piece of our algorithm, we formally
define this link alongside some key parameters that will be
learned.
\begin{definition}\label{defLINK}
A link $u$ is a strictly increasing function with $\mathrm{Im} u =
[0,1]$, for which there exists $-\infty \ll \zmin{}, \zmax{} \ll
\infty$ and $0<n\leq N$ such that (i) $u(\zmin{}) = 0,
u(\zmax{}) = 1$ and (ii) $\forall z \leq z', n(z'-z) \leq u(z') - u(z)
    \leq N(z'-z)$.
\end{definition}
Notice that relaxing $\zmin{}, \zmax{} \in \overline{\mathbb{R}}$ (the
closure of $\mathbb{R}$) and $n, N \in \overline{\mathbb{R}}_+$
would allow to encompass all invertible links, so definition
\ref{defLINK} is not restrictive but rather focuses on simply
computable links. Given the canonical link $u$, we let
\begin{eqnarray}
    U(z) & \defeq & \int_{\zmin{}}^{z} u(t) \mathrm{d}t, \label{defU}
  \end{eqnarray}
from which we obtain
the 
convex surrogate 
$F_u( z ) = U( -z )$ and 
conditional
Bayes risk
$\cbr_u (v) = - U^\star(v)$
for the proper loss
$\partiallossplus{u}{y^*}(c) \defeq
  D_{-\cbr_u}(y\|c), \forall y\in\{0,1\}$.
  
Inline with Theorem \ref{thLBREG}, the loss
we seek to minimize is
\begin{eqnarray}
  \ell(y^*, h(\ve{x})) = D_{U^\star}(y \| u \circ h(\ve{x})) = D_{U}(h(\ve{x}) \| u^{-1}(y)),
\end{eqnarray}
with $h(\ve{x}) = \ve{w}^\top \ve{x}$ a linear classifier. 

We present
\clu{},
our
algorithm for fitting such losses in Figure
\ref{fig:bregmantron}.
\clu{} iteratively 
fits a sequence $u_0, u_1,
...$ of links and losses and $\ve{w}_0, \ve{w}_1, ...$ of
classifiers. 
Here,
$\wedge, \vee$ are
shorthands for $\min, \max$ respectively and in Step 3, we have
dropped the iteration in the optimization problem ($\hat{y}_{i}$ denotes $\hat{y}_{ti}$). 
Notice that Steps 1 and 3 exploit the two dual views of proper losses presented in \S\ref{sec:defs}.

Before analyzing \clu, we make a few comments. First, in the initialization, we pick $\zmax{0} - \zmin{0}
\defeq \Delta > 0$ and let $a \defeq 1/\Delta, b\defeq - \zmin{0} /
\Delta$.

Second, the choice of $n_t, N_t$ is made iteration-dependent
for flexibility reasons, since in particular the gradient step does
not put an explicit limit on $\ve{w}_t$. However, there is an implicit
constraint which is to ensure $n_t\leq
1/(\ve{w}_{t+1}^\top(\ve{x}_{m}-\ve{x}_{1})) \leq N_t$ to get a non-empty feasible set for $u_{t+1}$.

Third,
\clu{} bears close similarity to \slisotron{},
but with two key points to note.
In Step 1, we perform a gradient step to minimise a divergence between the current predictions and the labels;
fixing a learning rate of $\upeta = 1$, in fact reduces to the \slisotron{} update.
Further, in Step 3, we perform fitting of $\hat{y}_{t + 1}$ based on the \emph{previous} estimates $\hat{y}_t$, rather than the observed labels themselves
as per \slisotron.
Our Step 3 can thus be seen as ``Bregman regularisation'' step, which ensures the predictions (and thus the link function) do not vary too much across iterates.
Such stability ensures asymptotic convergence, but does mean that the initial choice of link can influence the rate of this convergence.

Finally, 
with $z_{(t+1)i} \defeq
\ve{w}_{t+1}^\top\ve{x}_i$,
\fit~can be
summarized as:
\begin{itemize}
\item [\textbf{[1]}] linearly interpolate between $(z_{(t+1)i}, \hat{y}_{(t+1)i})$ and
  $(z_{(t+1)(i+1)}, \hat{y}_{(t+1)(i+1)})$, $\forall i\in
  \{2, 3, m-2\}$,
\item [\textbf{[2]}] pick $\zmin{(t+1)} \leq \ve{w}_{t+1}^\top\ve{x}_{1},
  \zmax{(t+1)}\geq \ve{w}_{t+1}^\top\ve{x}_{m}$ with:
\begin{eqnarray}
\hat{y}_{j} & \in & [l_j, r_j], j \in \{1, m\}, 
\end{eqnarray}
and linearly interpolate between 
$(\zmin{(t+1)}, 0)$ and $(z_{(t+1)1}, \hat{y}_{(t+1)1})$, 
and
$(z_{(t+1)m}, \hat{y}_{(t+1)m})$ and $(\zmax{(t+1)}, 1)$.
\end{itemize}
Here, $r_1
\defeq n_t\cdot (\ve{w}_{t+1}^\top\ve{x}_{1} - \zmin{(t+1)}), l_1
\defeq N_t\cdot (\ve{w}_{t+1}^\top\ve{x}_{1} - \zmin{(t+1)})$,  $r_m
\defeq n_t\cdot (\zmax{(t+1)} - \ve{w}_{t+1}^\top\ve{x}_{m}), l_m
\defeq N_t\cdot (\zmax{(t+1)} -
\ve{w}_{t+1}^\top\ve{x}_{m})$.
Figure \ref{f-link-m} presents a simple
example of \fit~in the \clu.

\begin{figure}[t]
\begin{center}
\includegraphics[trim=10bp 420bp 575bp
140bp,clip,width=0.99\linewidth]{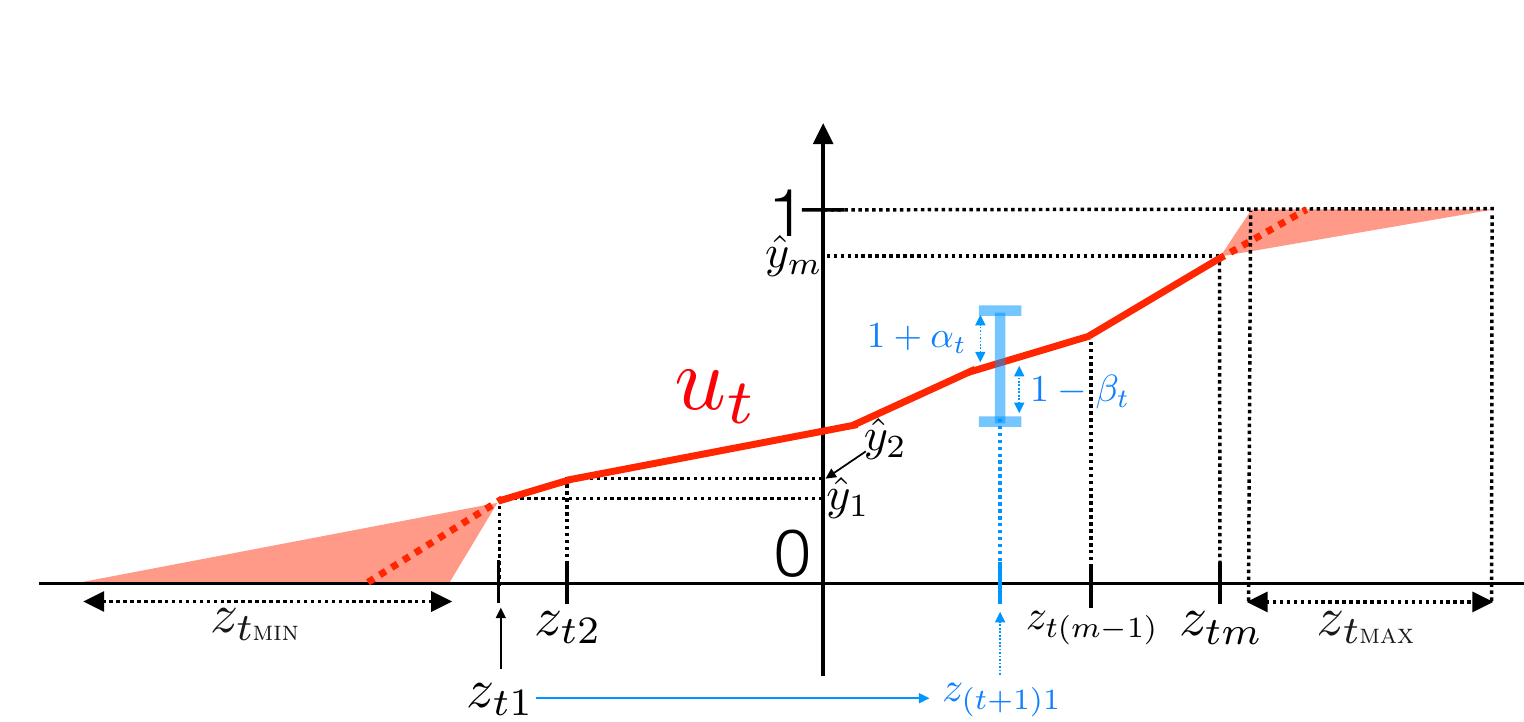}
\end{center}
\caption{Inverse link $u$ of a proper canonical loss learned (red), as
  computable in \fit~($z_{ti} \defeq
\ve{w}_{t}^\top\ve{x}_i$). In blue, we have depicted the
stability constraint in the update of the link (Definition
\ref{defAB}). Stability imposes just a constraint in the
change of inverse link for a single example. It does not impose any
constraint on the classifier update (which, in this example, is
significant for example $(\ve{x}_1, y_1)$, see text).}
  \label{f-link-m}
\end{figure}

\noindent $\triangleright$ \textbf{Analysis of {\clu}} 
We are now ready to analyze the \clu. 
Our main result shows that
provided the link does not change too much between iterations, we are
guaranteed to 
decrease
the following loss:
\begin{eqnarray}
  \properlossplus{t}{r}(S,\ve{w}_{t'}) & \defeq &
\expect_{{S}}[D_{U_r^\star}(y
\| u_t(\ve{w}_{t'}^\top\ve{x}))] \:\:,
\end{eqnarray}
for $r,t,t'=1, 2, ...$,
which gives \eqref{eqFFF} for $-\cbr \defeq U_r^\star$, $u_t\defeq
\psi^{-1}$ and $h(\ve{x})\defeq \ve{w}_{t'}^\top\ve{x}$. 
We do not impose $r=t$, as 
our algorithm incorporates a
step of proper composite fitting of the next link given the current loss. 

We formalise the stability of the link below.
\begin{definition}\label{defAB}
Let $\alpha_t, \beta_t \geq 0$. 
\clu~is $(\alpha_t,
\beta_t)$-stable at iteration $t$ iff the solution $\hve{y}_{t+1}$ in
Step 3 satisfies $\hat{y}_1 \in u_{t}(\ve{w}_{t+1}^\top\ve{x}_1)\cdot [1-\beta_t, 1+\alpha_t]$.
\end{definition}
Since $\hat{y}_1 \defeq u_{t+1}(\ve{w}_{t+1}^\top\ve{x}_1)$ from \fit,
it comes that stability requires a bounded local change in
inverse link for \textit{a single example} (but implies a bounded
change for all via the Lipschitz constraints in Step 3; this
is explained in the proof of the main Theorem of this Section). We
define the following \textit{mean operators} \citep{pnrcAN}:
$\hve{\mu}_{y} \defeq \expect_{{\mathcal{S}}}[y\cdot \ve{x}]$ (sample), $\hve{\mu}_{t} \defeq
\expect_{{\mathcal{S}}}[\hat{y}_{t} \cdot \ve{x}], \forall t\geq 1$ (estimated),
where $\hat{y}_t$ is defined in the \clu. We also let $p^*_t \defeq
\max\{\expect_{{S}}[y],    \expect_{{S}}[u_t(\ve{w}_{t+1}^\top\ve{x})]\} \:\:
                 \in [0,1]$ denote the $\max$ estimated $\hat{\Pr}( \Y = 1 )$ using both our model and
                 the sample $S$. Assume $p_t^* > 0$ as otherwise the problem is
                 trivial. Finally, $X \defeq \max_i \|\ve{x}_i\|_2$ (we consider the $L_2$ norm for simplicity; our result holds for any norm on $\mathcal{X}$).
\begin{definition}
\clu~is said to be in the $\delta_t$-regime at iteration $t$, for some
$\delta_t > 0$ iff:
\begin{eqnarray}
\|\hve{\mu}_{y} -
\hve{\mu}_{t}\|_2 & \geq & 2\sqrt{p^*_t \delta_t} X, \forall t.\label{condMUXMF}
\end{eqnarray}
\end{definition}
To simplify the statement of our Theorem, we let $f(z) \defeq
z/(1+z)$, which satisfies $f(\overline{\mathbb{R}}_+) = [0,1]$.
\begin{theorem}\label{thCLU}
Suppose that \clu~is in the $\delta_t$-regime at iteration
$t$, and the following holds:
\begin{itemize}
\item in Step 1, the learning rate
\begin{eqnarray*}
\upeta_t & = & \frac{1-\gamma_t}{2 N_t X^2} \cdot \left( 1-
      \frac{f(\delta_t)(1+f(\delta_t))  p^*_t
X}{\|\hve{\mu}_{y} -
\hve{\mu}_{t}\|_2} \right),
\end{eqnarray*}
for some user-fixed $\gamma_t \in \left[ 0, \sqrt{f\circ f(\delta_t)/2}
\right]$;
\item in Step 3, $N_t,n_t$ satisfy $\nicefrac{N_t}{n_t}, \nicefrac{N_{t-1}}{n_t} \leq 1 + f(\delta_t)$.
\end{itemize}
Then if the \clu~is $(f(\delta_t), f(\delta_t))$-stable, then:
\begin{eqnarray}
\properlossplus{t+1}{t+1}(S,\ve{w}_{t+1}) & \leq &
                                                   \properlossplus{t}{t}(S,\ve{w}_{t})- \frac{p^*_t f(\delta_t)}{n_t}.\label{boundERR1}
\end{eqnarray}
\end{theorem}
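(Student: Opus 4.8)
The plan is to follow a single iteration of \clu{} and split the change in the tracked loss into three pieces, one for each non-trivial step, and bound each. First a reduction: since Step~4 builds $u_t$ (resp.\ $u_{t+1}$) with $\fit$, which interpolates \emph{exactly} through the data points and has slope in $[n_{t-1},N_{t-1}]$ (resp.\ $[n_t,N_t]$), we have $u_t(\ve{w}_t^\top\ve{x}_i)=\hat y_{ti}$ and $u_{t+1}(\ve{w}_{t+1}^\top\ve{x}_i)=\hat y_{(t+1)i}$, so $\properlossplus{t}{t}(S,\ve{w}_t)=\tfrac1m\sum_i D_{U_t^\star}(y_i\|\hat y_{ti})$ and $\properlossplus{t+1}{t+1}(S,\ve{w}_{t+1})=\tfrac1m\sum_i D_{U_{t+1}^\star}(y_i\|\hat y_{(t+1)i})$. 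Writing $\tilde y_i\defeq u_t(\ve{w}_{t+1}^\top\ve{x}_i)$ and telescoping through the intermediate quantities $\properlossplus{t}{t}(S,\ve{w}_{t+1})=\tfrac1m\sum_i D_{U_t^\star}(y_i\|\tilde y_i)$ and $\properlossplus{t+1}{t}(S,\ve{w}_{t+1})=\tfrac1m\sum_i D_{U_t^\star}(y_i\|\hat y_{(t+1)i})$, we get
\[
\properlossplus{t+1}{t+1}(S,\ve{w}_{t+1})-\properlossplus{t}{t}(S,\ve{w}_{t})=\Delta_1+\Delta_3+\Delta_4,
\]
where $\Delta_1$ is the Step~1 increment (update of $\ve w$, link/generator fixed at $U_t^\star,u_t$), $\Delta_3$ the Step~3 increment (update of predictions, generator fixed at $U_t^\star$), and $\Delta_4$ the Step~4 increment (change of generator $U_t^\star\to U_{t+1}^\star$, predictions fixed at $\hat y_{t+1}$).

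The terms $\Delta_1$ and $\Delta_4$ are the routine parts. For $\Delta_1$: by the dual form of the loss from Theorem~\ref{thLBREG}, $\properlossplus{t}{t}(S,\ve{w})=\expect_{S}[D_{U_t}(\ve{w}^\top\ve{x}\|u_t^{-1}(y))]=:\phi_t(\ve{w})$, a convex function whose gradient is $N_{t-1}X^2$-Lipschitz (since $U_t''=u_t'\le N_{t-1}$ and $\|\ve{x}_i\|_2\le X$) and satisfies $\nabla\phi_t(\ve{w}_t)=\expect_{S}[(u_t(\ve{w}_t^\top\ve{x})-y)\ve{x}]=\hve{\mu}_t-\hve{\mu}_y$, so Step~1 is exactly $\ve{w}_{t+1}=\ve{w}_t-\upeta_t(\hve{\mu}_t-\hve{\mu}_y)$. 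The conditions $N_{t-1}/n_t\le 1+f(\delta_t)\le 2$ and $n_t\le N_t$ give $\upeta_t\le 1/(2N_tX^2)\le 1/(N_{t-1}X^2)$, so the smoothness descent lemma applies: $\Delta_1\le-\upeta_t\bigl(1-\tfrac{\upeta_t N_{t-1}X^2}{2}\bigr)\|\hve{\mu}_y-\hve{\mu}_t\|_2^2$; plugging in the prescribed $\upeta_t$ and the $\delta_t$-regime bound $\|\hve{\mu}_y-\hve{\mu}_t\|_2\ge 2\sqrt{p^*_t\delta_t}X$ (and $p^*_t\le1$, $\gamma_t\le\sqrt{f\circ f(\delta_t)/2}$) converts this into an explicit negative quantity. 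For $\Delta_4$: since $(U_t^\star)''\in[1/N_{t-1},1/n_{t-1}]$ and $(U_{t+1}^\star)''\in[1/N_t,1/n_t]$ a.e., comparing the integral representations of $D_{U_{t+1}^\star}$ and $D_{U_t^\star}$ at the common pair $(y_i,\hat y_{(t+1)i})$ and invoking $N_{t-1}/n_t\le 1+f(\delta_t)$ gives $D_{U_{t+1}^\star}(y_i\|\hat y_{(t+1)i})\le(1+f(\delta_t))\,D_{U_t^\star}(y_i\|\hat y_{(t+1)i})$, i.e.\ $\properlossplus{t+1}{t+1}(S,\ve{w}_{t+1})\le(1+f(\delta_t))\,\properlossplus{t+1}{t}(S,\ve{w}_{t+1})$.

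The crux is $\Delta_3$. Step~3 is the Bregman projection (with respect to the separable generator $\sum_i U_t^\star$) of $\hve{y}_t$ onto the convex polytope $\mathcal C_t$ cut out by the Lipschitz and boundary constraints; the idea is to use the three-point identity / generalized Pythagorean theorem for this projection together with the $(f(\delta_t),f(\delta_t))$-stability hypothesis, which bounds the \emph{single} coordinate $\hat y_{(t+1)1}$ against $\tilde y_1=u_t(\ve{w}_{t+1}^\top\ve{x}_1)$, and then propagate this control to all coordinates using that $\hve{y}_{t+1}$ and $\tilde{\ve y}$ obey matched Lipschitz bounds (slopes in $[n_t,N_t]$ and $[n_{t-1},N_{t-1}]$ respectively, with ratios bounded by $1+f(\delta_t)$). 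Combining these per-coordinate estimates with $\expect_{S}$-averages of $\hat y_{(t+1)i}$ of order $p^*_t$ and the slope $n_t$ produces the $p^*_t/n_t$ scaling. Finally one assembles $\properlossplus{t+1}{t+1}(S,\ve{w}_{t+1})\le(1+f(\delta_t))\bigl(\properlossplus{t}{t}(S,\ve{w}_t)+\Delta_1+\Delta_3\bigr)$ and simplifies using $f(z)=z/(1+z)$ (note $f(\delta_t)/(1+f(\delta_t))=f(2\delta_t)/2$) to reach \eqref{boundERR1}.

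I expect $\Delta_3$ to be the main obstacle: bootstrapping the one-coordinate guarantee of Definition~\ref{defAB} to a bound valid at every coordinate through the Lipschitz constraints, controlling the Bregman projection's effect on the loss \emph{relative to the labels}, and then verifying that the $(1+f(\delta_t))$ inflation coming from $\Delta_4$ together with the surplus from $\Delta_1$ leaves a clean residual of exactly $-p^*_t f(\delta_t)/n_t$ is where the interplay of all three hypotheses (the precise form of $\upeta_t$, the $N/n$-ratio condition, and $(f(\delta_t),f(\delta_t))$-stability) must be used in concert, and where essentially all the delicate bookkeeping lies.
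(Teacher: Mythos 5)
Your three-way split $\Delta_1+\Delta_3+\Delta_4$ is a genuinely different decomposition from the paper's; it is algebraically exact, and two of your three pieces are handled soundly: the smoothness-descent estimate for $\Delta_1$ (via $\nabla^2\phi_t\preceq N_{t-1}X^2\mathrm{Id}$) is a plausible alternative to the paper's route, and your bound $\properlossplus{t+1}{t+1}(S,\ve{w}_{t+1})\le(N_{t-1}/n_t)\,\properlossplus{t+1}{t}(S,\ve{w}_{t+1})$ for $\Delta_4$ is exactly the paper's Lemma on the quadratic-form sandwich of $D_{U^\star_t}$. The paper instead invokes the Bregman three-points identity once, going directly from $D_{U_t^\star}(y\|u_t(\ve{w}_t^\top\ve{x}))$ to $D_{U_t^\star}(y\|u_{t+1}(\ve{w}_{t+1}^\top\ve{x}))$ with pivot $u_{t+1}(\ve{w}_{t+1}^\top\ve{x})$, which produces a cross term $\Delta_{t+1}$ it then splits into a gradient piece $F_{t+1}$ (carrying $u_t(\ve{w}_{t+1}^\top\ve{x})-y$) and a projection piece $L_{t+1}$ (carrying $u_{t+1}(\ve{w}_{t+1}^\top\ve{x})-u_t(\ve{w}_{t+1}^\top\ve{x})$), plus a non-negative Bregman term obtained by dual symmetry.

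The gap is $\Delta_3$, and your proposed tool does not apply to it. You want to treat Step~3 as a Bregman projection and use the generalized Pythagorean inequality, but that inequality requires the ``anchor'' point (here the label vector $y\in\{0,1\}^m$) to lie in the feasible polytope $\mathcal C_t$, which it does not; and more importantly there is a center mismatch: Step~3 projects $\hat{\ve y}_t=u_t(\ve{w}_t^\top\ve{x})$ onto $\mathcal C_t$, whereas your $\Delta_3$ telescopes through $\tilde{\ve y}=u_t(\ve{w}_{t+1}^\top\ve{x})$, a different point. Pythagoras, even if applicable, would compare $D_{U_t^\star}(y\|\hat y_{t+1})$ to $D_{U_t^\star}(y\|\hat y_t)$, not to $D_{U_t^\star}(y\|\tilde y)$ as your $\Delta_3$ requires. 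The paper's workaround, which is the real technical core of the proof, is a bespoke KKT analysis of the Step~3 Lagrangian (its Lemma~\ref{lir1}): the complementary slackness conditions of the Lipschitz and boundary constraints, combined with the monotone ordering of the scores and the slope sandwich of $u_t$, yield a sign-definite Abel summation that forces $L_{t+1}\ge 0$. This is not a Pythagorean inequality; it is a case analysis on whether each Lagrange multiplier is active. In addition, the paper applies its multiplicative $(N_{t-1}/n_t)$-inflation to the \emph{small} quantity $\properlossplus{t+1}{t}(S,\ve{w}_{t+1})$, bounds that quantity directly by $p^*_t/n_t$, and then absorbs the inflation surplus into the gradient gain $F_{t+1}$; your assembly multiplies $\properlossplus{t}{t}(S,\ve{w}_t)$, an \emph{a priori} unbounded quantity, by $(1+f(\delta_t))$ and leaves no clear mechanism to cancel the excess. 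Fixing $\Delta_3$ would essentially require reconstructing the paper's KKT lemma, plus a bound on $\properlossplus{t+1}{t}(S,\ve{w}_{t+1})$; as written the proposal does not close the argument.
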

(proof in \supplement,
Section \ref{proof_thCLU}) 
Explicitly, it can be shown that the
learning rate at iteration $t$ lies in the following interval:
\begin{eqnarray}
\upeta & \in & \frac{1-\gamma_t}{2 N_t X^2} \cdot \left( 1 - \frac{\sqrt{\delta_t p^*_t} (2+\delta_t)}{2(1+\delta_t)^2} \cdot \left[\sqrt{\delta_t p^*_t}, 1\right]\right),\nonumber
\end{eqnarray}
Theorem \ref{thCLU} essentially says that as long as $\hve{\mu}_{y} \neq
\hve{\mu}_{t}$, we can hope to get better results. This is no
surprise: the gradient step in Step 1 of \clu~is proportional
to $\hve{\mu}_{y} -
\hve{\mu}_{t}$. 

The conditions on Steps 1 and 3 are easily enforceable at
any step of the algorithm, so the Theorem essentially says that whenever
the link does not change too much between iterations, we are
guaranteed a decrease in the loss and therefore a better fit of the
class probabilities. 
Stability is the only assumption made: unlike \citet{kkksEL}, no assumptions are made
about Bayes rule or the distribution
$\mathcal{D}$, and no constraints are put on
the classifier $\ve{w}$.

We can also choose to \textit{enforce} stability in the
update of $u$ in Step 3. Interestingly, while this restricts the choice of links (at
least when $\|\hve{\mu}_{y} -
\hve{\mu}_{t}\|_2$ is small), this \textit{guarantees} the bound in
\eqref{boundERR1} at no additional cost or assumption. 
\begin{corollary}\label{corCLU}
Suppose \clu~is run with so that in Step 3,
constraint $\hat{y}_{1}\geq 0$ is replaced by
\begin{eqnarray}
\hat{y}_1 & \in & u_{t}(\ve{w}_{t+1}^\top\ve{x}_1)\cdot [1-\beta_t,
                  1+\alpha_t],
\end{eqnarray} 
and all other constraints are kept the same. Suppose \clu~is in the $\delta_t$-regime at iteration
$t$, parameters $\upeta_t, N_t, n_t$ are fixed as in Theorem
\ref{thCLU} and furthermore $\alpha_t, \beta_t \in [0,
f(\delta_t)]$. Then \eqref{boundERR1} holds.
\end{corollary}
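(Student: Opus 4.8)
The idea is that the modified Step~3 turns the $(f(\delta_t),f(\delta_t))$-stability hypothesis of Theorem~\ref{thCLU} into something that holds by construction, so that the Corollary follows from a direct application of that Theorem. I would organise the argument in three steps.

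First I would check that the modified Step~3 is well posed, i.e.\ that it still admits a minimiser $\hve{y}_{t+1}$. The objective $\hat{y}\mapsto\expect_{S}[D_{U^\star_{t}}(\hat{y}\|\hve{y}_{t})]$ is continuous, and after replacing $\hat{y}_1\geq 0$ by $\hat{y}_1\in u_t(\ve{w}_{t+1}^\top\ve{x}_1)\cdot[1-\beta_t,1+\alpha_t]$ the feasible region remains closed and bounded --- the interval constraints force $\hat{y}_1\leq\hat{y}_2\leq\cdots\leq\hat{y}_m$ and, together with the box on $\hat{y}_1$ and $\hat{y}_m\leq 1$, bound every coordinate --- hence compact once it is non-empty. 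Non-emptiness is the point that needs care, since tightening the $\hat{y}_1$-constraint can only shrink the feasible set: I would replace the implicit constraint on $(n_t,N_t)$ noted below Algorithm~\ref{fig:bregmantron} by the analogue $n_t\leq(1-u_t(\ve{w}_{t+1}^\top\ve{x}_1)(1-\beta_t))/(\ve{w}_{t+1}^\top(\ve{x}_m-\ve{x}_1))\leq N_t$, under which the vector $\hat{y}_1=u_t(\ve{w}_{t+1}^\top\ve{x}_1)(1-\beta_t)$, $\hat{y}_{i+1}=\hat{y}_i+n_t\,\ve{w}_{t+1}^\top(\ve{x}_{i+1}-\ve{x}_i)$ is feasible (it is monotone, $\hat{y}_1$ lies at the left end of the box because $1-\beta_t\in[1-\beta_t,1+\alpha_t]$, and $\hat{y}_m\leq 1$). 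Thus $\hve{y}_{t+1}$ exists; since $u_t\geq 0$ and $\beta_t\leq f(\delta_t)<1$ the whole box lies in $[0,\infty)$, so $\hve{y}_{t+1}$ in fact satisfies the original $\hat{y}_1\geq 0$ and $u_{t+1}$ is a valid link. Moreover $\hve{y}_{t+1}$ is still the $D_{U^\star_{t}}$-Bregman projection of $\hve{y}_t$ onto a convex polytope (merely a smaller one), so any Bregman-projection argument used inside the proof of Theorem~\ref{thCLU} applies unchanged.

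Second I would read off stability at no cost. By construction $\hat{y}_1\in u_t(\ve{w}_{t+1}^\top\ve{x}_1)\cdot[1-\beta_t,1+\alpha_t]$; as $u_t(\ve{w}_{t+1}^\top\ve{x}_1)\geq 0$ and $\alpha_t,\beta_t\in[0,f(\delta_t)]$, monotonicity of $z\mapsto z\cdot[a,b]$ in its endpoints yields $u_t(\ve{w}_{t+1}^\top\ve{x}_1)\cdot[1-\beta_t,1+\alpha_t]\subseteq u_t(\ve{w}_{t+1}^\top\ve{x}_1)\cdot[1-f(\delta_t),1+f(\delta_t)]$. By Definition~\ref{defAB} this is precisely the assertion that \clu{} is $(f(\delta_t),f(\delta_t))$-stable at iteration~$t$, and it holds whatever Step~3 returns.

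Third I would invoke Theorem~\ref{thCLU}: its remaining hypotheses --- the $\delta_t$-regime, the learning rate $\upeta_t$ in Step~1, and the ratio conditions $N_t/n_t,\ N_{t-1}/n_t\leq 1+f(\delta_t)$ in Step~3 --- are all imposed in the Corollary, while the previous step supplies $(f(\delta_t),f(\delta_t))$-stability; hence \eqref{boundERR1} follows. The only genuinely non-trivial ingredient is the non-emptiness check in the first step (equivalently, pinning down the right implicit constraint on $(n_t,N_t)$); the rest is bookkeeping against Definition~\ref{defAB} and an appeal to Theorem~\ref{thCLU}.
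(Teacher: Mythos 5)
Your second and third steps treat Theorem~\ref{thCLU} as a black box: once the modified constraint forces $\hat{y}_1$ into the stability box, you conclude the theorem's hypotheses are met and the bound follows. This overlooks that Theorem~\ref{thCLU} is a statement \emph{about the algorithm with the original Step~3}, and its proof is not insensitive to how $\hve{y}_{t+1}$ is produced. Concretely, the proof passes through Lemma~\ref{lir1} ($L_{t+1}\geq 0$), which is proved by writing down the Lagrangian of the \emph{original} optimisation problem in \eqref{pbregU} --- with multiplier $\rholeft$ for the single constraint $\hat y_1\geq 0$ --- extracting the KKT stationarity and complementary-slackness conditions, and arguing sign by sign through the partial sums $\sigma_i$. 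When you replace $\hat y_1\geq 0$ by the two-sided box $\hat y_1\in u_t(\ve{w}_{t+1}^\top\ve{x}_1)\cdot[1-\beta_t,1+\alpha_t]$, the Lagrangian acquires a second multiplier $\rholeft'$, the stationarity equation \eqref{eq2} and the identity for $\sigma_1$ in \eqref{defsig2} change, and the case analysis for $i=1$ must be redone. Your remark that "any Bregman-projection argument used inside the proof applies unchanged" because the feasible set is merely a smaller polytope is exactly the non-obvious claim that requires a new KKT argument; the paper supplies this as Lemma~\ref{lir2}.

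Put differently: the conditional "if \clu{} is $(f(\delta_t),f(\delta_t))$-stable, then \eqref{boundERR1}" in Theorem~\ref{thCLU} quantifies over runs of the \emph{unmodified} algorithm that happen to land in the stable regime. If the unmodified Step~3 would have produced a $\hat y_1$ outside the box, the modified algorithm returns a \emph{different} $\hve{y}_{t+1}$, and that $\hve{y}_{t+1}$ is an optimiser of a different constrained problem, so the optimality conditions that drive Lemma~\ref{lir1} no longer hold verbatim. Your first step (feasibility/compactness of the tightened polytope and the implied constraint on $(n_t,N_t)$) and your observation that $\alpha_t,\beta_t\leq f(\delta_t)$ gives the correct box are fine and in the spirit of the paper; what is missing is the re-verification of $L_{t+1}\geq 0$ under the new KKT system. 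With that piece added, the remainder of Theorem~\ref{thCLU}'s proof (Lemma~\ref{lemf} and Lemma~\ref{binfqn}, which use only Lemma~\ref{lab} and the Lipschitz bounds) does carry over as you suggest.
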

(proof in \supplement,
Section \ref{proof_corCLU}) 
There exists an alternative reading of Corollary \ref{corCLU}: there
exists a way to fix the key parameters ($\upeta_t, n_t, N_t, \alpha_t,
\beta_t$) at each iteration such that a decrease of the loss is
guaranteed \textit{if} our current estimate
of the sample mean operator, $\hve{\mu}_{t}$, is not good enough.

\section{Discussion}\label{sec-dis}

Bregman divergences have had a rich history outside of convex
optimisation, where they were introduced \citep{bTR}. They are the
canonical distortions on the manifold of parameters of exponential
families in information
geometry \citep{anMO}, they have been introduced in normative economics
in several contexts \citep{mnID,sTCO}. In machine learning, their
re-discovery was grounded in their representation and algorithmic
properties, starting with the work of M. Warmuth and collaborators
\citep{hkwWC,hwTT}, later linked back to exponential families
\citep{awRLBEF}, and then axiomatized in unsupervised learning
\citep{bmdgCW,bgwOT}, and then in supervised learning (See Section \ref{sec-clu}).

We do not investigate in this paper the generalization abilities of
\clu. It either follows from classical uniform convergence bounds
applied to the convex surrogate of any proper canonical loss
(which is always 1-Lipschitz), for which refer to \citet[Section
2]{bmRA} and references therein for
available tools, or it would justify a paper of its own if we want to
directly investigate the approximation of Bayes rule, a problem that also
entails the prospective problems presented below.

The setting of the \clu~raises two questions, the first of which is
crucial for the algorithm. We make no assumption about the optimal
link, which resorts to a powerful \textit{agnostic} view of machine
learning chased in a number of works \citep{bkmTO}, but it makes much
more sense if we can prove that the link fit by \fit~belongs to a set
with reasonable approximations of the target. This set contains
piecewise affine links, which is a bit more general than Definition
\ref{defLINK} but matches the links learned by the \clu. We remove the
index notation $T$ in $u_T$ and $z_{T.}$, and consider the following $\ell_1$ restricted discrepancy,
\begin{eqnarray}
E(u, \properloss) & \defeq & \int_{z_{1}}^{z_{m}} |(-
\cbr')^{-1}(z) - u(z)|\mathrm{d}z,\label{errEFL}
\end{eqnarray}
where $\ell$ is proper canonical with invertible canonical link. It is restricted
because we do not consider set $(-\infty,z_1)\cup (z_m,+\infty)$,
whose fitting in fact does not depend on data (see Figure
\ref{f-link-m}).  Denote $\mathcal{U}_{n,N}(\ve{w}, S)$ the set of piecewise
affine links with non-$\{0,1\}$ breakout points on abscissae $z_1,
z_2, ..., z_m$ ($\ve{w}^\top \ve{x}_i\defeq z_i<
z_{i+1} \defeq \ve{w}^\top \ve{x}_{i+1}, \forall i\in \{0, 1,
..., m-1\}$, wlog), satisfying Definition \ref{defLINK}. Let $\|.\|$ be any norm on
$\mathcal{X}$ and $\|.\|_*$ its dual. For any $\varepsilon>0$, $G_{\varepsilon}(S)$ is the graph whose vertices are the observations in $S$ and an edge
links $\ve{x}, \ve{x}'\in S$ iff $\|\ve{x} - \ve{x}'\|\leq
\varepsilon$. $G$ is said 2-vertex-connected iff it is connected when any single
vertex is removed, which is a lightweight condition that essentially
prevents the graph from being constituted of two almost separate subgraphs.
\begin{lemma}\label{lemCONC}
For any $S$ of size $m$, any $\varepsilon > 0$ such that $G_{\varepsilon}(S)$ is
2-vertex-connected and any proper canonical loss
$\properloss$, $\exists n, N\ll \infty$ such that $\inf_{u \in \mathcal{U}_{n,N}(\ve{w}, S)} E(u, \properloss) \leq
                                                                 2N m\varepsilon^2
                                                                 \cdot
                                                                 \|\ve{w}\|_*^2$.
\end{lemma}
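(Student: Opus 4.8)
The plan is to exhibit an explicit piecewise-affine link that interpolates the true inverse canonical link $u_\star \defeq (-\cbr')^{-1}$ at the abscissae $z_i \defeq \ve{w}^\top\ve{x}_i$, to bound the resulting $\ell_1$ discrepancy segment by segment, and to use connectivity of $G_\varepsilon(S)$ only to control the gaps between consecutive $z_i$.

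First I would set up the candidate. Reorder $S$ so that $z_1<\cdots<z_m$ (treating ties among the $z_i$ by merging indices, and abscissae outside $\mathrm{Im}(-\cbr')$ by truncation, so that $u_\star(z_i)$ is defined and lies in $(0,1)$), let $u$ agree with $u_\star$ at each $z_i$ and be affine on each $[z_i,z_{i+1}]$, and extend $u$ leftward from $(z_1,u_\star(z_1))$ and rightward from $(z_m,u_\star(z_m))$ at a constant slope until it reaches $0$ at a finite $\zmin{}$ and $1$ at a finite $\zmax{}$. I would then \emph{define} $n \defeq \min_{i} \frac{u_\star(z_{i+1})-u_\star(z_i)}{z_{i+1}-z_i}$, $N \defeq \max_{i} \frac{u_\star(z_{i+1})-u_\star(z_i)}{z_{i+1}-z_i}$, and use slope $n$ for the two extension pieces; both numbers are finite, and $n>0$ because $u_\star$ is strictly increasing. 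Every affine piece of $u$ then has slope in $[n,N]$, so integrating the (a.e.)\ slope gives $n(z'-z)\le u(z')-u(z)\le N(z'-z)$ for all $z\le z'$; hence $u$ is a legitimate link and $u\in\mathcal{U}_{n,N}(\ve{w},S)$ for exactly the $N$ that appears in the target bound.

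Next I would bound $E(u,\properloss)=\sum_{i=1}^{m-1}\int_{z_i}^{z_{i+1}}|u_\star(z)-u(z)|\,\mathrm{d}z$. On $[z_i,z_{i+1}]$ both $u_\star$ and its chord $u$ are non-decreasing and take values in $[u_\star(z_i),u_\star(z_{i+1})]$, so there $|u_\star(z)-u(z)|\le u_\star(z_{i+1})-u_\star(z_i)\le N(z_{i+1}-z_i)$, giving $\int_{z_i}^{z_{i+1}}|u_\star-u|\le N(z_{i+1}-z_i)^2$ and $E(u,\properloss)\le N\sum_i(z_{i+1}-z_i)^2$. Then I would bound each gap via the graph: for fixed $i$, split the vertices of $G_\varepsilon(S)$ into $\{\ve{x}_1,\ldots,\ve{x}_i\}$ and $\{\ve{x}_{i+1},\ldots,\ve{x}_m\}$; connectivity (a fortiori $2$-vertex-connectivity) forces an edge across the cut, i.e.\ indices $a\le i<b$ with $\|\ve{x}_a-\ve{x}_b\|\le\varepsilon$, whence $z_{i+1}-z_i\le z_b-z_a=\ve{w}^\top(\ve{x}_b-\ve{x}_a)\le\|\ve{w}\|_*\,\|\ve{x}_a-\ve{x}_b\|\le\varepsilon\,\|\ve{w}\|_*$. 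Therefore $\sum_i(z_{i+1}-z_i)^2\le(m-1)\varepsilon^2\|\ve{w}\|_*^2\le 2m\varepsilon^2\|\ve{w}\|_*^2$, so $E(u,\properloss)\le 2Nm\varepsilon^2\|\ve{w}\|_*^2$, and a fortiori $\inf_{u\in\mathcal{U}_{n,N}(\ve{w},S)}E(u,\properloss)$ is at most this.

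The main obstacle is the first step: making the interpolant simultaneously a bona fide link (strictly increasing, image $[0,1]$ attained at \emph{finite} $\zmin{},\zmax{}$, finite constants $n,N$) and $\ell_1$-close to $u_\star$. The observation that defuses it is that only the finitely many data-segment slopes need controlling, so $n,N$ can be read off directly as the extremal interpolant slopes --- no global (bi-)Lipschitz regularity of $u_\star$ on $[z_1,z_m]$ is required. The remaining fiddly points (ties among the $z_i$, abscissae outside the range of the canonical link, and the two $0/1$-extension pieces) are absorbed into the ``wlog'' reductions and the constants; the chord estimate and the cut argument in the last two steps are then routine, with the factor $2$ in the statement leaving ample slack.
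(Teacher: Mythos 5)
Your proof is correct, and it takes a genuinely different route in the key combinatorial step. Both arguments start identically: interpolate $u_\star \defeq (-\cbr')^{-1}$ at the $z_i$, reduce $E(u,\ell)$ to $N\sum_i(z_{i+1}-z_i)^2$, and then control this sum using the graph. Where the paper and you diverge is in how the sum is bounded. The paper first proves, by a careful induction over permutations, that $\sum_i(z_{\sigma(i+1)}-z_{\sigma(i)})^2\geq\sum_i(z_{i+1}-z_i)^2$ for every $\sigma\in S_m$ (i.e., sorted order minimises the sum of squared consecutive gaps), then invokes Fleischner's theorem to extract a Hamiltonian path in the square of the $2$-vertex-connected $G_\varepsilon(S)$, whose edges have length at most $2\varepsilon$, and finishes with Cauchy--Schwarz. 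You instead bound each gap directly: for each $i$, the cut $\{1,\dots,i\}$ versus $\{i+1,\dots,m\}$ must be crossed by an edge $(\ve{x}_a,\ve{x}_b)$ of $G_\varepsilon(S)$, and monotonicity of the sorted $z$'s gives $z_{i+1}-z_i\leq z_b-z_a\leq\|\ve{w}\|_*\varepsilon$, hence $\sum_i(z_{i+1}-z_i)^2\leq(m-1)\varepsilon^2\|\ve{w}\|_*^2$. This buys you three things: (i) you dispense entirely with the permutation inequality and with Fleischner's theorem, making the argument elementary; (ii) you only use mere connectivity of $G_\varepsilon(S)$, so your proof establishes the lemma under a strictly weaker hypothesis than the stated $2$-vertex-connectivity; and (iii) your constant is tighter: your bound is $N(m-1)\varepsilon^2\|\ve{w}\|_*^2$, whereas the paper's Hamiltonian-path route, with $m-1$ edges of squared length up to $(2\varepsilon)^2$, actually yields $4N(m-1)\varepsilon^2\|\ve{w}\|_*^2$ --- which does not in general sit below the claimed $2Nm\varepsilon^2\|\ve{w}\|_*^2$, so the paper's stated constant appears to need your argument (or a factor-of-two correction) to actually hold. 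The remaining bookkeeping (choosing $n,N$ as the extremal chord slopes, extending the interpolant to hit $0$ and $1$ at finite abscissae, handling ties) matches the paper in spirit and is unproblematic.
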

Crucially, $N$ can be much smaller than the Lipschitz constant of $(-
\cbr')^{-1}$.
Lemma \ref{lemCONC} does guarantee that the set of links in which
the \clu~finds $u_T$ is powerful enough to approximate a link provided we sample enough examples to drag
$\varepsilon$ small enough while guaranteeing $G_{\varepsilon}(S)$ 2-connected. This does not require i.i.d. sampling but would require additional assumptions about $\mathcal{X}$ to
be tractable (such as boundedness), or the possibility of active
learning in $\mathcal{X}$. 
This also does not guarantee that \fit~finds a link with small $E(.,
\properloss)$, and this brings us to our second question: is it
possible that (near-)optimal solutions contain very "different"
couples $(u, \ve{w})$, for which useful notion(s) of "different" ?
This, we believe, has ties with the transferability of the loss.

\section{Experimental results}\label{sec-exp}

We present experiments illustrating:
\begin{enumerate}[label=(\alph*),itemsep=0pt,topsep=0pt]
    \item the viability of the {\clu} as an alternative to classic GLM or \slisotron~learning.
    \item the nature of the loss functions learned by the {\clu}, which are potentially \emph{asymmetric}.
    \item the potential of using the loss function learned by the {\clu} as input to some downstream learner.
\end{enumerate}

\noindent $\triangleright$ \textbf{Predictive performance of {\clu}}
We compare {\clu} as a generic binary classification method
against the following baselines:
logistic regression,
GLMTron~\citep{kkksEL} with $u( \cdot )$ the sigmoid,
and \slisotron{}.
We also consider two variants of {\clu}:
one 
where in Step 4 we do not find the global optimum (${\clu}_{\mathrm{approx}}$),
but rather a feasible solution with minimal $\hat{y}_m$;
and another where in Step 4 we fit against the labels, rather than $\hat{y}_t$ (${\clu}_{\mathrm{label}}$).

In all experiments, we fix the following parameters for {\clu}:
we use a constant learning rate of $\eta = 1$ to perform the gradient update in Step 1,
For Step 3, we fix $n_t = 10^{-2}$ and $N_t = 1$ for all iterations.

We compare performance on two standard benchmark datasets,
the MNIST digits ({\tt mnist}) and the fashion MNIST ({\tt fmnist}).
We converted the former to a binary classification problem of the digits 0 versus 8,
and the latter of the odd versus even classes.
We also consider a synthetic dataset ({\tt synth}),
comprising 2D Gaussian class-conditionals with means $\pm (1, 1)$ and identity covariance matrix.
The Bayes-optimal solution for $\Pr( \Y = 1 \mid \X )$ can be derived in this case: it takes the form of a sigmoid, as assumed by logistic regression, composed with a linear model proportional to the expectation. In this case therefore, logistic regression works on a search space much smaller than \clu~and guaranteed to contain the optimum.

On a given dataset, we measure the predictive performance for each method via the area under the ROC curve.
This assesses the ranking quality of predictions, 
which
provides a commensurate means of comparison; 
in particular the {\clu} optimises for a bespoke loss function that can be vastly different from the square-loss.

\begin{figure*}[!t]
    \centering

    \subfigure[{\tt synth}.]{\includegraphics[scale=0.275]{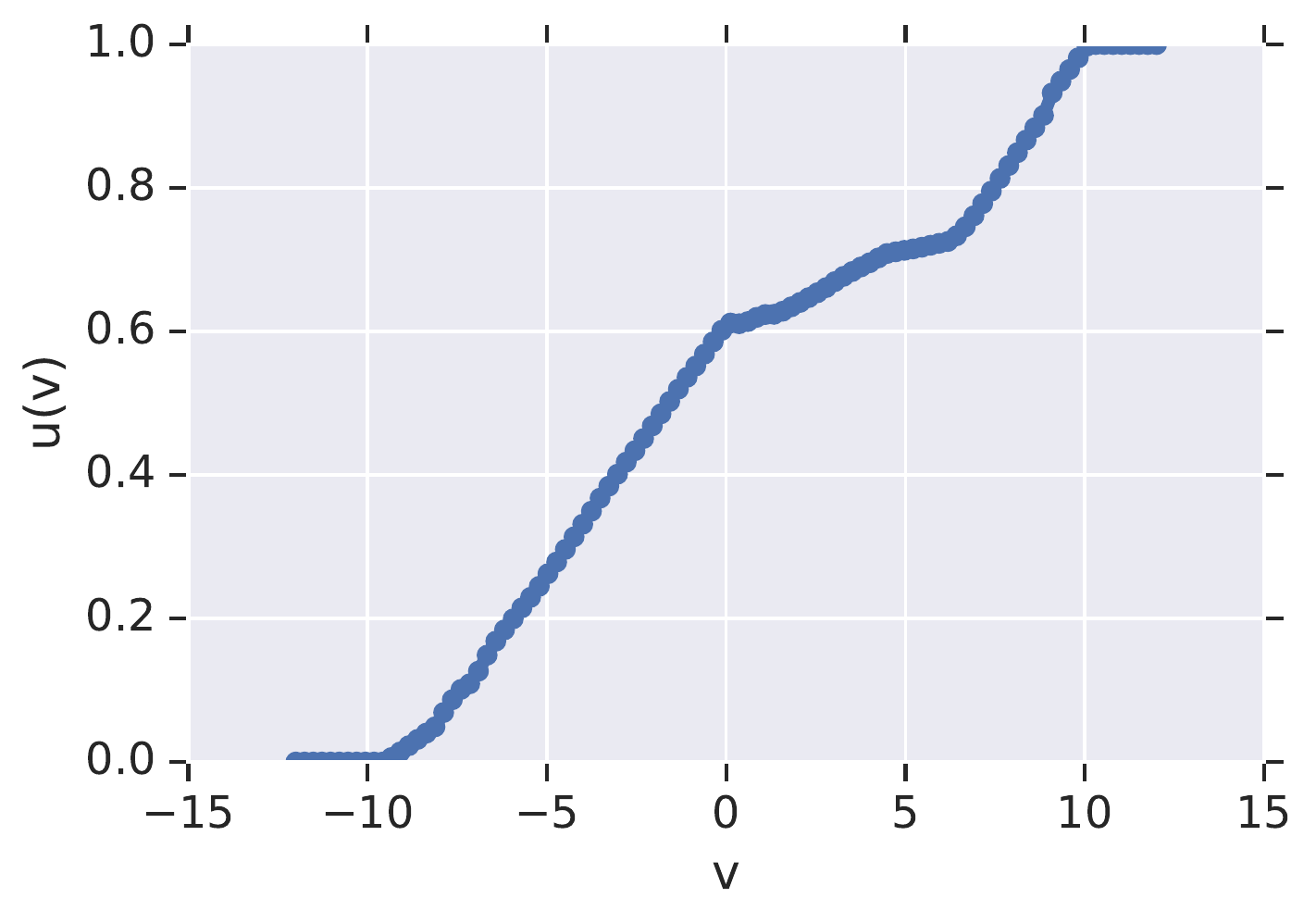}}
    \subfigure[{\tt mnist}]{\tt
    \includegraphics[scale=0.275]{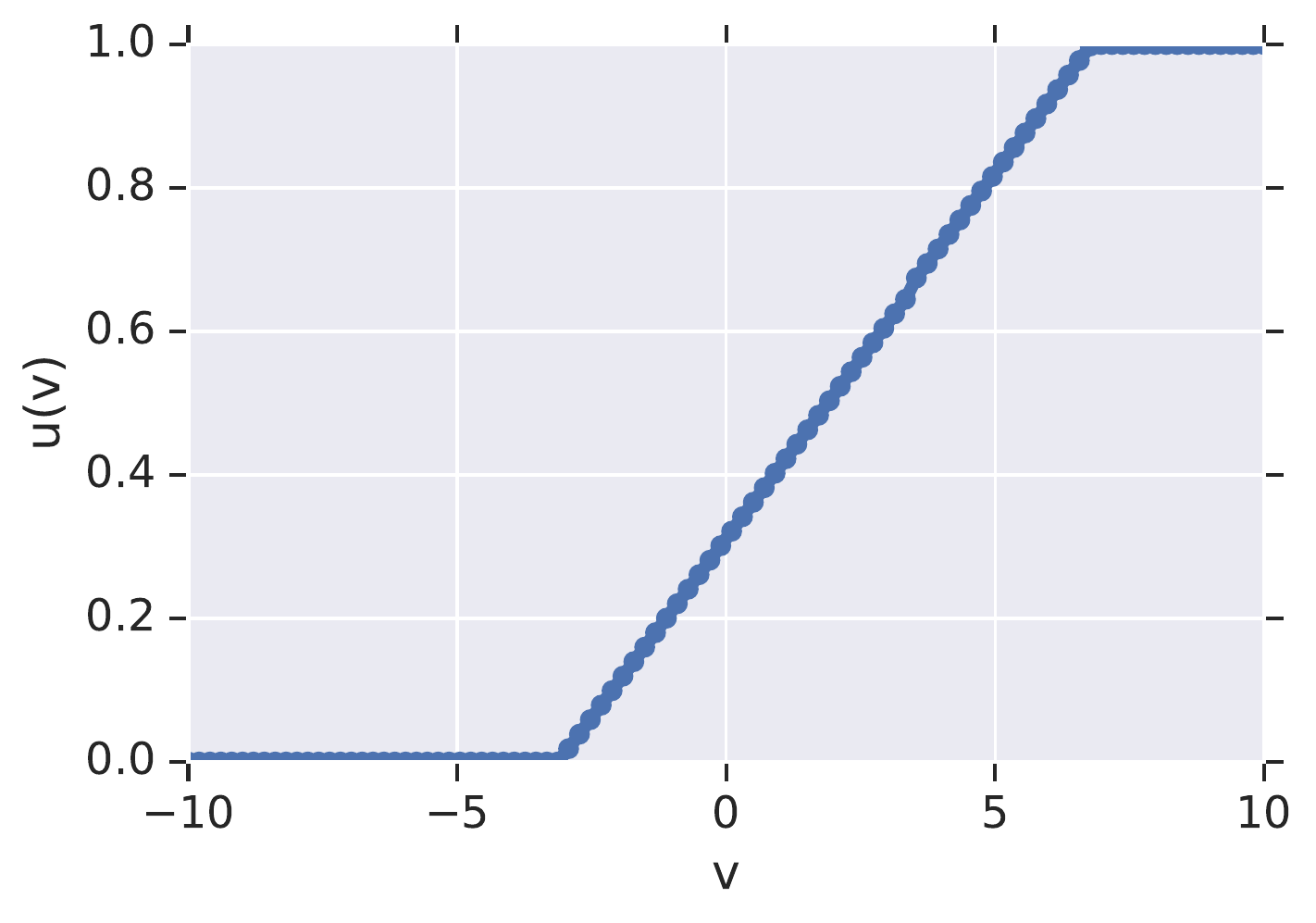}}
    \subfigure[{\tt fmnist}]{\tt
    \includegraphics[scale=0.275]{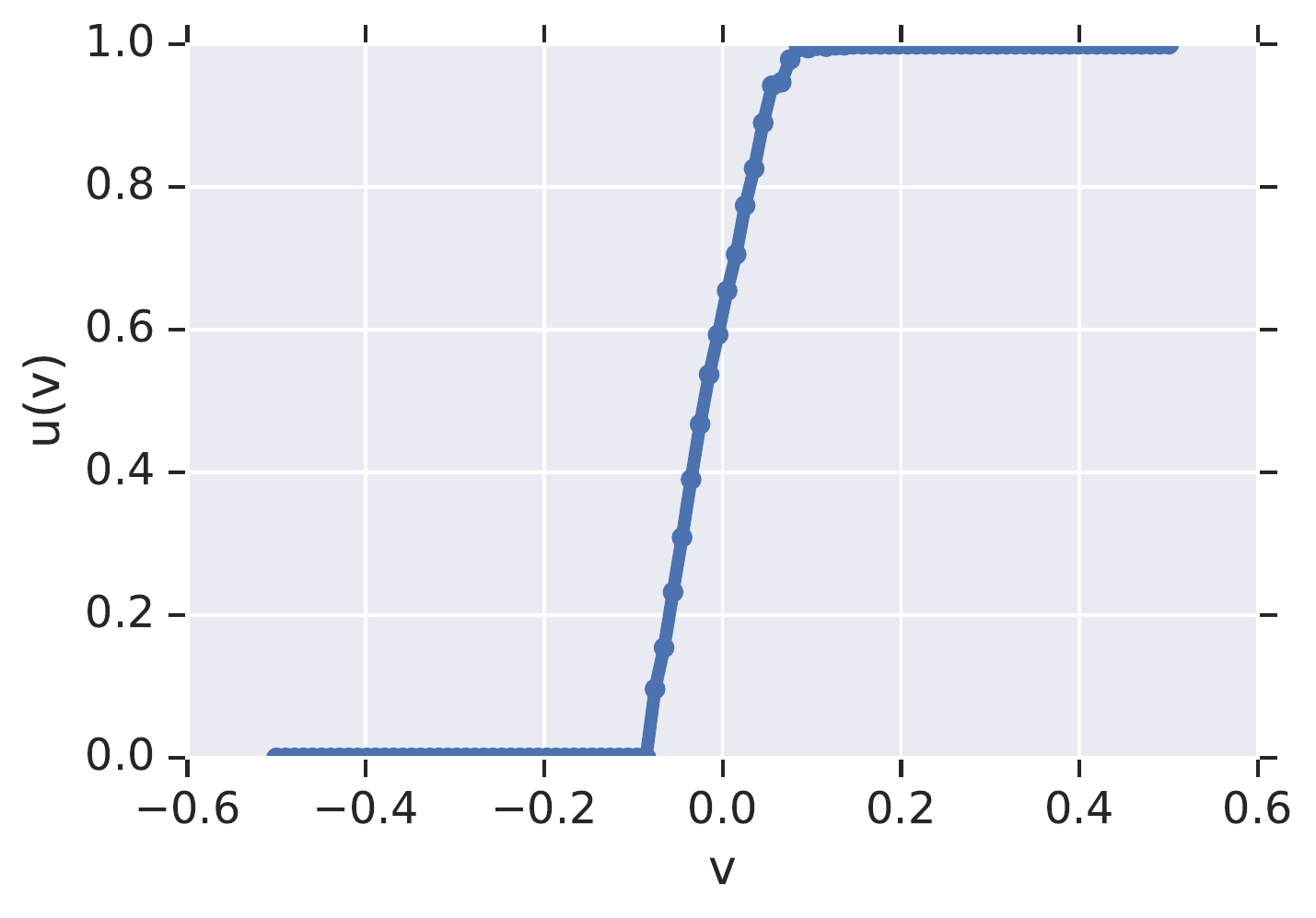}}

\vspace{-0.3cm}

    \caption{Link functions estimated by {\clu} on each dataset.
    On all datasets, the losses are seen to be (slightly) asymmetric around $\frac{1}{2}$,
    i.e.,
    $u( v ) \neq 1 - u( -v )$.
    In particular, $u( 0 ) \neq \frac{1}{2}$.}
    \label{fig:learned_links}
\end{figure*}

\begin{figure*}[!t]
    \centering

    \subfigure[{\tt synth}.]{\includegraphics[scale=0.275]{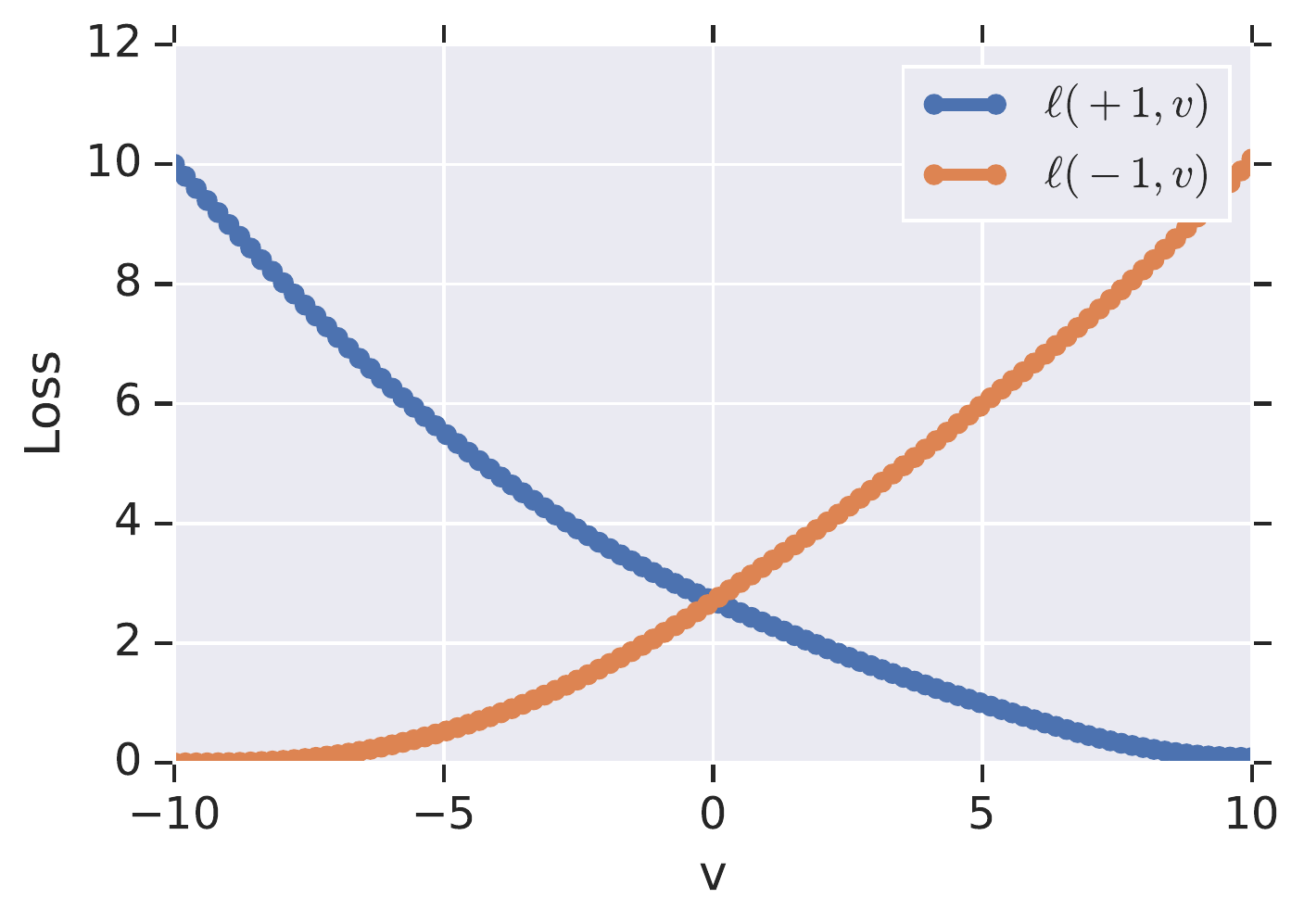}}
    \subfigure[{\tt mnist}]{\tt
    \includegraphics[scale=0.275]{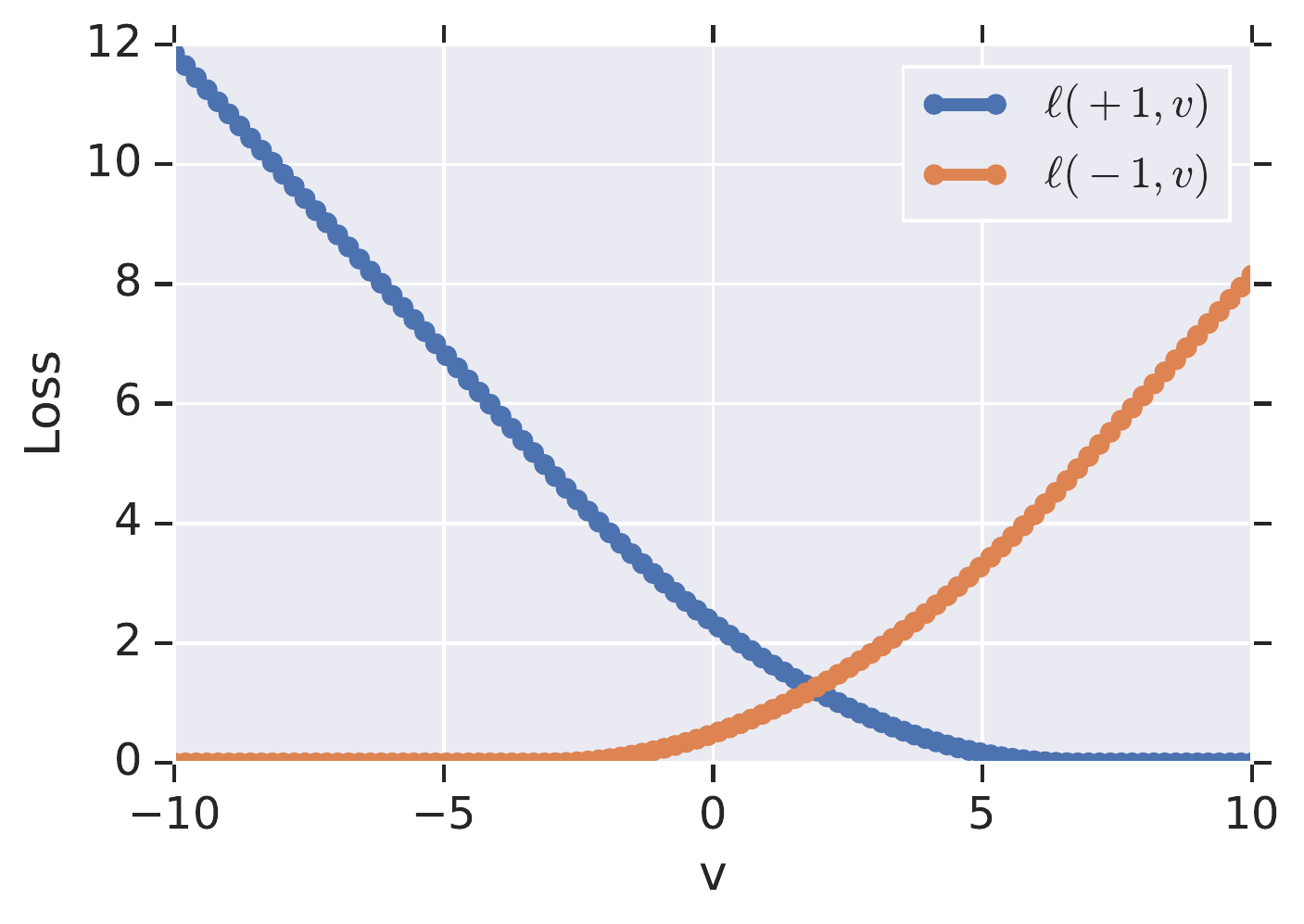}}
    \subfigure[{\tt fmnist}]{\tt
    \includegraphics[scale=0.275]{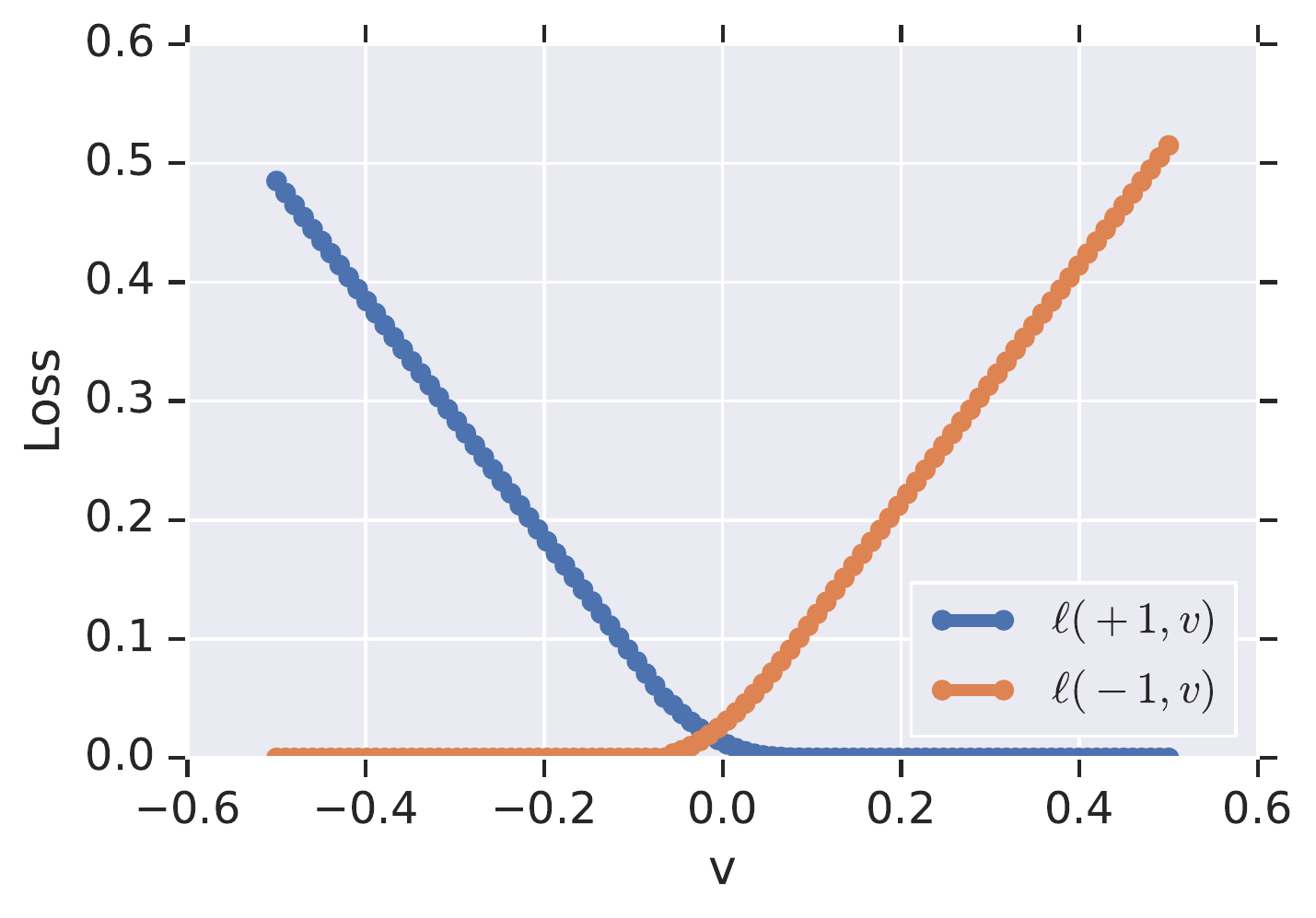}}

\vspace{-0.3cm}

    \caption{Loss functions estimated by {\clu} on each dataset.
    The losses are (slightly) asymmetric,
    and have the flavour of the square-hinge loss;
    this is a consequence of the linear interpolation used when fitting.}
    \label{fig:learned_losses}
\end{figure*}

Table~\ref{tbl:results} summarises the results.
We make three observations.
First, {\clu} is consistently competitive with the mature baseline of logistic regression.
Interestingly, this is even so on the {\tt synth} problem,
wherein logistic regression is correctly specified.
Although the difference in performance here is minor,
it does illustrate that {\clu} can infer a meaningful pair of $( u, \ve{w} )$.

Second, {\clu} 
and {\clu}$_{\mathrm{label}}$
are
generally \emph{superior} to the performance of the \slisotron.
We attribute this to the latter's reliance on an isotonic regression step to fit the links, 
as opposed to a Bregman regularisation. 

Third, while {\clu}$_{\mathrm{approx}}$ also performs reasonably, it is typically worse than the full {\clu}.
This illustrates the value of (at least approximately) solving Step 4 in the {\clu}.
Further, while {\clu}$_{\mathrm{label}}$ generally performs slightly worse than standard {\clu},
it remains competitive.
A formal analysis of this method would be of interest in future work.

\begin{table}[!t]
    \centering
    
    \resizebox{0.675\linewidth}{!}{
    \begin{tabular}{llll}
        \toprule
         & {\tt synth} & {\tt mnist} & {\tt fmnist} \\
        \midrule
        Logistic regression        & 92.2\% & 99.9\% & 98.5\% \\
        GLMTron                    & 92.2\% & 99.6\% & 98.1\% \\
        \rowcolor{lightgray}\slisotron                  & 91.6\% & 94.6\% & 90.7\% \\
        {\clu}$_{\mathrm{approx}}$ & 92.2\% & 99.3\% & 94.6\% \\
        {\clu}$_{\mathrm{label}}$  & 90.1\% & 99.6\% & 97.7\% \\
        \rowcolor{lightgray}{\clu}                     & 92.3\% & 99.7\% & 97.9\% \\
        \bottomrule
    \end{tabular}
    }
    \caption{Test set AUC of various methods on binary classification datasets. See text for details.}
    \vspace{-\baselineskip}
    \label{tbl:results}
\end{table}
 
\noindent $\triangleright$ \textbf{Illustration of learned losses}
As with the \slisotron,
a salient feature of {\clu} is the ability to automatically learn a link function.
Unlike the \slisotron, however,
the link in the {\clu} has an interpretation of corresponding to a canonical loss function.

Figure~\ref{fig:learned_links} illustrates the link functions learned by {\clu} on each dataset.
We see that these links are generally \emph{asymmetric} about $\frac{1}{2}$.
This is in contrast to standard link functions such as the sigmoid.
Recall that each link corresponds to an underlying canonical loss,
given by
$\ell'( y, v ) = u( v ) - y$.
Asymmetry of $u( \cdot )$ thus manifests in $\ell( +1, v ) \neq \ell( -1, -v )$.
We illustrate these implicit canonical losses in Figure~\ref{fig:learned_losses}.
As a consequence of the links not being symmetric around $\frac{1}{2}$, the losses on the positive and negative classes are not symmetric for the {\tt synth} dataset. This is unlike the \textit{theoretical} link, but the theoretical link may not be optimal at all on \textit{sampled data}. 
This, we believe, also illustrates the intriguing potential of the {\clu} to detect and exploit hidden asymmetries in the underlying data distribution.

\noindent $\triangleright$ \textbf{Transferability of the loss between domains}
Finally, we illustrate the potential of ``recycling'' the loss function implicitly learned by {\clu} for some other task.
We take the {\tt fmnist} dataset, and first train {\clu} to classify the classes 0 versus 6 (``T-shirt'' versus ``Shirt'').
This classifier achieves an AUC of $0.85$, which is competitive with the logistic regression AUC of $0.86$.

Recall that {\clu} gives us a learned link $u$, which per the above discussion also defines an implicit canonical loss.
By training a classifier to distinguish classes 2 versus 4 (``Pullover'' versus ``Coat'') using this loss function,
we achieve an AUC of {$\mathbf{0.879}$}.
This slightly outperforms the {$\mathbf{0.877}$} AUC of logistic regression,
and is also competitive with the $\mathbf{0.879}$ AUC attained by training {\clu} directly on classes 2 versus 4.
This indicates that the loss learned by {\clu} on one domain could be useful in related domains to \textit{another} classification algorithms just training a classifier. To properly develop this possibility is out of the scope of this paper, and as far as we know such a perspective is new in machine learning.

\section{Conclusion}\label{sec-con}
Fitting a loss that complies with Bayes decision theory implies not just to be able to learn a classifier, but also a canonical link of a proper loss, and therefore a proper canonical loss. In a 2011 seminal work, Kakade \textit{et al.} made with the {\sc SLIsotron} algorithm the first attempt at solving this bigger picture of supervised learning. 
We propose in this paper a more general approach grounded on a general Bregman formulation of differentiable proper canonical losses. 

Experiments tend to confirm the ability of our approach, the \clu, to beat the {\sc SLIsotron}, and compete with classical supervised approaches even when they are informed with the optimal choice of link.
Interestingly, they seem to illustrate the importance of a stability requirement made by our theory. More interesting is perhaps the observation that the loss learned by the \clu~on one domain can be useful to other learning algorithms to fit classifiers on related domains, a \textit{transferability} property of the loss learned that  deserves further thought.

\section*{Acknowledgments}
\label{sec:ackno}

Many thanks to Manfred Warmuth for discussions around the introduction
of Bregman divergences in machine learning.

\bibliographystyle{icml2020}
\bibliography{references,bibgen}

\clearpage
\newpage

\section*{Appendix}

\section{Factsheet on Bregman divergences}\label{sec_FactSheetBreg}

We summarize in this section the results we use (both in the main file and in this \supplement) related to
Bregman divergence with convex generator $F$,
\begin{eqnarray}
D_{F}(
  z\|z') & \defeq & F(z) - F(z') - 
  (z-z')F' (z'), \label{eqBreg2}
\end{eqnarray}
where we assume for the sake of simplicity that $F$ is twice
differentiable.\\

\noindent $\triangleright$ \textbf{General properties -- } $D_F$ is
always non-negative, convex in its left parameter, but not always in its right
parameter. Only the divergences corresponding to $F(z) \propto z^2$
are symmetric \citep{bnnBV}.\\

\noindent $\triangleright$ \textbf{$D_F$ is locally proportional to the square loss --
} assuming second order differentiability, we have \citep{nlkMB}:
\begin{eqnarray}
\forall z, z', \exists c \in [z\wedge z', z\vee z']: D_{F}(
  z\|z') & = & \frac{F''(c)}{2} \cdot (z-z')^2. \label{propLPS}
\end{eqnarray}

\noindent $\triangleright$ \textbf{Bregman triangle equality --
} also called the three points property \citep{nlkMB,nmoAS},
\begin{eqnarray}
\forall z, z', z'', D_F(z\|z'') & = & D_F(z\|z') + D_F(z'\|z'') +
                                      (F'(z'') - F'(z'))(z' - z).\label{propBTE}
\end{eqnarray}

\noindent $\triangleright$ \textbf{Invariance to affine terms --
} for any affine function $G(z)$ \citep{bnnBV},
\begin{eqnarray}
\forall z, z', D_{F+G}(z\|z') & = & D_F(z\|z'). \label{propIAT}
  \end{eqnarray}

\noindent $\triangleright$ \textbf{Dual symmetry --
} letting $F^\star$ denote the convex conjugate of $F$, we have \citep{nmoAS},
\begin{eqnarray}
\forall z, z', D_F(z\|z') & = & D_{F^\star}(F'(z')\|F'(z)). \label{propBDS}
  \end{eqnarray}

\noindent $\triangleright$ \textbf{The right population minimizer is
  the mean --
} we have \citep{bmdgCW},
\begin{eqnarray}
\arg\min_z \expect_{\Z}[D_F(\Z\|z)] & = & \expect_{\Z}[\Z] \defeq \mu(\Z). \label{expectMIN}
  \end{eqnarray}

\noindent $\triangleright$ \textbf{Bregman information --
} the Bregman information of random variable $\Z$, defined as $I_F(\Z) \defeq \min_z
\expect_{\Z}[D_F(\Z\|z)]$, satisfies \citep{bmdgCW}
\begin{eqnarray}
I_F(\Z) & = & \expect_{\Z}[D_F(\Z\|\mu(\Z))]. \label{bregINF}
  \end{eqnarray}

\section{Proof of Theorem \ref{thLBREG}}\label{proof_thLBREG}

\noindent  ($\Rightarrow$) The proof assumes basic knowledge about proper losses as in
\citet{rwCB} (and references therein) for example. It comes from \citet[Theorem 1, Corollary 3]{rwCB} and
\citet{samAP} that a differentiable function defines a proper loss iff
there exists a Riemann integrable (eventually
improper in the integrability sense) function $w: (0,1) \rightarrow \mathbb{R}_+$ such that:
\begin{eqnarray}
w(c) = \frac{\properloss'_{-1}(c)}{c} = -
\frac{\properloss'_{1}(c)}{1-c} \:\:, \forall c \in (0,1).\label{propPROPERWEIGHT}
\end{eqnarray}
To simplify notations, we slightly abuse notations and let $\cbr'' \defeq -w$ and define $\cbr'(u) \defeq
\int_a^u \cbr''(z)\mathrm{d}z$ for some adequately chosen constant $a$
(for example, $a = 1/2$ for symmetric proper canonical losses
\cite{nnBD,nnOT}). We denote such a representation of loss functions
their integral representation
\citep[eq. (5)]{rwCB}, as it gives:
\begin{eqnarray}
\partialloss{1}(c) & = &  \int_c^1 - (1-u) \cbr''(u) \mathrm{d}u,
\end{eqnarray}
from which we derive by integrating by parts,
\begin{eqnarray}
\partialloss{1}(c) & = &  -\left[ (1-u) \cbr'(u) \right]_{c}^1 -
\int_c^1 \cbr'(u) \mathrm{d}u \nonumber\\
                   & = & (1-c) \cbr'(c) - \cbr(1) + \cbr(c) \\
                   & = & (-\cbr)(1) - (-\cbr)(c) - (1-c) (-\cbr)'(c) \\
  & = & D_{-\cbr}(1\|c),
\end{eqnarray}
Where $D_{-\cbr}$ is the Bregman divergence with generator $-\cbr$ (we
remind that the conditional Bayes risk of a proper loss is concave
\cite[Section 3.2]{rwCB}). We
get similarly for the partial loss $\partialloss{-1}$ \citep[eq. (5)]{rwCB}:
\begin{eqnarray}
\partialloss{-1}(c) & = & - \int_0^c u \cbr''(u) \mathrm{d}u 
\nonumber\\
& = & - \left[ u\cbr'(u) \right]_{0}^c +
\int_0^c \cbr'(u) \mathrm{d}u \nonumber\\
                    & = & -c \cbr'(c) + \cbr(c) - \cbr(0) \\
                    & = & (- \cbr)(0) - (-\cbr)(c)   - (0 - c) (-\cbr)'(c)\\
  & = & D_{-\cbr}(0\|c).
\end{eqnarray}
We now replace $c$ by the inverse of the link chosen, $\psi$, and we get for
any proper composite loss:
\begin{eqnarray}
\ell(y^*,z) & \defeq & \iver{y^*=1}\cdot \partialloss{1}(\link^{-1}(z)) +
                     \iver{y^*=-1}\cdot \partialloss{-1}(\link^{-1}(z)) \nonumber\\
  & = & D_{-\cbr}(y\|\link^{-1}(z)) ,
\end{eqnarray}
as claimed for the implication $\Rightarrow$. The identity
\begin{eqnarray}
D_{-\cbr}(y\|\link^{-1}(z)) & = & D_{(-\cbr)^\star}(\canolink \circ \link^{-1}(z)\| \canolink(y))
\end{eqnarray}
 follows from the dual symmetry property of Bregman divergences \citep{bnnBV,nmoAS}.\\

\noindent ($\Leftarrow$) Let $\ell(y^*,z) \defeq
D_{-F}(y\|g^{-1}(z))$, some Bregman divergence, where $g: [0,1]
\rightarrow \mathbb{R}$ is invertible. Let $\ell_p (y^*,c) : \mathcal{Y} \times [0,1]
\rightarrow \overline{\mathbb{R}}$ defined by $\ell_p (y^*,c)\defeq \ell(y^*,g(c))$.
We know that the right
population minimizer of any Bregman divergence is the expectation \citep{bmdgCW,nmoAS}, so
$\pi \in \arg\inf_{u} \E_{\Y\sim \pi} \ell_p (\Y, u), \forall \pi \in
[0,1]$ and $\ell_p$ is proper. Therefore $\ell$ is proper composite since $g$ is invertible. 
The conditional Bayes risk of $\ell_p$ is therefore by
definition:
\begin{eqnarray}
  \cbr(\pi) & \defeq & \E_{\Y\sim \pi} \ell_p(\Y, \pi)\\
  & = & F(\pi) + G(\pi) \label{FZU}
  \end{eqnarray}
where $G(\pi) \defeq -  \pi F(1) -  (1-\pi) F(0)$ is affine. Since a
Bregman divergence is invariant by addition of an affine term to its
generator \eqref{propIAT}, we get
\begin{eqnarray}
 \ell_p(y^*,c) &=& 
D_{-F}(y\|c) \\
&= &D_{-\cbr}(y\|c).
  \end{eqnarray}
We now check that if $g = -F'$ then $\ell$ is proper canonical. It
comes from \eqref{FZU}
$(-F')^{-1}(z) = (\canolink)^{-1}(z + K)$ where $K \defeq -(F(1) - F(0))$
is a constant, which is still the inverse of the canonical link since
it is defined up to multiplication or addition by a scalar
\citep{bssLF}. Hence, if $g = -F'$ then $\ell(y^*,z)$ is proper
canonical. Otherwise as previously argued it is proper composite with link $g$ in the more
general case. This completes the proof for the implication $\Leftarrow$, and ends the
proof of Theorem \ref{thLBREG}.

\noindent \textbf{Remark:} symmetric proper
canonical losses (such as the logistic, square or Matsushita losses) admit $\cbr(0) = \cbr(1)$
\cite{nnBD,nnOT}. Hence \eqref{FZU} enforces $\forall \pi \in [0,1]$
\begin{eqnarray}
\pi(F(0) - F(1)) = \cbr(0) = \cbr(1) = (1-\pi)(F(1) - F(0)),
\end{eqnarray}
resulting in $F(1) = F(0)$ and therefore enforcing the constraint $K =
0$ above.

\section{Proof of Theorem \ref{thCLU}}\label{proof_thCLU}

\subsection{Helper results about \clu~and \fit}

To prove the Theorem, we first show several simple helper results.
The first is a simple consequence of the design of $u_t$. We
prove it for the sake of completeness.
\begin{lemma}\label{lemlip}
Let $u_t$ be the function output by \fit~in \clu. Let $z_m \defeq
u_t^{-1}(0)$ and $z_m \defeq u_t^{-1}(1)$. Let $U_t$ be defined as in
\eqref{defU} (main body, with $u \leftarrow u_t$). The following holds true on $u_t$
\begin{eqnarray}
n_{t-1}\cdot (z-z') \leq u_{t}(z) - u_{t}(z') \leq N_{t-1}\cdot(z-z')\:\:,\label{lin1}\\
\frac{1}{N_{t-1}}\cdot (p-p')  \leq u^{-1}_{t}(p) - u^{-1}_{t}(p') \leq \frac{1}{n_{t-1}}\cdot (p-p') \:\:,\label{lin3}
\end{eqnarray}
$\forall z_m \leq z' \leq
z\leq z_M$, $\forall 0\leq p' \leq
p\leq 1$, and the following holds true on $U_t$:
\begin{eqnarray}
\frac{(p-p')^2}{2 N_{t-1}} \leq D_{U^\star_t}(p\| p') \leq \frac{(p-p')^2}{2 n_{t-1}}.\label{propBREG1}
\end{eqnarray}
\end{lemma}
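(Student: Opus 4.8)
The plan is to read off all three inequalities directly from the way \fit~constructs $u_t$ and from the Step-3 constraints that produced $\hve{y}_t$, and then to obtain \eqref{propBREG1} from \eqref{lin3} by writing the Bregman divergence in integral form. Recall that $u_t = \fit(\hve{y}_t, \ve{w}_t, S)$ is the piecewise affine function through the knots $(\zmin{t},0),(z_{t1},\hat{y}_{t1}),\dots,(z_{tm},\hat{y}_{tm}),(\zmax{t},1)$ with $z_{ti}\defeq\ve{w}_t^\top\ve{x}_i$, where the $\hat{y}_{ti}$ were computed in Step 3 of iteration $t-1$ under the constraints $\hat{y}_{t(i+1)}-\hat{y}_{ti}\in[n_{t-1}(z_{t(i+1)}-z_{ti}),N_{t-1}(z_{t(i+1)}-z_{ti})]$, together with the tail conditions hard-wired into \fit, namely $\hat{y}_{t1}\in[n_{t-1}(z_{t1}-\zmin{t}),N_{t-1}(z_{t1}-\zmin{t})]$ and $1-\hat{y}_{tm}\in[n_{t-1}(\zmax{t}-z_{tm}),N_{t-1}(\zmax{t}-z_{tm})]$. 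Hence every affine piece of $u_t$ on $[\zmin{t},\zmax{t}]$ has slope in $[n_{t-1},N_{t-1}]$; since $n_{t-1}>0$, $u_t$ is continuous and strictly increasing there, so it is a bijection of $[\zmin{t},\zmax{t}]$ onto $[0,1]$ (so $z_m=\zmin{t}$, $z_M=\zmax{t}$), with $u_t'(s)\in[n_{t-1},N_{t-1}]$ for almost every $s$.

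With this in hand \eqref{lin1} is immediate: for $z_m\le z'\le z\le z_M$ write $u_t(z)-u_t(z')=\int_{z'}^{z}u_t'(s)\,\mathrm{d}s$ and bound the integrand between $n_{t-1}$ and $N_{t-1}$. For \eqref{lin3}, put $p\defeq u_t(z)$, $p'\defeq u_t(z')$, equivalently $z=u_t^{-1}(p)$, $z'=u_t^{-1}(p')$ (legitimate since $u_t$ is a bijection onto $[0,1]$); then \eqref{lin1} reads $n_{t-1}\big(u_t^{-1}(p)-u_t^{-1}(p')\big)\le p-p'\le N_{t-1}\big(u_t^{-1}(p)-u_t^{-1}(p')\big)$, and rearranging gives \eqref{lin3}.

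For \eqref{propBREG1} I would use Legendre duality: $U_t$ (defined by \eqref{defU} with $u\leftarrow u_t$) is $C^1$ and strictly convex with $U_t'=u_t$ mapping $[\zmin{t},\zmax{t}]$ onto $[0,1]$, so $U_t^\star$ has domain $[0,1]$ and $(U_t^\star)'=u_t^{-1}$ there. Writing the Bregman divergence in integral form,
\begin{eqnarray*}
D_{U_t^\star}(p\|p') & = & U_t^\star(p)-U_t^\star(p')-(p-p')\,(U_t^\star)'(p')\\
& = & \int_{p'}^{p}\big(u_t^{-1}(s)-u_t^{-1}(p')\big)\,\mathrm{d}s ,
\end{eqnarray*}
and applying \eqref{lin3} to the integrand ($u_t^{-1}(s)-u_t^{-1}(p')$ has the sign of $s-p'$ and absolute value between $|s-p'|/N_{t-1}$ and $|s-p'|/n_{t-1}$), integration over $[p',p]$ (or $[p,p']$) yields $D_{U_t^\star}(p\|p')\in\big[(p-p')^2/(2N_{t-1}),\,(p-p')^2/(2n_{t-1})\big]$; the case $p=p'$ is trivial. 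One could instead quote the local-square-loss identity \eqref{propLPS} with $(U_t^\star)''=1/(u_t'\circ u_t^{-1})\in[1/N_{t-1},1/n_{t-1}]$, but since $u_t$ is only piecewise affine, $U_t^\star$ is not twice differentiable at the knots, so I prefer the integral argument, which sidesteps this.

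The only genuinely delicate point is the bookkeeping in the first step: one must verify that \emph{every} affine piece of $u_t$ — including the two tail pieces that \fit~appends between $(\zmin{t},0)$ and $(z_{t1},\hat{y}_{t1})$ and between $(z_{tm},\hat{y}_{tm})$ and $(\zmax{t},1)$ — has slope confined to $[n_{t-1},N_{t-1}]$ (this is exactly where the index $t-1$, rather than $t$, enters, through the $r_1,l_1,r_m,l_m$ bounds of \fit~used at iteration $t-1$), and that $u_t^{-1}$, hence $U_t^\star$ and the identity $(U_t^\star)'=u_t^{-1}$, are invoked only on $[0,1]$. Everything after that is a one-line integration.
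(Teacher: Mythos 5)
Your proof is correct, and it takes a more elementary route than the paper's own argument. For \eqref{lin1}, the paper does not simply integrate a bounded derivative; it assumes for contradiction that some secant slope exceeds $N_{t-1}$, introduces an auxiliary function $v(z) = u_t(z) - u_t(b) + N'(b-z)$ with $v(a)=v(b)=0$, locates its minimizer $z_*$, concludes $N'\in\partial u_t(z_*)$, and derives a contradiction with the piecewise-affine structure. For \eqref{propBREG1}, the paper similarly runs a Rolle-type argument on $V(p) \defeq U_t^\star(b) - U_t^\star(p) - (b-p)u_t^{-1}(b) - A(b-p)^2$ to extract a representation $D_{U_t^\star}(b\|a) = (c'/2)(b-a)^2$ with $c'\in\partial u_t^{-1}(c)\subset[1/N_{t-1},1/n_{t-1}]$. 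You instead read the slope bounds off the \fit~construction directly, write $u_t(z)-u_t(z')=\int_{z'}^{z}u_t'(s)\,\mathrm{d}s$, and use the integral representation $D_{U_t^\star}(p\|p')=\int_{p'}^{p}\bigl(u_t^{-1}(s)-u_t^{-1}(p')\bigr)\,\mathrm{d}s$, applying \eqref{lin3} pointwise. The two approaches prove the same facts. Yours is shorter and more transparent once one accepts the structural claim that every affine piece of $u_t$ (including the two tail segments) has slope in $[n_{t-1},N_{t-1}]$; the paper's MVT/Rolle style is more agnostic about the concrete form of $u_t$, so it would also apply to more general links satisfying Definition \ref{defLINK}, not just piecewise-affine ones. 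Your observation that the integral form sidesteps the failure of second-order differentiability at the knots — which would make a naive appeal to the local-square-loss identity \eqref{propLPS} suspect — is apt and worth keeping; it is precisely the subtlety the paper's subdifferential bookkeeping is meant to handle.
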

\begin{proof}
We show the right-hand side of ineq. (\ref{lin1}). The left hand side of (\ref{lin1}) follows by symmetry
and ineq (\ref{lin3}) follow after a variable change
from ineq (\ref{lin1}). The proof is a rewriting of the mean-value
Theorem for subdifferentials: consider for example the
case $u_{t}(b) - u_{t}(a) = N'(b-a)$ with $N' > N_{t-1}$ for some $z_m <
a<b < z_M$. Let
\begin{eqnarray}
v(z) & \defeq & 
u_{t}(z) - u_{t}(b)  + N' (b - z)\:\:,
\end{eqnarray} 
and since $v(a) = v(b) = 0$, let $z_* \defeq
\arg\min_z v(z)$, assuming wlog that the min exists. Then $v(z) \geq
v(z_*)$, and equivalently $u_{t}(z) - u_{t}(b)  + N' (b - z) \geq
u_{t}(z_*) - u_{t}(b)  + N' (b - z_*)$ ($\forall z \in [a,b]$), which, after reorganising,
gives $u_{t}(z)\geq u_{t}(z_*) + N'(z - z_*)$, implying $N' \in \partial u_{t}(z_*)$. Pick now
$a\leq z'_*<z_*<z''_* \leq b$ that are linked to $z_*$ by a line segment in
$u_{t}$. At least one of the two segments has slope
$\geq N'$, which is impossible since $N' > N_{t-1}$ and yields a contradiction. The case $a = z_m$ xor
$b = z_M$ reduces to a single segment with slope $\geq N'$, also
impossible.\\

\noindent We now show \eqref{propBREG1}. Let
\begin{eqnarray}
V(p) & \defeq & U^\star_t(b) - U^\star_t(p) - (b-p)u^{-1}_t(z) -A(b-p)^2,
\end{eqnarray}
(remind that $(U^\star_t)' = u^{-1}_t$) where $A$ is chosen so that
$V(a) = 0$, which implies\footnote{This is a simple application of
  Rolle's Theorem to subdifferentials.} since $V(b)=0$ that $\exists c \in
(a,b), 0 \in \partial V(c)$. We have $\partial V(c) \ni -(b-c) c'
- 2A(c-b)$ for any $c' \in \partial u^{-1}_t (c)$, implying $A = c'/2$ for
some $c' \in \partial u^{-1}_t (c)$. Solving for $V(a) = 0$ yields
$D_{U^\star_t}(b\|a) = (c'/2)(b-a)^2$ for some $c'\in \partial u^{-1}_t$ and
since $\mathrm{Im} \partial u^{-1}_t \subset [1/N_{t-1}, 1/n_{t-1}]$
from \eqref{lin3}, we get
\begin{eqnarray}
\frac{(b-a)^2}{2N_{t-1}} \leq D_{U^\star_t}(b\|a) \leq \frac{(b-a)^2}{2n_{t-1}},
\end{eqnarray}
as claimed.
\end{proof}

Note that we indeed have $\hat{y}_1 \defeq u_{t+1}(\ve{w}_{t+1}^\top\ve{x}_1)$ by the design of Step 4 in \clu.
 The second result we need is a direct consequence of Step 3 in \clu.
\begin{lemma}\label{lab}
The following holds for any $t \geq 1, i\in [m]$,
    \begin{eqnarray}
u_{t+1}(\ve{w}_{t+1}^\top \ve{x}_{i}) & \in & u_{t}(\ve{w}_{t+1}^\top
\ve{x}_{i}) \cdot \left[\min\left\{1-\beta_t, \frac{n_t}{N_t}\right\}, \max\left\{1+\alpha_t, \frac{N_t}{n_t}\right\}\right],\label{eqalphabeta}
      \end{eqnarray}
where $\alpha_t, \beta_t \geq 0$ are the stability property parameters
at the current iteration of \clu,
as defined in Definition \ref{defAB} (main file).
\end{lemma}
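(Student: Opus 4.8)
The plan is to establish the claimed two-sided multiplicative bound by \emph{anchoring} at index $1$ — where it is exactly the stability hypothesis of Definition~\ref{defAB} — and then \emph{propagating} it along the sorted scores $\ve{w}_{t+1}^\top\ve{x}_1 \leq \cdots \leq \ve{w}_{t+1}^\top\ve{x}_m$ using only the box constraints of Step~3 together with the Lipschitz control on $u_t$ from Lemma~\ref{lemlip}. First I would record that, by the design of \fit~in Step~4, the piecewise-affine interpolant passes through the points $(\ve{w}_{t+1}^\top\ve{x}_i, \hat{y}_{(t+1)i})$, so $u_{t+1}(\ve{w}_{t+1}^\top\ve{x}_i) = \hat{y}_{(t+1)i}$ for every $i\in[m]$; hence the lemma is really a statement relating any \emph{feasible} $\hve{y}_{t+1}$ of Step~3 to $u_t$ evaluated at the same abscissae, and optimality of $\hve{y}_{t+1}$ is not used.

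Next I would telescope. With $s_i \defeq \ve{w}_{t+1}^\top(\ve{x}_i - \ve{x}_1) \geq 0$ (indices are sorted in Step~2), summing the increment constraints of Step~3, $\hat{y}_{j+1} - \hat{y}_j \in [\,n_t\cdot \ve{w}_{t+1}^\top(\ve{x}_{j+1}-\ve{x}_j),\ N_t\cdot \ve{w}_{t+1}^\top(\ve{x}_{j+1}-\ve{x}_j)\,]$ over $j=1,\dots,i-1$, gives
\[
\hat{y}_{(t+1)i} - \hat{y}_{(t+1)1} \ \in\ [\, n_t s_i,\ N_t s_i\, ].
\]
On the other side, Lemma~\ref{lemlip} yields the Lipschitz envelope of $u_t$, so, writing $v_1 \defeq u_t(\ve{w}_{t+1}^\top\ve{x}_1)$ and $v_i \defeq u_t(\ve{w}_{t+1}^\top\ve{x}_i)$, we have $v_i - v_1 \in [\, n_{t-1} s_i,\ N_{t-1} s_i\, ]$, while Definition~\ref{defAB} gives the base ratio $\hat{y}_{(t+1)1} \in v_1 \cdot [1-\beta_t, 1+\alpha_t]$.

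To finish I would combine these, using $v_1 \geq 0$ and $s_i \geq 0$. Setting $R \defeq \max\{1+\alpha_t, N_t/n_{t-1}\}$, the upper bound follows from the chain
\[
\hat{y}_{(t+1)i} \ \leq\ (1+\alpha_t) v_1 + N_t s_i \ \leq\ R v_1 + R n_{t-1} s_i \ =\ R\,(v_1 + n_{t-1} s_i) \ \leq\ R\, v_i,
\]
and the lower bound is the mirror image, giving $\hat{y}_{(t+1)i} \geq L v_i$ with $L \defeq \min\{1-\beta_t, n_t/N_{t-1}\}$; holding the slope parameters fixed across iterations (as in our experiments) collapses $N_t/n_{t-1}$ and $n_t/N_{t-1}$ to $N_t/n_t$ and $n_t/N_t$, which is the stated form.

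I expect the only real obstacle to be bookkeeping rather than ideas: one must keep straight which Lipschitz constants are attached to $u_t$ (namely $n_{t-1}, N_{t-1}$, via Lemma~\ref{lemlip}) versus to the increments of $\hve{y}_{t+1}$ (namely $n_t, N_t$, via Step~3), and check that the two terminal segments produced by \fit~for $i\in\{1,m\}$ — interpolated against the artificial endpoints $(\zmin{(t+1)},0)$ and $(\zmax{(t+1)},1)$ rather than against neighbouring data — still respect the same box constraints, which is exactly what the definitions of $l_1, r_1, l_m, r_m$ in \fit~encode, so that a single telescoping from index $1$ covers all $i\in[m]$. Everything else reduces to the elementary fact that plugging $L$ and $R$ into the above chains only requires $L \leq 1-\beta_t$, $L \leq n_t/N_{t-1}$, $R \geq 1+\alpha_t$, $R \geq N_t/n_{t-1}$ and nonnegativity of $v_1, s_i$.
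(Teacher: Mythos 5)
Your proof is correct and uses the same underlying strategy as the paper's: anchor the two-sided bound at index $1$ via the stability hypothesis of Definition~\ref{defAB}, and then propagate along the sorted scores using the box constraints of Step~3 on the increments of $\hve{y}_{t+1}$ together with the Lipschitz envelope of $u_t$ from Lemma~\ref{lemlip}. The technique, however, is noticeably cleaner than the paper's: the paper argues by explicit induction on $i$ and splits each direction into two cases according to whether $1+\alpha_t$ or $N_t/n_t$ (resp. $1-\beta_t$ or $n_t/N_t$) dominates, whereas your telescoping back to $i=1$ introduces $R$ and $L$ once and avoids the case analysis entirely — the monotonicity $v_1 + n_{t-1}s_i \leq v_i \leq v_1 + N_{t-1}s_i$ does all the work. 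Your bookkeeping on which Lipschitz constants attach to $u_t$ (namely $n_{t-1}, N_{t-1}$, via Lemma~\ref{lemlip}) versus to the Step-3 constraints (namely $n_t, N_t$) is in fact \emph{more} careful than the paper's own derivation, where ineq.~\eqref{eqsc22bb} invokes Lemma~\ref{lemlip} for $u_t$ yet writes the constant as $n_t$ rather than $n_{t-1}$ — so what you obtain, $\max\{1+\alpha_t, N_t/n_{t-1}\}$ and $\min\{1-\beta_t, n_t/N_{t-1}\}$, is the honest statement, and it coincides with~\eqref{eqalphabeta} precisely under the unstated assumption that slope parameters are held fixed across iterations (or whenever $n_{t-1}\geq n_t$ and $N_{t-1}\leq N_t$, in which case your bound is tighter). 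The two small remarks you make are both fair: optimality of $\hve{y}_{t+1}$ in Step~3 is never used (only feasibility), and the terminal segments of \fit\ are irrelevant here since the lemma only evaluates $u_{t+1}$ at the data abscissae, where $u_{t+1}(\ve{w}_{t+1}^\top\ve{x}_i)=\hat y_{(t+1)i}$ by construction.
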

\begin{proof}
  We prove the upperbound in \eqref{eqalphabeta} by induction. Assuming the property holds for $\ve{x}_{i}$  and considering $\ve{x}_{i+1}$ (recall that indexes are
ordered in increasing value of $\ve{w}_{t+1}^\top \ve{x}_{i}$, see
Step 2 in \clu), we obtain
\begin{eqnarray}
u_{t+1}(\ve{w}_{t+1}^\top \ve{x}_{{i+1}})  & \leq &
u_{t+1}(\ve{w}_{t+1}^\top \ve{x}_{{i}}) + N_t \ve{w}_{t+1}^\top (\ve{x}_{i+1}
- \ve{x}_{i})\label{eqsc11b}\\
& \leq & u_{t+1}(\ve{w}_{t+1}^\top \ve{x}_{{i}}) + \frac{N_t}{n_t} \cdot \left(u_{t}(\ve{w}_{t+1}^\top
\ve{x}_{{i+1}}) - u_{t}(\ve{w}_{t+1}^\top
         \ve{x}_{{i}})\right)\label{eqsc22bb}\\
  & & = \frac{N_t}{n_t} u_{t}(\ve{w}_{t+1}^\top
\ve{x}_{{i+1}}) + u_{t+1}(\ve{w}_{t+1}^\top \ve{x}_{{i}}) - \frac{N_t}{n_t} \cdot u_{t}(\ve{w}_{t+1}^\top
         \ve{x}_{{i}})\:\:.\label{eqsc23bb}
\end{eqnarray}
The first inequality comes from the right interval constraint in
problem (\ref{pbregU}) applied to $u_{t+1}$, ineq. (\ref{eqsc22bb})
comes from Lemma \ref{lemlip} applied to $u_{t}$. We now have two cases.

\noindent \textbf{Case 1} If $N_t/n_t > 1+\alpha_t$,
using the induction hypothesis \eqref{eqalphabeta} yields
$u_{t+1}(\ve{w}_{t+1}^\top \ve{x}_{{i}}) \leq (N_t/n_t) \cdot u_{t}(\ve{w}_{t+1}^\top
\ve{x}_{i})$ and so \eqref{eqsc23bb} becomes
\begin{eqnarray}
u_{t+1}(\ve{w}_{t+1}^\top \ve{x}_{{i+1}}) & \leq &  \frac{N_t}{n_t} u_{t}(\ve{w}_{t+1}^\top
\ve{x}_{{i+1}}) + \frac{N_t}{n_t}  u_{t}(\ve{w}_{t+1}^\top \ve{x}_{{i}}) - \frac{N_t}{n_t} u_{t}(\ve{w}_{t+1}^\top
         \ve{x}_{{i}})\nonumber\\
& & = \frac{N_t}{n_t} u_{t}(\ve{w}_{t+1}^\top
\ve{x}_{{i+1}}).
\end{eqnarray}
\noindent \textbf{Case 2} If $N_t/n_t \leq 1+\alpha_t$, we have this time
from the induction hypothesis $u_{t+1}(\ve{w}_{t+1}^\top \ve{x}_{{i}}) \leq (1+\alpha_t) \cdot u_{t}(\ve{w}_{t+1}^\top
\ve{x}_{i})$, and so we get from \eqref{eqsc23bb},
\begin{eqnarray}
u_{t+1}(\ve{w}_{t+1}^\top \ve{x}_{{i+1}})  & \leq & \frac{N_t}{n_t} u_{t}(\ve{w}_{t+1}^\top
\ve{x}_{{i+1}}) + \left( 1 + \alpha_t - \frac{N_t}{n_t} \right) \cdot u_{t}(\ve{w}_{t+1}^\top
\ve{x}_{{i}}) \nonumber\\
 & \leq & \frac{N_t}{n_t} u_{t}(\ve{w}_{t+1}^\top
\ve{x}_{{i+1}}) + \left( 1 + \alpha_t - \frac{N_t}{n_t} \right) \cdot u_{t}(\ve{w}_{t+1}^\top
\ve{x}_{{i+1}}) \label{eqsc3d3}\\
 & & = (1+\alpha_t) u_{t}(\ve{w}_{t+1}^\top
\ve{x}_{{i+1}}) \label{eqsc44}\:\:,
\end{eqnarray}
where \eqref{eqsc3d3} holds because $\ve{w}_{t+1}^\top
\ve{x}_{{i}}\leq  \ve{w}_{t+1}^\top
\ve{x}_{{i+1}}$ (by assumption) and $u_t$ is non-decreasing.\\

\noindent The proof of the lowerbound in \eqref{eqalphabeta} follows from the
following "symmetric" induction, noting first that the second constraint in problem (\ref{pbregU}) (main file)
  implies the base case, $u_{t+1}(\ve{w}_{t+1}^\top \ve{x}_{1})  \geq (1-\beta_t) u_{t}(\ve{w}_{t+1}^\top
\ve{x}_{1})$, and then, for the general index $i>1$,
\begin{eqnarray}
u_{t+1}(\ve{w}_{t+1}^\top \ve{x}_{{i+1}})  & \geq &
u_{t+1}(\ve{w}_{t+1}^\top \ve{x}_{{i}}) + n_t \ve{w}_{t+1}^\top (\ve{x}_{i+1}
- \ve{x}_{i})\label{eqsc113b}\\
& \geq & u_{t+1}(\ve{w}_{t+1}^\top \ve{x}_{{i}}) + \frac{n_t}{N_t} \cdot \left(u_{t}(\ve{w}_{t+1}^\top
\ve{x}_{{i+1}}) - u_{t}(\ve{w}_{t+1}^\top
\ve{x}_{{i}})\right)\label{eqsc223b}\\
& & = \frac{n_t}{N_t} \cdot u_{t}(\ve{w}_{t+1}^\top
\ve{x}_{{i+1}}) + u_{t+1}(\ve{w}_{t+1}^\top \ve{x}_{{i}}) - \frac{n_t}{N_t} \cdot u_{t}(\ve{w}_{t+1}^\top
\ve{x}_{{i}}) \label{eqsc233bb}.
\end{eqnarray}
The first inequality comes from the left interval constraint in
problem (\ref{pbregU}) applied to $u_{t+1}$, ineq. (\ref{eqsc223b})
comes from Lemma \ref{lemlip} applied to $u_{t}$. Similarly to the
upperbound in \eqref{eqalphabeta}, we now have two cases.

\noindent \textbf{Case 1} If $n_t/N_t \leq 1-\beta_t$, using the induction hypothesis \eqref{eqalphabeta} yields
$u_{t+1}(\ve{w}_{t+1}^\top \ve{x}_{{i}}) \geq (n_t/N_t) \cdot u_{t}(\ve{w}_{t+1}^\top
\ve{x}_{i})$ and so \eqref{eqsc233bb} becomes
\begin{eqnarray}
u_{t+1}(\ve{w}_{t+1}^\top \ve{x}_{{i+1}}) & \geq & \frac{n_t}{N_t} \cdot u_{t}(\ve{w}_{t+1}^\top
\ve{x}_{{i+1}}) + \frac{n_t}{N_t}  \cdot u_{t}(\ve{w}_{t+1}^\top
\ve{x}_{i}) - \frac{n_t}{N_t} \cdot u_{t}(\ve{w}_{t+1}^\top
\ve{x}_{{i}}) \nonumber\\
& & = \frac{n_t}{N_t} \cdot u_{t}(\ve{w}_{t+1}^\top
\ve{x}_{{i+1}}).
\end{eqnarray}
\noindent \textbf{Case 2} If $n_t/N_t > 1-\beta_t$, using the induction hypothesis \eqref{eqalphabeta} yields
$u_{t+1}(\ve{w}_{t+1}^\top \ve{x}_{{i}}) \geq (1-\beta_t) \cdot u_{t}(\ve{w}_{t+1}^\top
\ve{x}_{i})$ and so \eqref{eqsc233bb} becomes
\begin{eqnarray}
u_{t+1}(\ve{w}_{t+1}^\top \ve{x}_{{i+1}}) & \geq & \frac{n_t}{N_t} \cdot u_{t}(\ve{w}_{t+1}^\top
\ve{x}_{{i+1}}) + (1-\beta_t) \cdot u_{t}(\ve{w}_{t+1}^\top
\ve{x}_{i}) - \frac{n_t}{N_t} \cdot u_{t}(\ve{w}_{t+1}^\top
\ve{x}_{{i}}) \nonumber\\
& \geq & \frac{n_t}{N_t} \cdot u_{t}(\ve{w}_{t+1}^\top
\ve{x}_{{i}}) + (1-\beta_t) \cdot u_{t}(\ve{w}_{t+1}^\top
\ve{x}_{i}) - \frac{n_t}{N_t} \cdot u_{t}(\ve{w}_{t+1}^\top
\ve{x}_{{i}}) \label{eqsc3d3bb}\\
& & = (1-\beta_t) \cdot u_{t}(\ve{w}_{t+1}^\top
\ve{x}_{i}).
\end{eqnarray}
\eqref{eqsc3d3bb} holds because $\ve{w}_{t+1}^\top
\ve{x}_{{i}}\leq  \ve{w}_{t+1}^\top
\ve{x}_{{i+1}}$ (by assumption) and $u_t$ is non-decreasing. This
achieves the proof of Lemma \ref{lab}.
\end{proof}
We now analyze the following Bregman loss for $r,t,t'=1, 2, ...$:
\begin{eqnarray}
  \properlossplus{t}{r}(S,\ve{w}_{t'}) & \defeq &
\expect_{{S}}[D_{U_r^\star}(y
\| u_t(\ve{w}_{t'}^\top\ve{x}))] =
\expect_{{S}}\left[D_{U_r}(u_r^{-1}\circ u_t(\ve{w}_{t'}^\top\ve{x}) \| u_r^{-1}(y))\right] \:\:,
\end{eqnarray}
The key to the proof of Theorem \ref{thCLU} is the following Theorem
which breaks down the bound that we have to analyze into several parts.
\begin{theorem}\label{thBTT}
For any $t\geq 1$,
\begin{eqnarray}
\properlossplus{t+1}{t+1}(S,\ve{w}_{t+1}) & \leq & \properlossplus{t}{t}(S,\ve{w}_{t})
                                              - \expect_{{S}}[D_{U_t}(
        \ve{w}_{t}^\top\ve{x}\| u^{-1}_{t} \circ
                                              u_{t+1}(\ve{w}_{t+1}^\top\ve{x}))]
                                              - L_{t+1} - Q_{t+1},\nonumber
\end{eqnarray}
where 
\begin{eqnarray}
 L_{t+1}  & \defeq & \expect_{{S}}[(\ve{w}_{t}^\top\ve{x} -
      u^{-1}_{t}\circ u_{t+1}(\ve{w}_{t+1}^\top\ve{x}))\cdot
      (u_{t+1}(\ve{w}_{t+1}^\top\ve{x}) -
                     u_{t}(\ve{w}_{t+1}^\top\ve{x}))],\nonumber\\
Q_{t+1} & \defeq & \expect_{{S}}[(\ve{w}_{t}^\top\ve{x} -
      u^{-1}_{t}\circ u_{t+1}(\ve{w}_{t+1}^\top\ve{x}))\cdot
      (u_{t}(\ve{w}_{t+1}^\top\ve{x}) - y)] -                              \left(\frac{N_{t-1}}{n_t} -1\right)\cdot
                                                   \properlossplus{t+1}{t}(S,\ve{w}_{t+1}).\nonumber
\end{eqnarray}
\end{theorem}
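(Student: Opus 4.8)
The plan is to reduce the claimed inequality to an \emph{exact identity} plus one pointwise comparison of Bregman divergences. First I would rewrite all three loss quantities through the dual form recorded in the definition of $\properlossplus{\cdot}{\cdot}$, so that they become $D_{U_t}$-divergences sharing a common right argument $u_t^{-1}(y)$: namely $\properlossplus{t}{t}(S,\ve{w}_{t}) = \expect_{S}[D_{U_t}(\ve{w}_t^\top\ve{x}\,\|\,u_t^{-1}(y))]$ and $\properlossplus{t+1}{t}(S,\ve{w}_{t+1}) = \expect_{S}[D_{U_t}(u_t^{-1}\!\circ u_{t+1}(\ve{w}_{t+1}^\top\ve{x})\,\|\,u_t^{-1}(y))]$, while $\properlossplus{t+1}{t+1}(S,\ve{w}_{t+1})$ is kept in the form $\expect_{S}[D_{U_{t+1}^\star}(y\,\|\,u_{t+1}(\ve{w}_{t+1}^\top\ve{x}))]$.

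Next I would apply the Bregman three-point property \eqref{propBTE} with generator $F=U_t$ to the triple $z=\ve{w}_t^\top\ve{x}$, $z'=u_t^{-1}\!\circ u_{t+1}(\ve{w}_{t+1}^\top\ve{x})$, $z''=u_t^{-1}(y)$. Since $U_t'=u_t$, we get $F'(z'')=y$ and $F'(z')=u_{t+1}(\ve{w}_{t+1}^\top\ve{x})$, so the residual term $(F'(z'')-F'(z'))(z'-z)$ equals $(\ve{w}_t^\top\ve{x}-u_t^{-1}\!\circ u_{t+1}(\ve{w}_{t+1}^\top\ve{x}))\cdot(u_{t+1}(\ve{w}_{t+1}^\top\ve{x})-y)$ after the obvious sign rearrangement. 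Writing $u_{t+1}(\ve{w}_{t+1}^\top\ve{x})-y = (u_{t+1}(\ve{w}_{t+1}^\top\ve{x})-u_t(\ve{w}_{t+1}^\top\ve{x})) + (u_t(\ve{w}_{t+1}^\top\ve{x})-y)$ and taking $\expect_S$, the first part is exactly $L_{t+1}$, the second is exactly the first summand of $Q_{t+1}$, and the term $D_{U_t}(z'\|z'')$ contributes $\properlossplus{t+1}{t}(S,\ve{w}_{t+1})$. Collecting this with the $-(\nicefrac{N_{t-1}}{n_t}-1)\,\properlossplus{t+1}{t}$ correction that is built into $Q_{t+1}$, the three-point identity becomes
\[
\frac{N_{t-1}}{n_t}\,\properlossplus{t+1}{t}(S,\ve{w}_{t+1}) \;=\; \properlossplus{t}{t}(S,\ve{w}_{t}) - \expect_{S}\!\big[D_{U_t}(\ve{w}_t^\top\ve{x}\,\|\,u_t^{-1}\!\circ u_{t+1}(\ve{w}_{t+1}^\top\ve{x}))\big] - L_{t+1} - Q_{t+1}.
\]

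Finally I would upgrade the left-hand side to $\properlossplus{t+1}{t+1}(S,\ve{w}_{t+1})$ at the price of an inequality, using the local square-loss sandwich \eqref{propBREG1} of Lemma~\ref{lemlip}: since $u_{t+1}$ is $(n_t,N_t)$-Lipschitz and $u_t$ is $(n_{t-1},N_{t-1})$-Lipschitz, $D_{U_{t+1}^\star}(p\|p')\le \tfrac{(p-p')^2}{2n_t}$ and $D_{U_t^\star}(p\|p')\ge \tfrac{(p-p')^2}{2N_{t-1}}$, hence $D_{U_{t+1}^\star}(p\|p')\le \tfrac{N_{t-1}}{n_t}\,D_{U_t^\star}(p\|p')$ pointwise; taking $p=y$, $p'=u_{t+1}(\ve{w}_{t+1}^\top\ve{x})$ and $\expect_S$ gives $\properlossplus{t+1}{t+1}(S,\ve{w}_{t+1})\le \tfrac{N_{t-1}}{n_t}\,\properlossplus{t+1}{t}(S,\ve{w}_{t+1})$, and substituting into the displayed identity yields the statement. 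The only genuinely fiddly part is the index bookkeeping for the Lipschitz windows --- $u_{t+1}$ uses $(n_t,N_t)$ while $u_t$ uses $(n_{t-1},N_{t-1})$, which is precisely why the amplification constant is $\nicefrac{N_{t-1}}{n_t}$, exactly the quantity that $Q_{t+1}$ is designed to cancel --- together with tracking the signs in the three-point term so that $L_{t+1}$ and $Q_{t+1}$ emerge with the stated signs; no assumption on $\mathcal{D}$, on Bayes rule, or on $\ve{w}$ enters, as the whole statement is an algebraic decomposition resting only on \eqref{propBTE} and \eqref{propBREG1}.
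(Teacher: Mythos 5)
Your proof is correct and follows essentially the same route as the paper. You apply the Bregman three-point property to the same triple of points and extract the same $\Delta_{t+1} = L_{t+1} + F_{t+1}$ split, then use the Lipschitz sandwich of Lemma~\ref{lemlip} to derive $\properlossplus{t+1}{t+1}\le (N_{t-1}/n_t)\,\properlossplus{t+1}{t}$ (Lemma~\ref{lemapproxab} in the paper); the only cosmetic difference is that you invoke the three-point property with generator $U_t$ directly, whereas the paper invokes it with $U_t^\star$ and then applies dual symmetry \eqref{propBDS} to rewrite the cross term and the $D_{U_t^\star}$-divergence in the $U_t$ domain --- these are the same identity read in opposite directions, and your sign bookkeeping for the residual term matches the paper's.
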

\begin{proof}
We have the following derivations:
\begin{eqnarray}
\properlossplus{t}{t}(S,\ve{w}_{t}) & = &
                                          \expect_{{S}}[D_{U_t^\star}(y
\| u_t(\ve{w}_{t}^\top\ve{x}))]\nonumber\\
  & = & \expect_{{S}}[D_{U_t^\star}(y
\| u_{t+1}(\ve{w}_{t+1}^\top\ve{x}))] + \expect_{{S}}[D_{U_t^\star}(u_{t+1}(\ve{w}_{t+1}^\top\ve{x})
        \|u_t(\ve{w}_{t}^\top\ve{x}))]\nonumber\\
  & & + \expect_{{S}}[((U_t^\star)'(u_t(\ve{w}_{t}^\top\ve{x})) -
      (U_t^\star)'(u_{t+1}(\ve{w}_{t+1}^\top\ve{x})))\cdot
      (u_{t+1}(\ve{w}_{t+1}^\top\ve{x}) - y)]\label{useBTE1}\\
  & = & \properlossplus{t+1}{t}(S,\ve{w}_{t+1}) + \expect_{{S}}[D_{U_t}(
        \ve{w}_{t}^\top\ve{x}\| u^{-1}_{t} \circ u_{t+1}(\ve{w}_{t+1}^\top\ve{x}))]\nonumber\\
  & & + \underbrace{\expect_{{S}}[(\ve{w}_{t}^\top\ve{x} -
      u^{-1}_{t}\circ u_{t+1}(\ve{w}_{t+1}^\top\ve{x}))\cdot
      (u_{t+1}(\ve{w}_{t+1}^\top\ve{x}) - y)]}_{\defeq \Delta_{t+1}}\label{useBINVBDS1}.
\end{eqnarray}
\eqref{useBTE1} follows from the Bregman triangle equality
\eqref{propBTE}. \eqref{useBINVBDS1} follows from $(U_t^\star)' \defeq
u_t^{-1}$ and \eqref{propBDS}. Reordering, we get:
\begin{eqnarray}
  \properlossplus{t+1}{t}(S,\ve{w}_{t+1}) & = &
                                              \properlossplus{t}{t}(S,\ve{w}_{t})
                                              - \expect_{{S}}[D_{U_t}(
        \ve{w}_{t}^\top\ve{x}\| u^{-1}_{t} \circ
                                              u_{t+1}(\ve{w}_{t+1}^\top\ve{x}))]
                                              - \Delta_{t+1},\label{useBTE2}
\end{eqnarray}
and we further split $\Delta_{t+1}$ in two: $\Delta_{t+1} \defeq
F_{t+1} + L_{t+1}$, where
\begin{eqnarray}
  F_{t+1} & \defeq & \expect_{{S}}[(\ve{w}_{t}^\top\ve{x} -
      u^{-1}_{t}\circ u_{t+1}(\ve{w}_{t+1}^\top\ve{x}))\cdot
      (u_{t}(\ve{w}_{t+1}^\top\ve{x}) - y)]\label{defFT}, \\
  L_{t+1}  & \defeq & \expect_{{S}}[(\ve{w}_{t}^\top\ve{x} -
      u^{-1}_{t}\circ u_{t+1}(\ve{w}_{t+1}^\top\ve{x}))\cdot
      (u_{t+1}(\ve{w}_{t+1}^\top\ve{x}) - u_{t}(\ve{w}_{t+1}^\top\ve{x}))]\label{defLT}.
\end{eqnarray}
We now have the following Lemma.
\begin{lemma}\label{lemapproxab}
The following holds for any $t\geq 0$:
\begin{eqnarray}
\properlossplus{t+1}{t+1}(S,\ve{w}_{t+1}) & \leq & \frac{N_{t-1}}{n_t} \cdot
                                                   \properlossplus{t+1}{t}(S,\ve{w}_{t+1}).
\end{eqnarray}
\end{lemma}
\begin{proof}
We use Lemma
\ref{lemlip} and we get:
\begin{eqnarray}
D_{U^*_{t+1}}(p
\| p') & \leq & \frac{1}{2n_t}\cdot (p - p')^2\nonumber\\
 & \leq & \frac{N_{t-1}}{n_t} \cdot D_{U^*_{t}}(p
\| p')\:\:,
\end{eqnarray}
from which we just compute the expectation in
$\properlossplus{t+1}{.}(S,\ve{w}_{t+1}) $ and get the result as claimed.
\end{proof}
Putting altogether \eqref{useBTE2}, \eqref{defFT}, \eqref{defLT} and
Lemma \ref{lemapproxab} yields, $\forall t\geq 1$,
\begin{eqnarray}
\properlossplus{t+1}{t+1}(S,\ve{w}_{t+1}) & \leq & 
                                                   \properlossplus{t+1}{t}(S,\ve{w}_{t+1})
                                                   +
                                                   \left(\frac{N_{t-1}}{n_t} -1\right)\cdot
                                                   \properlossplus{t+1}{t}(S,\ve{w}_{t+1})\nonumber\\
& = & \properlossplus{t}{t}(S,\ve{w}_{t})
                                              - \expect_{{S}}[D_{U_t}(
        \ve{w}_{t}^\top\ve{x}\| u^{-1}_{t} \circ
                                              u_{t+1}(\ve{w}_{t+1}^\top\ve{x}))]
                                              - L_{t+1} \nonumber\\
      & & - \left(                F_{t+1} -                              \left(\frac{N_{t-1}}{n_t} -1\right)\cdot
                                                   \properlossplus{t+1}{t}(S,\ve{w}_{t+1})\right),
\end{eqnarray}
as claimed. This ends the proof of Theorem \ref{thBTT}.
\end{proof}
Last, we provide a simple result about the gradient step in Step 1.
\begin{lemma}\label{gradUPDT}
Let $\hve{\mu}_{y} \defeq \expect_{{S}}[y\cdot \ve{x}]$ and $\hve{\mu}_{t} \defeq
\expect_{{S}}[\hat{y}_{t} \cdot \ve{x}]$. The gradient update for
\eqref{pbregW} in Step 1 of the \clu~
yields the following update to get $\ve{w}_{t+1}$, for some learning
rate $\upeta > 0$:
\begin{eqnarray}
\ve{w}_{t+1} & \leftarrow & \ve{w}_{t} + \upeta\cdot(\hve{\mu}_{y} -
\hve{\mu}_{t})\:\:. \label{gupd}
\end{eqnarray}
\end{lemma}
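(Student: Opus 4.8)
The plan is a direct differentiation of the objective in \eqref{pbregW}. Fix the iteration index $t$ and write $g(\ve{w}) \defeq \expect_{S}\left[D_{U_t}(\ve{w}^\top\ve{x}\,\|\,u_t^{-1}(y))\right]$, the quantity whose gradient step produces $\ve{w}_{t+1}$. First I would expand the Bregman divergence with the definition \eqref{eqBreg} and use that $U_t' = u_t$ (immediate from \eqref{defU}), so that in particular $U_t'(u_t^{-1}(y)) = y$. This turns the integrand into
\[
D_{U_t}(\ve{w}^\top\ve{x}\,\|\,u_t^{-1}(y)) \;=\; U_t(\ve{w}^\top\ve{x}) - U_t(u_t^{-1}(y)) - \bigl(\ve{w}^\top\ve{x} - u_t^{-1}(y)\bigr)\, y .
\]
Only the terms $U_t(\ve{w}^\top\ve{x})$ and $-(\ve{w}^\top\ve{x})\,y$ depend on $\ve{w}$, so the chain rule gives $\nabla_{\ve{w}} D_{U_t}(\ve{w}^\top\ve{x}\,\|\,u_t^{-1}(y)) = (u_t(\ve{w}^\top\ve{x}) - y)\,\ve{x}$, and hence $\nabla g(\ve{w}) = \expect_{S}[(u_t(\ve{w}^\top\ve{x}) - y)\,\ve{x}]$.

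Next I would evaluate this gradient at the current iterate $\ve{w}_t$. The key observation is that, by construction of \fit{} (step [1], which interpolates linearly through the points $(\ve{w}_t^\top\ve{x}_i, \hat{y}_{ti})$ when producing $u_t$), we have $u_t(\ve{w}_t^\top\ve{x}_i) = \hat{y}_{ti}$ for every $i\in[m]$. Therefore $\expect_{S}[u_t(\ve{w}_t^\top\ve{x})\,\ve{x}] = \expect_{S}[\hat{y}_t\,\ve{x}] = \hve{\mu}_{t}$, while $\expect_{S}[y\,\ve{x}] = \hve{\mu}_{y}$ by definition, so $\nabla g(\ve{w}_t) = \hve{\mu}_{t} - \hve{\mu}_{y}$. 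A single gradient-descent step of size $\upeta$ from $\ve{w}_t$ then yields $\ve{w}_{t+1} = \ve{w}_t - \upeta\,\nabla g(\ve{w}_t) = \ve{w}_t + \upeta(\hve{\mu}_{y} - \hve{\mu}_{t})$, which is exactly \eqref{gupd}; taking $\upeta = 1$ and writing $\expect_{S}$ as $\tfrac1m\sum_i$ recovers the \slisotron{} update in Algorithm~0.

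The only delicate point I expect is the smoothness justification underlying the chain-rule step: $u_t$ is merely piecewise linear (see \fit) and thus fails to be differentiable at its breakpoints, so one cannot blindly differentiate $u_t$. However, this is not actually needed: its antiderivative $U_t$ is everywhere $C^1$ with $U_t' = u_t$ (continuity of the integrand suffices), which is all the computation above uses. In the measure-zero configurations where some $\ve{w}^\top\ve{x}_i$ happens to coincide with a breakpoint, one simply works with the well-defined one-sided derivatives (equivalently, a subgradient), and this does not affect the identity evaluated at $\ve{w}_t$. Everything else is a routine expectation computation.
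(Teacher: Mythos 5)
Your proof is correct and takes essentially the same approach as the paper: differentiate the Bregman objective to get $\nabla g(\ve{w}) = \expect_S[(u_t(\ve{w}^\top\ve{x}) - y)\,\ve{x}]$, identify this at $\ve{w}_t$ with $\hve{\mu}_t - \hve{\mu}_y$ via $u_t(\ve{w}_t^\top\ve{x}_i) = \hat{y}_{ti}$ from \fit, and read off the gradient step. You merely spell out what the paper compresses into ``we trivially have,'' and your remarks on $U_t$ being $C^1$ despite $u_t$ being only piecewise linear and on the implicit identification of $\hve{\mu}_t$ are welcome clarifications, not departures.
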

\begin{proof}
We trivially have $\nabla_{\ve{w}}
\expect_{{S}}[D_{U_{t}}(\ve{w}^\top\ve{x}\|u^{-1}_{t}(y))] =
\expect_{{S}}[u_t(\ve{w}^\top\ve{x})\cdot \ve{x} - y \cdot
\ve{x}] = \expect_{{S}}[u_t(\ve{w}^\top\ve{x})\cdot \ve{x}]
- \hve{\mu}_{y}$, from which we get, for some $\upeta > 0$ the
gradient update:
\begin{eqnarray}
\ve{w}_{t+1} & \leftarrow & \ve{w}_{t} - \upeta\cdot \nabla_{\ve{w}}
\expect_{{S}}[D_{U_{t}}(\ve{w}^\top\ve{x}\|u^{-1}_{t}(y))]_{|
                           \ve{w} = \ve{w}_t}
= \ve{w}_{t} + \upeta\cdot(\hve{\mu}_{y} -
\hve{\mu}_{t})\:\:, \label{gupl}
\end{eqnarray}
as claimed.
\end{proof}

\subsection{Proof of Theorem \ref{thCLU}}

\begin{lemma}\label{lir1}
    $L_{t+1} \geq 0$, $\forall t$.
\end{lemma}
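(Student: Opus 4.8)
The plan is to recognise $L_{t+1}$ as exactly the first-order optimality inequality of the convex program solved in Step 3, evaluated at one well-chosen feasible point. To set this up I would first unwind $L_{t+1}$ using the identities that \fit~builds into the iterates: since Step 4 linearly interpolates the points $(\ve{w}_{t+1}^\top\ve{x}_i,\hat{y}_{(t+1)i})$ we have $u_{t+1}(\ve{w}_{t+1}^\top\ve{x}_i)=\hat{y}_{(t+1)i}$, and, shifting the iteration index, $u_{t}(\ve{w}_{t}^\top\ve{x}_i)=\hat{y}_{ti}$, hence $\ve{w}_{t}^\top\ve{x}_i=u_t^{-1}(\hat{y}_{ti})$. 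Using moreover $(U_t^\star)'=u_t^{-1}$ and that the constraints of Step 3 force $\hat{y}_{(t+1)i}\in[0,1]$, the per-example summand of $L_{t+1}$ becomes $(u_t^{-1}(\hat{y}_{ti})-u_t^{-1}(\hat{y}_{(t+1)i}))\cdot(\hat{y}_{(t+1)i}-u_t(\ve{w}_{t+1}^\top\ve{x}_i))$, i.e.
\[
L_{t+1}=\expect_{S}\bigl[\,(u_t^{-1}(\hat{y}_{ti})-u_t^{-1}(\hat{y}_{(t+1)i}))\,(\hat{y}_{(t+1)i}-u_t(\ve{w}_{t+1}^\top\ve{x}_i))\,\bigr].
\]

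Next I would exploit that $\hat{\ve{y}}_{t+1}$ minimises the convex map $\hat{\ve{y}}\mapsto\expect_{S}[D_{U_t^\star}(\hat{y}_i\|\hat{y}_{ti})]$ over the polyhedron $\mathcal{C}$ defined by the increment constraints together with $\hat{y}_1\geq 0,\ \hat{y}_m\leq 1$ in Step 3 --- that is, $\hat{\ve{y}}_{t+1}$ is the $D_{U_t^\star}$-Bregman projection of $\hat{\ve{y}}_t$ onto $\mathcal{C}$. Since the derivative in $\hat{y}_i$ of $D_{U_t^\star}(\hat{y}_i\|\hat{y}_{ti})$ is $(U_t^\star)'(\hat{y}_i)-(U_t^\star)'(\hat{y}_{ti})=u_t^{-1}(\hat{y}_i)-u_t^{-1}(\hat{y}_{ti})$, the variational inequality for the minimiser is: for every $\ve{y}\in\mathcal{C}$,
\[
\expect_{S}\bigl[\,(u_t^{-1}(\hat{y}_{(t+1)i})-u_t^{-1}(\hat{y}_{ti}))\,(y_i-\hat{y}_{(t+1)i})\,\bigr]\ \geq\ 0 .
\]
Instantiating this at $\ve{y}^{\circ}$ with $y^{\circ}_i\defeq u_t(\ve{w}_{t+1}^\top\ve{x}_i)$, and noting that negating both factors of the summand leaves the product unchanged, the left-hand side is exactly $L_{t+1}$, which yields $L_{t+1}\geq 0$. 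It then only remains to verify $\ve{y}^{\circ}\in\mathcal{C}$: its coordinates are non-decreasing in $i$ (Step 2 orders the $\ve{w}_{t+1}^\top\ve{x}_i$ and $u_t$ is increasing) and lie in $[0,1]$ because $\mathrm{Im}\,u_t=[0,1]$ (Definition \ref{defLINK}), so the box constraints hold, while \eqref{lin1} of Lemma \ref{lemlip} gives $y^{\circ}_{i+1}-y^{\circ}_i\in[n_{t-1}\ve{w}_{t+1}^\top(\ve{x}_{i+1}-\ve{x}_i),\,N_{t-1}\ve{w}_{t+1}^\top(\ve{x}_{i+1}-\ve{x}_i)]$.

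I expect this last feasibility step to be the only real obstacle. It requires the Lipschitz window $[n_{t-1},N_{t-1}]$ carried by $u_t$ to lie inside the window $[n_t,N_t]$ imposed on $\hat{\ve{y}}_{t+1}$, i.e.\ $n_t\leq n_{t-1}$ and $N_{t-1}\leq N_t$; this holds automatically when $n_t,N_t$ do not depend on $t$, and is in any event the natural compatibility condition keeping $\mathcal{C}$ non-empty and consistent with the previous iterate, so I would carry it as a standing assumption or bake it into the choice of $(n_t,N_t)$. A minor preliminary point is to fix, once and for all, an extension of $u_t$ (hence of $u_t^{-1}$) to a strictly increasing map on all of $\mathbb{R}$, so that $u_t(\ve{w}_{t+1}^\top\ve{x}_i)$ is well defined even if the new score leaves $[\zmin{t},\zmax{t}]$; the piecewise-affine construction in \fit~already supplies one. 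With $\ve{y}^{\circ}\in\mathcal{C}$ in hand the variational inequality closes the argument immediately.
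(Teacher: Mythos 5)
The proposal is correct, and it takes a genuinely cleaner route than the paper. Where the paper writes out the Lagrangian and the full KKT system of the Step-3 program, defines the Abel-summed quantities $\sigma_i\defeq\sum_{j\ge i}\omega_j(u_t^{-1}(\hat y_{tj})-u_t^{-1}(\hat y_j))$, and runs a case analysis over which of the interval/boundary constraints is active at each index, you recognise $L_{t+1}$ directly as the first-order variational inequality $\langle\nabla F(\hat{\ve y}_{t+1}),\ve y-\hat{\ve y}_{t+1}\rangle\ge 0$ for the constrained convex minimiser, instantiated at the single feasible point $y^\circ_i=u_t(\ve w_{t+1}^\top\ve x_i)$. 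Since the KKT system is precisely what establishes that variational inequality over a polyhedron, the two arguments are equivalent at bottom, but yours compresses the derivation and makes the underlying Bregman-projection structure transparent. You also correctly surface the one hypothesis that is needed: feasibility of $\ve y^\circ$ asks that the Lipschitz window $[n_{t-1},N_{t-1}]$ carried by $u_t$ (Lemma \ref{lemlip}) sit inside the window $[n_t,N_t]$ imposed in Step 3, i.e.\ $n_t\le n_{t-1}$ and $N_{t-1}\le N_t$. The paper never states this, but silently uses it: in its Cases 1.1 and 1.2 it cites Lemma \ref{lemlip} to bound the increments of $u_t$ by $N_t(\cdot)$ and $n_t(\cdot)$, whereas the lemma only yields $N_{t-1}(\cdot)$ and $n_{t-1}(\cdot)$, so the step is licit exactly under the monotonicity you flag. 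Making that compatibility condition explicit is an improvement; otherwise your proof and the paper's are the same argument in primal (variational-inequality) versus dual (KKT) clothing.
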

\begin{proof}
We show that the Lemma is a consequence of the fitting of $u_{t+1}$ by \fit~from Step 3 in \clu.  
The proof elaborates on the proofsketch of Lemma 2 of \cite{kkksEL}.
Denote for short $N_i \defeq N_t \ve{w}_{t+1}^\top \ve{x}_{i}$ and $n_i
\defeq n_t \ve{w}_{t+1}^\top \ve{x}_{i}$.
We introduce two $(m-1)$-dim vectors of Lagrange multipliers $\ve{{\lambdaleft}}$
and $\ve{{\lambdaright}}$ for the top left and right interval
constraints and two multipliers $\rholeft$
and $\rhoright$ for the additional bounds on $\hat{y}_{1} $ and $\hat{y}_{m} $ respectively. This gives the
Lagrangian,
\begin{eqnarray}
{\mathcal{L}}(\hve{y},S|\ve{{\lambdaleft}}, \ve{{\lambdaright}}, \rholeft, \rhoright) & \defeq & \expect_{{\mathcal{S}}}[
D_{U^\star_{t}}(\hat{y}\|y_t)] + \sum_{i=1}^{m-1} {\lambdaleft}_i \cdot (\hat{y}_i
                                                                          - \hat{y}_{i+1} + n_{i+1} - n_{i}) \nonumber\\
  & & + \sum_{i=1}^{m-1}{\lambdaright}_i \cdot
(\hat{y}_{i+1} - \hat{y}_i - N_{i+1} + N_i) + \rholeft \cdot
       - \hat{y}_1  + \rhoright \cdot
(\hat{y}_m-1) \:\:,\nonumber
\end{eqnarray}
where we let $q_i\defeq u_{t}(\ve{w}_{t+1}^\top \ve{x}_{i})$ for
readability and we adopt the convention of \citet[Chapter 5]{bvCO} for constraints. Letting $\ve{\omega} \in \bigtriangleup_m$ (the $m$-dim probability simplex) denote the weight
vector of the examples ins $S$, we get
the following KKT conditions for the optimum:
\begin{eqnarray}
\omega_i(u^{-1}_{t}(\hat{y}_i) - u^{-1}_{t}(\hat{y}_{ti})) + {\lambdaleft}_i -
{\lambdaleft}_{(i-1)} + {\lambdaright}_{(i-1)} - {\lambdaright}_{i} & = & 0\:\:, \forall i = 2, 3,
..., m-1\:\:,\label{eq1}\\
\omega_i (u^{-1}_{t}(\hat{y}_1) - u^{-1}_{t}(\hat{y}_{t1})) + {\lambdaleft}_{1} -
{\lambdaright}_{1} -\rholeft & = & 0\:\:, \label{eq2}\\
\omega_i (u^{-1}_{t}(\hat{y}_m) - u^{-1}_{t}(\hat{y}_{tm})) - {\lambdaleft}_{(m-1)} +
{\lambdaright}_{(m-1)} +
  {\rhoright} & = & 0\:\:, \label{eq3}\\
\hat{y}_{i+1} - \hat{y}_i & \in & [n_{i+1} - n_i, N_{i+1} -
 N_i]\:\:, \forall i \in [m-1]\:\:,\\
\hat{y}_1 & \geq & 0\:\:,\\
\hat{y}_m & \leq & 1\:\:,\\
{\lambdaleft}_i \cdot  (\hat{y}_i
- \hat{y}_{i+1} +n_{i+1} - n_{i})  & = & 0\:\:, \forall i \in [m-1]\:\:,\label{kktlambda}\\
{\lambdaright}_i \cdot (\hat{y}_{i+1} - \hat{y}_i - N_{i+1} + N_i)  & = & 0\:\:,
\forall i \in [m-1]\:\:,\label{kktrho}\\
\rholeft \cdot
       - \hat{y}_1 & = & 0\:\:, \label{kktpil}\\
\rhoright \cdot  (1 - \hat{y}_m)  & = & 0\:\:, \label{kktpir}\\
\ve{{\lambdaleft}}, \ve{{\lambdaright}} & \succeq &
                                                    \ve{0}\:\:,\nonumber\\
\rholeft, \rhoright & \geq & 0 \nonumber\:\:.
\end{eqnarray}
For $i = 1, 2, ..., m$, we define 
\begin{eqnarray}
\sigma_i & \defeq & \sum_{j=i}^m \omega_j  (u^{-1}_{t}(\hat{y}_{tj}) -
u^{-1}_{t}(\hat{y}_j))\nonumber.
\end{eqnarray} 
We note that by summing the corresponding subset of (\ref{eq1} ---
\ref{eq3}), we get 
\begin{eqnarray}
\sigma_i & = & {\lambdaright}_{(i-1)} - {\lambdaleft}_{(i-1)} +\rhoright \:\:, \forall i \in \{2, 3,
..., m\}\:\:,\\
\sigma_1 & = & - \rholeft + \rhoright\:\:.\label{defsig2}
\end{eqnarray}
Letting $\hat{y}_0$ and $q_0$ denote
any identical reals, we obtain:
\begin{eqnarray}
\sum_{i=1}^{m} \omega_i  (u^{-1}_{t}(\hat{y}_{ti}) -
u^{-1}_{t}(\hat{y}_i)) \cdot (\hat{y}_i - q_i)
& = &
\sum_{i=1}^{m} \sigma_i \cdot ((\hat{y}_i - q_i)-(\hat{y}_{(i-1)} - q_{(i-1)})) \:\:,\label{prodlem}
\end{eqnarray}
which we are going to show is non-negative, which is the statement of
the Lemma, in two steps:\\
\noindent \textbf{Step 1} -- We show, for any $i\geq 1$,
\begin{eqnarray}
(\sigma_i - \rhoright)\cdot ((\hat{y}_i - q_i)-(\hat{y}_{(i-1)} -
  q_{(i-1)})) & \geq & 0.\label{bfinRHO1}
\end{eqnarray}
We have four cases:\\
\noindent \textbf{Case 1.1}  $i>1$, $\sigma_i - \rhoright> 0$. In this
case,
${\lambdaright}_{(i-1)} > {\lambdaleft}_{(i-1)}$, implying ${\lambdaright}_{(i-1)} > 0$ and so from
eq. (\ref{kktrho}), $\hat{y}_{i} - \hat{y}_{(i-1)} - N_{i} + N_{(i-1)} =
0$, and so $\hat{y}_{i} - \hat{y}_{(i-1)} = N_t \ve{w}_{t+1}^\top
(\ve{x}_{i}-\ve{x}_{(i-1)})$. Lemma \ref{lemlip} applied to $u_t$ gives
\begin{eqnarray}
u_{t}(\ve{w}_{t+1}^\top \ve{x}_{i}) - u_{t}(\ve{w}_{t+1}^\top
\ve{x}_{(i-1)}) & \leq & N_t \ve{w}_{t+1}^\top
(\ve{x}_{i}-\ve{x}_{(i-1)})\:\:,
\end{eqnarray}
and so $\hat{y}_{i} - \hat{y}_{(i-1)} \geq q_i - q_{(i-1)}$, that is, $(\hat{y}_{i} - q_i) -
(\hat{y}_{(i-1)}  - q_{(i-1)}) \geq 0$.\\
\noindent \textbf{Case 1.2}  $i>1$, $\sigma_i - \rhoright< 0$. In this case, 
${\lambdaleft}_{(i-1)} > {\lambdaright}_{(i-1)}$, implying ${\lambdaleft}_{(i-1)} > 0$, and so from
eq. (\ref{kktlambda}), $\hat{y}_{(i-1)}
- \hat{y}_{i} + n_{i} - n_{(i-1)} = 0$ and so $\hat{y}_{i} - \hat{y}_{(i-1)} = n_t \ve{w}_{t+1}^\top
(\ve{x}_{i}-\ve{x}_{(i-1)})$. Lemma \ref{lemlip} applied to $u_t$ also gives
\begin{eqnarray}
u_{t}(\ve{w}_{t+1}^\top \ve{x}_{i}) - u_{t}(\ve{w}_{t+1}^\top
\ve{x}_{(i-1)}) & \geq & n_t \ve{w}_{t+1}^\top
(\ve{x}_{i}-\ve{x}_{(i-1)})\:\:,
\end{eqnarray}
and so $q_i - q_{(i-1)} \geq \hat{y}_{i} - \hat{y}_{(i-1)}$, or,
equivalently, $(\hat{y}_{i} -  q_{i})-(\hat{y}_{(i-1)} - q_{(i-1)}) \leq
0$.\\
\noindent \textbf{Case 1.3}  $i=1$, $\rholeft> 0$. The case $i=1$
yields $\sigma_1 - \rhoright = -\rholeft$. It comes from KKT
condition \eqref{kktpil} that $\hat{y}_1 = 0$, and since $q_1 \geq 0$
(because of \fit),
we get $\sigma_1 - \rhoright < 0, \hat{y}_1 -
q_1 \leq 0$ and since $\hat{y}_{0} = q_{0}$, we get the statement of \eqref{bfinRHO1}.\\
\noindent \textbf{Case 1.4}  $i=1$, $\rholeft= 0$. We obtain $\sigma_1 -
\rhoright = 0$ and so \eqref{bfinRHO1} immediately holds.\\

\noindent \textbf{Step 2} -- We sum \eqref{bfinRHO1} for $i\in [m]$, getting
\begin{eqnarray}
\sum_{i=1}^{m} \sigma_i \cdot ((\hat{y}_i - q_i)-(\hat{y}_{(i-1)} -
  q_{(i-1)})) & \geq & \sum_{i=1}^{m} \rhoright\cdot ((\hat{y}_i - q_i)-(\hat{y}_{(i-1)} -
  q_{(i-1)}))\nonumber\\
& & = \rhoright\cdot (\hat{y}_m - q_m).\label{bfinRHO3}
\end{eqnarray}
We show that the right-hand side of \eqref{bfinRHO3} is
non-negative. Indeed, it is immediate if $\rhoright= 0$, and if
$\rhoright> 0$, then it comes from KKT
condition \eqref{kktpir} that $\hat{y}_1 = 1$, and since $q_m \leq 1$
(because of \fit), we get $\rhoright\cdot (\hat{y}_m -
q_m) = \rhoright\cdot (1 -
q_m) \geq 0$.\\

\noindent  To summarize our two steps, we have shown that
\begin{eqnarray}
\sum_{i=1}^{m} \sigma_i \cdot ((\hat{y}_i - q_i)-(\hat{y}_{(i-1)} -
  q_{(i-1)})) & \geq & \rhoright\cdot (\hat{y}_m - q_m) \geq 0,\nonumber,
\end{eqnarray}
which brings from \eqref{prodlem} that 
\begin{eqnarray}
\expect_{{\mathcal{S}}}[(u^{-1}_{t}(\hat{y}_{t}) -
u^{-1}_{t}(\hat{y}_{t+1})) \cdot (\hat{y}_{t+1} - u_{t}(\ve{w}_{t+1}^\top \ve{x}))] & \geq & 0\:\:,
\end{eqnarray}
which after using the fact that \fit~guarantees $\hat{y}_{t+1} =
u_{t+1}(\ve{w}_{t+1}^\top\ve{x}), \forall t$, yields
\begin{eqnarray}
\expect_{{\mathcal{S}}}[(\ve{w}_{t}^\top\ve{x} -
u^{-1}_{t} \circ u_{t+1}(\ve{w}_{t+1}^\top\ve{x})) \cdot (u_{t+1}(\ve{w}_{t+1}^\top\ve{x}) - u_{t}(\ve{w}_{t+1}^\top \ve{x}))] & \geq & 0\:\:,
\end{eqnarray}
which is the statement of Lemma \ref{lir1}.
\end{proof}
We recall
\begin{eqnarray}
\hve{\mu}_{y} & \defeq & \expect_{{\mathcal{S}}}[y\cdot \ve{x}],\\
\hve{\mu}_{t} & \defeq &
\expect_{{\mathcal{S}}}[\hat{y}_{t} \cdot \ve{x}], \forall t\geq 1,
\end{eqnarray}
Finally, we let
\begin{eqnarray}
p^*_t & \defeq & \max\{\expect_{{S}}[y],
                 \expect_{{S}}[u_t(\ve{w}_{t+1}^\top\ve{x})]\} \:\:
                 (\in [0,1]).
\end{eqnarray}
\begin{lemma}\label{lemf}
Fix any lowerbound $\delta_t > 0$ such that
\begin{eqnarray}
\frac{\|\hve{\mu}_{y} -
\hve{\mu}_{t}\|_2 }{X} & \geq & 2\sqrt{p^*_t \delta_t}.\label{condMUX}
\end{eqnarray}
Fix any $\gamma_t$ satisfying:
\begin{eqnarray}
\gamma_t & \in & \left[ 0, \sqrt{\frac{\delta_t}{2(2+\delta_t)}} \right],
\end{eqnarray}
and learning rate
\begin{eqnarray}
\upeta & = & \frac{1-\gamma_t}{2 N_t X^2} \cdot \left( 1-
      \frac{\delta_t(2+\delta_t)}{(1+\delta_t)^2} \cdot \frac{p^*_t
X}{\|\hve{\mu}_{y} -
\hve{\mu}_{t}\|_2} \right).
\end{eqnarray}
Suppose $\alpha_t, \beta_t \leq
\delta_t /(1+\delta_t)$ and 
\begin{eqnarray}
\frac{N_t}{n_t}, \frac{N_{t-1}}{n_t} & \leq & 1 + \frac{\delta_t}{1+\delta_t}.
\end{eqnarray}
Then
\begin{eqnarray}
F_{t+1} & \geq & \left(\frac{N_{t-1}}{n_t}
  -1\right)\cdot\frac{p^*_t}{2n_t}+ \frac{p^*_t \delta_t}{n_t(1+\delta_t)}\label{eqBINFF2}.
\end{eqnarray}
\end{lemma}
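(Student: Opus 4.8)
The idea is to peel off from $F_{t+1}$ a ``gradient correlation'' main term of order $\upeta\|\hve{\mu}_{y}-\hve{\mu}_{t}\|_2^2$ and to control the residuals with the Lipschitz estimates of Lemma~\ref{lemlip} and the multiplicative stability bound of Lemma~\ref{lab}; the particular values of $\upeta$, $\gamma_t$ and the $\delta_t$-regime inequality \eqref{condMUX} are exactly what is needed for that main term to dominate. Write $\ve{d}\defeq\hve{\mu}_{y}-\hve{\mu}_{t}$: by Lemma~\ref{gradUPDT}, $\ve{w}_{t+1}^\top\ve{x}-\ve{w}_{t}^\top\ve{x}=\upeta\,\ve{d}^\top\ve{x}$, and unwinding $\hve{\mu}_{t}=\expect_{S}[u_t(\ve{w}_{t}^\top\ve{x})\,\ve{x}]$ gives $\expect_{S}[\ve{x}\,(u_t(\ve{w}_{t}^\top\ve{x})-y)]=-\ve{d}$; moreover $\upeta>0$ under the stated hypotheses since, by \eqref{condMUX}, its correction factor lies strictly below $1$.

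First I would insert $\pm\,\ve{w}_{t+1}^\top\ve{x}$ in the first factor of $F_{t+1}$ to write $F_{t+1}=A+B$ with
\[
A\defeq\expect_{S}[(\ve{w}_{t}^\top\ve{x}-\ve{w}_{t+1}^\top\ve{x})(u_t(\ve{w}_{t+1}^\top\ve{x})-y)],\qquad B\defeq\expect_{S}[(\ve{w}_{t+1}^\top\ve{x}-u^{-1}_{t}\circ u_{t+1}(\ve{w}_{t+1}^\top\ve{x}))(u_t(\ve{w}_{t+1}^\top\ve{x})-y)].
\]
For $A$ I would split $u_t(\ve{w}_{t+1}^\top\ve{x})-y=(u_t(\ve{w}_{t+1}^\top\ve{x})-u_t(\ve{w}_{t}^\top\ve{x}))+(u_t(\ve{w}_{t}^\top\ve{x})-y)$; the second summand contributes $-\upeta\,\ve{d}^\top\expect_{S}[\ve{x}(u_t(\ve{w}_{t}^\top\ve{x})-y)]=\upeta\|\ve{d}\|_2^2$, while the first summand is $-\upeta\expect_{S}[(\ve{d}^\top\ve{x})(u_t(\ve{w}_{t+1}^\top\ve{x})-u_t(\ve{w}_{t}^\top\ve{x}))]\le 0$ (because $u_t$ is increasing and $\ve{w}_{t+1}^\top\ve{x}-\ve{w}_{t}^\top\ve{x}=\upeta\,\ve{d}^\top\ve{x}$) with magnitude at most $\upeta^2N_{t-1}X^2\|\ve{d}\|_2^2$ by the $N_{t-1}$-Lipschitz bound of Lemma~\ref{lemlip}, Cauchy--Schwarz and $\|\ve{x}_i\|_2\le X$. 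Hence $A\ge\upeta\|\ve{d}\|_2^2(1-\upeta N_{t-1}X^2)$.

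For $B$ I would rewrite the first factor as $u_t^{-1}(u_t(\ve{w}_{t+1}^\top\ve{x}))-u_t^{-1}(u_{t+1}(\ve{w}_{t+1}^\top\ve{x}))$ and bound its magnitude by $\tfrac{1}{n_{t-1}}|u_t(\ve{w}_{t+1}^\top\ve{x})-u_{t+1}(\ve{w}_{t+1}^\top\ve{x})|$ via Lemma~\ref{lemlip}. The hypotheses $\alpha_t,\beta_t\le\delta_t/(1+\delta_t)$ and $N_t/n_t\le 1+\delta_t/(1+\delta_t)$ squeeze the interval of Lemma~\ref{lab} into $[\tfrac{1}{1+\delta_t},\tfrac{1+2\delta_t}{1+\delta_t}]$, so $|u_{t+1}(\ve{w}_{t+1}^\top\ve{x})-u_t(\ve{w}_{t+1}^\top\ve{x})|\le\tfrac{\delta_t}{1+\delta_t}u_t(\ve{w}_{t+1}^\top\ve{x})$; with $|u_t(\ve{w}_{t+1}^\top\ve{x})-y|\le 1$ and $\expect_{S}[u_t(\ve{w}_{t+1}^\top\ve{x})]\le p_t^*$ this yields $|B|\le\tfrac{\delta_t p_t^*}{n_t(1+\delta_t)}$, the $n_{t-1}\to n_t$ trade being absorbed by $N_{t-1}/n_t\le 1+\delta_t/(1+\delta_t)$. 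Finally I would assemble $F_{t+1}\ge\upeta\|\ve{d}\|_2^2(1-\upeta N_{t-1}X^2)-|B|$, substitute the stated $\upeta$ while keeping $1-\upeta N_{t-1}X^2$ exact, and use $\|\ve{d}\|_2\ge 2\sqrt{p_t^*\delta_t}\,X$ from \eqref{condMUX}; the correction factor $\tfrac{\delta_t(2+\delta_t)}{(1+\delta_t)^2}\cdot\tfrac{p_t^*X}{\|\ve{d}\|_2}$ and the admissible range for $\gamma_t$ are calibrated so that the $p_t^*X/\|\ve{d}\|_2$ dependence cancels and the main term exceeds $(\tfrac{N_{t-1}}{n_t}-1)\tfrac{p_t^*}{2n_t}+\tfrac{p_t^*\delta_t}{n_t(1+\delta_t)}+|B|$, which is \eqref{eqBINFF2}.

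The main obstacle is this final balancing: it is a tight, constant-sensitive computation in which $1-\upeta N_{t-1}X^2$ must be kept as an exact quadratic in $\upeta$ (any crude ``$\ge$ constant'' step loses too much), while carefully tracking the ratios $N_{t-1}/N_t$, $N_{t-1}/n_t$ and $N_t/n_t$ --- all pinned by the hypotheses --- and checking that the leftover slack stays nonnegative for every $p_t^*\in(0,1]$ and $\delta_t>0$. Converting the multiplicative stability bound of Lemma~\ref{lab} into the additive bound on $|B|$ is comparatively routine but must be in place before this step.
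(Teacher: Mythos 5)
Your decomposition $F_{t+1}=A+B$ obtained by inserting $\pm\,\ve{w}_{t+1}^\top\ve{x}$ in the first factor is a clean reorganization of what the paper does with its two mean-value identities (their \eqref{f11}--\eqref{f12}); your $A$ captures the paper's $\upeta\|\ve{d}\|_2^2$ main term together with their $B$-style quadratic residual, and your $B$ corresponds to their $A$- and $C$-type terms involving $u_{t+1}-u_t$. The bounds you state for $A$ and for the magnitude of $B$ are each individually correct as stated. However, two genuine gaps remain. First, your bound on $|B|$ produces a $\tfrac{1}{n_{t-1}}$ factor (the co-Lipschitz constant of $u_t^{-1}$), and you want to replace it by $\tfrac{1}{n_{t}}$; the hypothesis $N_{t-1}/n_t\leq 1+f(\delta_t)$ only gives an upper bound on $n_{t-1}$ (via $n_{t-1}\leq N_{t-1}$), i.e.\ a \emph{lower} bound on $1/n_{t-1}$, so the claimed ``absorption'' does not follow from the stated hypotheses. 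The same mismatch appears in your main term, whose quadratic coefficient is $N_{t-1}X^2$ while the stated $\upeta$ has $N_t$ in its denominator, and nothing in the lemma controls $N_{t-1}/N_t$.

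Second, and more importantly, the final balancing --- ``the correction factor \ldots and the admissible range for $\gamma_t$ are calibrated so that \ldots the main term exceeds \eqref{eqBINFF2}'' --- is precisely the substantive content of the paper's argument. The paper does not simply plug in $\upeta$ and check a numerical inequality: it introduces a parameter $\varepsilon_t$, forms the explicit quadratic $J(\tilde\upeta)=-a\tilde\upeta^2+b\tilde\upeta+c$ with $\tilde\upeta=\upeta\|\ve{d}\|_2$, derives a sufficient condition on $\|\ve{d}\|_2$ (their \eqref{cond2211}), introduces $K_t$ and carries out a case analysis on the minimum defining $\varepsilon_t$, and finally fixes $K_t=(1+\delta_t/(1+\delta_t))^2$ and solves for the admissible $\gamma_t$. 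That machinery is where the $\delta_t$-regime condition and the specific constants in the lemma actually enter; asserting it ``cancels'' is not a proof of \eqref{eqBINFF2}, and a direct numerical check with, e.g., $\delta_t=1$, $p_t^*=1/4$, $n=N$, $\gamma_t=0$ shows your two displayed estimates alone do not close the inequality without extra work of the paper's kind.
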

\noindent \textbf{Remark}: it can be shown from \eqref{condMUX} (see also \ref{binfETA}) that
$\upeta$ belongs to the following interval:
\begin{eqnarray}
\upeta & \in & \frac{1-\gamma_t}{2 N_t X^2} \cdot \left( 1 - \frac{\sqrt{\delta_t p^*_t} (2+\delta_t)}{2(1+\delta_t)^2} \cdot \left[\sqrt{\delta_t p^*_t}, 1\right]\right).\nonumber
\end{eqnarray}
Also, since $\|\hve{\mu}_{y} -
\hve{\mu}_{t}\|_2\leq 2X$, \eqref{condMUX} implies 
\begin{eqnarray}
\delta_t & \leq & \frac{1}{p^*_t}. 
\end{eqnarray}
\begin{proof}
The following two facts are consequences of Lemmata \ref{gradUPDT}, \ref{lemlip} and the
continuity of $u_t$: $\forall i\in [m]$,
\begin{eqnarray}
\exists p_i \in [N^{-1}, n^{-1}] : u_t^{-1}\circ u_{t+1}(\ve{w}_{t+1}^\top\ve{x}_i) & = & u_t^{-1}\circ
                                                                                          u_{t}(\ve{w}_{t+1}^\top\ve{x}_i) \nonumber\\
  & & +
p_i \cdot
(u_{t+1}(\ve{w}_{t+1}^\top\ve{x}_i)-u_{t}(\ve{w}_{t+1}^\top\ve{x}_i))
      \nonumber\\
  & = & \ve{w}_{t+1}^\top\ve{x}_i+
p_i \cdot
(u_{t+1}(\ve{w}_{t+1}^\top\ve{x}_i)-u_{t}(\ve{w}_{t+1}^\top\ve{x}_i)) \:\:,\label{f11}\\
\exists r_i \in [n, N]: u_{t}
  (\ve{w}_{t+1}^\top\ve{x}_i) & = & u_{t}
  (\ve{w}_{t}^\top\ve{x}_i)  + r_i \cdot (\ve{w}_{t+1}-\ve{w}_{t})^\top \ve{x}_i\nonumber\\
  & = & u_{t}
  (\ve{w}_{t}^\top\ve{x}_i)  + \upeta r_i \cdot (\hve{\mu}_{y} -
\hve{\mu}_{t})^\top \ve{x}_i \:\:.\label{f12}
\end{eqnarray}
Folding \eqref{f11} and \eqref{f12} in $F_{t+1}$, we get:
\begin{eqnarray}
  F_{t+1} & = &    \expect_{{\mathcal{S}}}\left[ (u^{-1}_{t} \circ u_{t+1}
  (\ve{w}_{t+1}^\top\ve{x})  - \ve{w}_{t}^\top\ve{x})\cdot (y
  - u_{t}
  (\ve{w}_{t+1}^\top\ve{x}))\right]\nonumber\\
        & = & \expect_{{\mathcal{S}}}\left[ 
              \left\{
              \begin{array}{l}
                \left( \ve{w}_{t+1}^\top\ve{x} - \ve{w}_{t}^\top\ve{x}+
p\cdot (u_{t+1}(\ve{w}_{t+1}^\top\ve{x})-u_{t}(\ve{w}_{t+1}^\top\ve{x}))
                \right) \\
                \cdot \left(y - u_{t}
  (\ve{w}_{t}^\top\ve{x})  - \upeta r \cdot (\hve{\mu}_{y} -
                \hve{\mu}_{t})^\top \ve{x}\right)
              \end{array}\right.\right]\nonumber\\
        & = & \expect_{{\mathcal{S}}}\left[ 
              \left\{
              \begin{array}{l}
                \left( \upeta \cdot (\hve{\mu}_{y} -
\hve{\mu}_{t})^\top \ve{x} +
p\cdot (u_{t+1}(\ve{w}_{t+1}^\top\ve{x})-u_{t}(\ve{w}_{t+1}^\top\ve{x}))
                \right) \\
                \cdot \left(y - u_{t}
  (\ve{w}_{t}^\top\ve{x})  - \upeta r \cdot (\hve{\mu}_{y} -
                \hve{\mu}_{t})^\top \ve{x}\right)
              \end{array}\right.\right]\label{eqGRAD}\\
   & = & \upeta \cdot (\hve{\mu}_{y} -
\hve{\mu}_{t})^\top \expect_{{\mathcal{S}}}\left[y \cdot\ve{x} - u_{t}
         (\ve{w}_{t}^\top\ve{x}) \cdot\ve{x}\right] \nonumber\\
   & & +
         \expect_{{\mathcal{S}}}\left[p \cdot
         (u_{t+1}(\ve{w}_{t+1}^\top\ve{x})-u_{t}(\ve{w}_{t+1}^\top\ve{x}))
         \cdot (y - u_{t}
  (\ve{w}_{t}^\top\ve{x}))
       \right]\nonumber\\
  & & - \upeta^2\cdot \expect_{{\mathcal{S}}}\left[r \cdot ((\hve{\mu}_{y} -
\hve{\mu}_{t})^\top \ve{x})^2
      \right]\nonumber\\
  & & - \upeta\cdot (\hve{\mu}_{y} -
\hve{\mu}_{t})^\top \expect_{{\mathcal{S}}}\left[p r
(u_{t+1}(\ve{w}_{t+1}^\top\ve{x})-u_{t}(\ve{w}_{t+1}^\top\ve{x}))
\cdot \ve{x}\right]\nonumber\\
  & = & \upeta \cdot \|\hve{\mu}_{y} -
\hve{\mu}_{t}\|_2^2 +
         \underbrace{\expect_{{\mathcal{S}}}\left[p \cdot
         (u_{t+1}(\ve{w}_{t+1}^\top\ve{x})-u_{t}(\ve{w}_{t+1}^\top\ve{x}))
         \cdot (y - u_{t}
  (\ve{w}_{t}^\top\ve{x}))
       \right]}_{\defeq A}\nonumber\\
  & & - \underbrace{\upeta^2\cdot \expect_{{\mathcal{S}}}\left[r \cdot ((\hve{\mu}_{y} -
\hve{\mu}_{t})^\top \ve{x})^2
      \right]}_{\defeq B}\nonumber\\
  & & - \underbrace{\upeta\cdot (\hve{\mu}_{y} -
\hve{\mu}_{t})^\top \expect_{{\mathcal{S}}}\left[p r
(u_{t+1}(\ve{w}_{t+1}^\top\ve{x})-u_{t}(\ve{w}_{t+1}^\top\ve{x}))
\cdot \ve{x}\right]}_{\defeq C}\:\:.\label{feq1p}
\end{eqnarray}
We now bound lowerbound $A$ and upperbound $B, C$. 
Lemma \ref{lab}
brings
\begin{eqnarray}
\min\left\{-\beta_t, \frac{n_t}{N_t}-1\right\} \cdot u_{t}(\ve{w}_{t+1}^\top\ve{x}) \leq u_{t+1}(\ve{w}_{t+1}^\top\ve{x})-u_{t}(\ve{w}_{t+1}^\top\ve{x}),\label{genBOUND1}
\end{eqnarray}
and
\begin{eqnarray}
u_{t+1}(\ve{w}_{t+1}^\top\ve{x})-u_{t}(\ve{w}_{t+1}^\top\ve{x}) \leq
  \max\left\{\alpha_t,\frac{N_t}{n_t} -1 \right\} \cdot
  u_{t}(\ve{w}_{t+1}^\top\ve{x}) ,\label{genBOUND2}
\end{eqnarray}
and so we get
\begin{eqnarray}
A & \defeq & \expect_{{\mathcal{S}}}\left[p \cdot
         (u_{t+1}(\ve{w}_{t+1}^\top\ve{x})-u_{t}(\ve{w}_{t+1}^\top\ve{x}))
         \cdot (y - u_{t}
  (\ve{w}_{t}^\top\ve{x}))
       \right] \nonumber\\
& = & \expect_{{\mathcal{S}}}\left[p y \cdot
         (u_{t+1}(\ve{w}_{t+1}^\top\ve{x})-u_{t}(\ve{w}_{t+1}^\top\ve{x}))\right]
      -  \expect_{{\mathcal{S}}}\left[p u_{t}
  (\ve{w}_{t}^\top\ve{x}) \cdot
         (u_{t+1}(\ve{w}_{t+1}^\top\ve{x})-u_{t}(\ve{w}_{t+1}^\top\ve{x}))\right]\nonumber\\
& \geq & - \frac{1}{n_t} \cdot \max\left\{\beta_t, 1-\frac{n_t}{N_t}\right\} \expect_{{\mathcal{S}}}\left[u_{t}(\ve{w}_{t+1}^\top\ve{x})\right] - \frac{1}{n_t} \cdot \max\left\{\alpha_t,\frac{N_t}{n_t} -1 \right\}  \expect_{{\mathcal{S}}}\left[u_{t}(\ve{w}_{t+1}^\top\ve{x}) u_{t}(\ve{w}_{t}^\top\ve{x})\right] \nonumber\\
& \geq & - \frac{1}{n_t} \cdot \max\left\{\alpha_t,\beta_t, 1-\frac{n_t}{N_t}, \frac{N_t}{n_t} -1 \right\} \expect_{{\mathcal{S}}}\left[u_{t}(\ve{w}_{t+1}^\top\ve{x})\right]\nonumber\\
& \geq & - \frac{1}{n_t} \cdot \max\left\{\alpha_t,\beta_t, \frac{N_t}{n_t} -1 \right\} p^*_t  ,
\end{eqnarray}
since $y \in \{0,1\}$, $U_t \leq 1$ and $1-(1/z) \leq z-1$ for $z\geq 0$.
Cauchy-Schwartz inequality and \eqref{f12} yield
\begin{eqnarray}
B & \leq & \upeta^2\cdot \expect_{{\mathcal{S}}}\left[r \cdot \|\hve{\mu}_{y} -
\hve{\mu}_{t}\|_2^2 \|\ve{x}\|_2^2
           \right]\nonumber\\
  & \leq & \upeta^2 N X^2 \cdot \|\hve{\mu}_{y} -
\hve{\mu}_{t}\|_2^2 .
  \end{eqnarray}
We also have successively because of Cauchy-Schwartz inequality, the
triangle inequality, Lemma \ref{lemlip} and \eqref{genBOUND2}
\begin{eqnarray}
C & \leq & \upeta\cdot\|\hve{\mu}_{y} -
\hve{\mu}_{t}\|_2 \cdot\|\expect_{{\mathcal{S}}}\left[ p r
(u_{t+1}(\ve{w}_{t+1}^\top\ve{x})-u_{t}(\ve{w}_{t+1}^\top\ve{x}))
\cdot \ve{x}\right] \|_2 \nonumber\\
& \leq & \upeta\cdot\|\hve{\mu}_{y} -
\hve{\mu}_{t}\|_2 \cdot\expect_{{\mathcal{S}}} [ p r
|u_{t+1}(\ve{w}_{t+1}^\top\ve{x})-u_{t}(\ve{w}_{t+1}^\top\ve{x})|
\cdot \|\ve{x}\|_2] \nonumber\\
 & \leq & \frac{\upeta N_t}{n_t}\cdot\|\hve{\mu}_{y} -
          \hve{\mu}_{t}\|_2 \cdot\expect_{{\mathcal{S}}} [ 
|u_{t+1}(\ve{w}_{t+1}^\top\ve{x})-u_{t}(\ve{w}_{t+1}^\top\ve{x})|
\cdot \|\ve{x}\|_2] \nonumber\\
& \leq & \frac{\upeta N_t \max\left\{\alpha_t,\frac{N_t}{n_t} -1
         \right\}  X}{n_t}\cdot\|\hve{\mu}_{y} -
          \hve{\mu}_{t}\|_2 \cdot \expect_{{\mathcal{S}}}\left[u_{t}(\ve{w}_{t+1}^\top\ve{x})\right]\nonumber\\
& \leq & \frac{\upeta N_t \max\left\{\alpha_t,\frac{N_t}{n_t} -1
         \right\}  X p^*_t }{n_t}\cdot\|\hve{\mu}_{y} -
          \hve{\mu}_{t}\|_2 \:\:.\label{as2p}
\end{eqnarray}
We thus get
\begin{eqnarray}
\lefteqn{F_{t+1} - \left(\frac{N_{t-1}}{n_t}
  -1\right)\cdot\frac{p^*_t }{n_t} }\nonumber\\
& \geq & \upeta \cdot \|\hve{\mu}_{y} -
\hve{\mu}_{t}\|_2^2 -\frac{1}{n_t} \cdot \max\left\{\alpha_t,\beta_t, \frac{N_t}{n_t} -1 \right\} p^*_t  -
         \left(\frac{N_{t-1}}{n_t}
         -1\right)\cdot\frac{p^*_t }{n_t} \nonumber\\
& & - \upeta^2 N_t X^2\cdot \|\hve{\mu}_{y} -
\hve{\mu}_{t}\|_2^2 -\frac{\upeta N_t
    \max\left\{\alpha_t,\frac{N_t}{n_t} -1 \right\}  X p^*_t}{n_t}\cdot\|\hve{\mu}_{y} -
          \hve{\mu}_{t}\|_2 \nonumber\\
& \geq & \upeta \cdot \|\hve{\mu}_{y} -
\hve{\mu}_{t}\|_2^2 -\frac{2\max\left\{\alpha_t, \beta_t,
         \frac{N_t}{n_t} -1, \frac{N_{t-1}}{n_t} -1\right\}
         p^*_t}{n_t} - \upeta^2 N_t X^2 \cdot \|\hve{\mu}_{y} -
\hve{\mu}_{t}\|_2^2 \nonumber\\
& & - \frac{\upeta N_t \max\left\{\alpha_t,\frac{N_t}{n_t} -1 \right\}
    X p^*_t}{n_t}\cdot\|\hve{\mu}_{y} -
          \hve{\mu}_{t}\|_2 \nonumber\\
& & = \upeta \cdot \|\hve{\mu}_{y} -
\hve{\mu}_{t}\|_2^2 -\frac{2\max\left\{\alpha_t, \beta_t, \frac{N_t}{n_t} -1, \frac{N_{t-1}}{n_t} -1\right\}p^*_t}{n_t}  - \upeta^2 N_t X^2\cdot \|\hve{\mu}_{y} -
\hve{\mu}_{t}\|_2^2 \nonumber\\
& & - \frac{\upeta N_t \max\left\{n_t \alpha_t , N_t -n_t \right\} X p^*_t}{n_t^2}\cdot\|\hve{\mu}_{y} -
          \hve{\mu}_{t}\|_2 \nonumber\\
& = & \tilde{\upeta} \cdot \|\hve{\mu}_{y} -
\hve{\mu}_{t}\|_2 -\frac{2\max\left\{\alpha_t, \beta_t,
      \frac{N_t}{n_t} -1, \frac{N_{t-1}}{n_t} -1\right\} p^*_t}{n_t}
      - \tilde{\upeta}^2 N_t X^2 \nonumber\\
& & - \frac{\tilde{\upeta} N_t \max\left\{n_t \alpha_t , N_t -n_t \right\} X p^*_t}{n^2}\nonumber\\
  &   \geq & \underbrace{-a \tilde{\upeta}^2 + b \tilde{\upeta} +
         c}_{\defeq J(\tilde{\upeta})} ,\label{binfF}
  \end{eqnarray}
  with $\tilde{\upeta} \defeq \upeta \cdot \|\hve{\mu}_{y} -
\hve{\mu}_{t}\|_2$ and:
  \begin{eqnarray}
a & \defeq & N_t X^2 ,\\
    b & \defeq &  \|\hve{\mu}_{y} -
\hve{\mu}_{t}\|_2 -  \varepsilon_t(1+\varepsilon_t)\cdot p^*_t X,\\
    c & \defeq & -\frac{2\varepsilon_t p^*_t}{n_t} ,
    \end{eqnarray}
where $\varepsilon_t$ is any real satisfying
\begin{eqnarray}
\varepsilon_t & \geq & \max\left\{\alpha_t, \beta_t, \frac{N_t}{n_t} -1, \frac{N_{t-1}}{n_t} -1\right\}.\label{condEPSILON2}
\end{eqnarray}
Remark that
\begin{eqnarray}
2\sqrt{a(1+\varepsilon_t)\cdot -c} & = & 2\sqrt{2\varepsilon_t}(1+\varepsilon_t) \sqrt{p^*_t} X\nonumber,
\end{eqnarray}
so if we can guarantee that $b^2 \geq 4a(1+\varepsilon_t) \cdot -c$, then
fixing $\tilde{\upeta} \defeq (1-\gamma_t) b / (2a)$ for some
$\gamma_t \in [0,1]$ yields from \eqref{binfF}
\begin{eqnarray}
J(\tilde{\upeta}) & = & \frac{b^2(1-\gamma_t^2)}{4a} + c\nonumber\\
& \geq & - \varepsilon_t c + \gamma^2_t (1+\varepsilon_t)c\label{binfJT}
\end{eqnarray}
The condition on $b$ is implied by the following one, since $p^*_t \leq 1$:
\begin{eqnarray}
\|\hve{\mu}_{y} -
\hve{\mu}_{t}\|_2 & \geq & 2\sqrt{2\varepsilon_t}(1+\varepsilon_t)
                           \sqrt{p^*_t} X+
                           \varepsilon_t(1+\varepsilon_t) \sqrt{p^*_t}  X\label{cond2211}.
\end{eqnarray}
Fix any $K_t > 1$. It is easy to check that for any 
\begin{eqnarray}
\varepsilon_t & \leq & \sqrt{K_t} - 1, \label{condEPSILON}
\end{eqnarray}
we have $\varepsilon_t \leq 2(\sqrt{K_t} - \sqrt{2})\sqrt{\varepsilon_t}$,
so a
         sufficient condition to get \eqref{cond2211} is
\begin{eqnarray}
\sqrt{\varepsilon_t}(1+\varepsilon_t) & \leq & \frac{\|\hve{\mu}_{y} -
\hve{\mu}_{t}\|_2 }{2\sqrt{K_t} \sqrt{p^*_t}  X}.
\end{eqnarray}
Letting $f(z) \defeq \sqrt{z}(1+z)$, it is not hard to check that if
we pick $z = \min\{\sqrt{K_t}-1, u^2/K_{t}\}$ then $f(z) \leq u$: indeed,
\begin{itemize}
\item if the $\min$ is $u^2/K_t$, implying $u\leq
  \sqrt{K_t(\sqrt{K_t}-1)}$, then $f(z)$ being increasing we observe
  $f(z) \leq f(u^2/K_t)\leq u$, which simplifies for the rightmost inequality
  into $u\leq \sqrt{K_t(\sqrt{K_t}-1)}$, which is our assumption;
\item if the $\min$ is $\sqrt{K_t}-1$, implying $u\geq
  \sqrt{K_t(\sqrt{K_t}-1)}$, then this time we directly get $f(z) =
  \sqrt{\sqrt{K_t}-1}(1+\sqrt{K_t}-1) = \sqrt{K_t(\sqrt{K_t}-1)}\leq
  u$, as claimed.
\end{itemize}
To summarize, if we pick
\begin{eqnarray}
\varepsilon_t & \defeq & \min\left\{\sqrt{K_t}-1,  \frac{\|\hve{\mu}_{y} -
\hve{\mu}_{t}\|^2_2 }{4K_t^2 p^*_t X^2}\right\},
\end{eqnarray}
then we check that our precondition \eqref{condEPSILON} holds and we
obtain from \eqref{binfF} and \eqref{binfJT},
\begin{eqnarray}
F_{t+1} - \left(\frac{N_{t-1}}{n_t}
  -1\right)\cdot\frac{{p^*_t}}{n} & \geq &\frac{2\varepsilon_t^2
                                         p^*_t}{n_t} - \frac{2\gamma^2_t
                                         \varepsilon_t
                                         (1+\varepsilon_t)
                                         p^*_t}{n_t}.\label{binfFJ}
\end{eqnarray}
Suppose $\gamma_t$ satisfies 
\begin{eqnarray}
(1+\varepsilon_t)\gamma^2_t & \leq & \frac{\varepsilon_t}{2}.\label{condGAMMAT}
\end{eqnarray}
In this case, we further lowerbound \eqref{binfFJ} as
\begin{eqnarray}
F_{t+1} - \left(\frac{N_{t-1}}{n_t}
  -1\right)\cdot\frac{{p^*_t}}{n} & \geq & \frac{\varepsilon_t^2
                                         p^*_t}{n_t}\nonumber\\
& & =   \frac{p^*_t}{n_t}\cdot \left( \min\left\{\sqrt{K_t}-1,  \frac{\|\hve{\mu}_{y} -
\hve{\mu}_{t}\|^2_2 }{4K_t^2 p^*_t X^2}\right\}\right)^2.\label{eqBINFFX}
\end{eqnarray}
To simplify this bound and make it more readable, suppose we fix a
lowerbound
\begin{eqnarray}
\frac{\|\hve{\mu}_{y} -
\hve{\mu}_{t}\|^2_2 }{4 p^*_t X^2} & \geq & \delta_t,\label{condBFIN3}
\end{eqnarray}
for some $\delta_t > 0$. Some simple calculation shows that if we pick 
\begin{eqnarray}
K_t & \defeq & \left(1 + \frac{\delta_t }{1+\delta_t }\right)^2,
\end{eqnarray}
then the $\min$ in \eqref{eqBINFFX} is achieved in $\sqrt{K_t}-1$,
which therefore guarantees
\begin{eqnarray}
F_{t+1} - \left(\frac{N_{t-1}}{n_t}
  -1\right)\cdot\frac{{p^*_t}}{n_t} & \geq & \frac{p^*_t \delta_t}{n_t(1+\delta_t)},\label{eqBINFF2X}
\end{eqnarray}
and therefore gives the choice $\varepsilon_t =
\delta_t/(1+\delta_t)$. The constraint on $\gamma_t$ from
\eqref{condGAMMAT} becomes
\begin{eqnarray}
\gamma_t & \leq & \sqrt{\frac{\delta_t}{2(2+\delta_t)}},\label{condGAMMAT2}
\end{eqnarray}
and it comes from \eqref{condEPSILON2} that $\alpha_t, \beta_t \leq
\delta_t /(1+\delta_t)$ and 
\begin{eqnarray}
\frac{N_t}{n_t}, \frac{N_{t-1}}{n_t} & \leq & 1 + \frac{\delta_t}{1+\delta_t},
\end{eqnarray}
as claimed. This ends the proof of Lemma \ref{lemf}, after having
remarked that the learning rate $\upeta$ is then fixed to be (from \eqref{binfF})
\begin{eqnarray}
\upeta & \defeq & \frac{\tilde{\upeta} }{\|\hve{\mu}_{y} -
\hve{\mu}_{t}\|_2}\nonumber\\
& = & \frac{1-\gamma_t}{2 \|\hve{\mu}_{y} -
\hve{\mu}_{t}\|_2 N_t X^2} \cdot \left( \|\hve{\mu}_{y} -
\hve{\mu}_{t}\|_2 - \frac{\delta_t(2+\delta_t)}{(1+\delta_t)^2} p^*_t
X \right)\nonumber\\
& = & \frac{1-\gamma_t}{2 N_t X^2} \cdot \left( 1-
      \frac{\delta_t(2+\delta_t)}{(1+\delta_t)^2} \cdot \frac{p^*_t
X}{\|\hve{\mu}_{y} -
\hve{\mu}_{t}\|_2} \right),
\end{eqnarray}
and it satisfies, because of \eqref{condBFIN3},
\begin{eqnarray}
\upeta & \geq & \frac{1-\gamma_t}{2 N_t X^2} \cdot \left( 1-
      \frac{\sqrt{\delta_t p^*_t}(2+\delta_t)}{2(1+\delta_t)^2} \right)\label{binfETA}
\end{eqnarray}
and since $\|\hve{\mu}_{y} -
\hve{\mu}_{t}\|_2 \leq 2X$, 
\begin{eqnarray}
\upeta & \leq & \frac{1-\gamma_t}{2 N_t X^2} \cdot \left( 1-
      \frac{\delta_t p^*_t (2+\delta_t)}{2(1+\delta_t)^2} \right)
\end{eqnarray}
(we note that \eqref{condBFIN3} implies $\delta_t p^*_t\leq 1$) This ends the proof of Lemma \ref{lemf}.
\end{proof}
We now show a lowerbound on $Q_{t+1}$ in Theorem \ref{thBTT}. 
\begin{lemma}\label{binfqn}
Suppose the setting of Lemma \ref{lemf} holds.
Then 
\begin{eqnarray}
Q_{t+1} & \geq & \frac{p^*_t \delta_t}{n_t(1+\delta_t)}.
\end{eqnarray}
\end{lemma}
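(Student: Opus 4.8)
The plan is to unfold the definition of $Q_{t+1}$ from Theorem \ref{thBTT}, namely $Q_{t+1} = F_{t+1} - \left(\frac{N_{t-1}}{n_t} - 1\right)\cdot \properlossplus{t+1}{t}(S,\ve{w}_{t+1})$, and feed it the lower bound on $F_{t+1}$ already produced inside the proof of Lemma \ref{lemf}. That proof in fact delivers the sharper statement $F_{t+1} - \left(\frac{N_{t-1}}{n_t} - 1\right)\cdot \frac{p^*_t}{n_t} \geq \frac{p^*_t\delta_t}{n_t(1+\delta_t)}$ (equation \eqref{eqBINFF2X}). So the whole lemma reduces to checking that replacing $\properlossplus{t+1}{t}(S,\ve{w}_{t+1})$ by the larger quantity $\frac{p^*_t}{n_t}$ in the subtracted term can only decrease $Q_{t+1}$; since $\frac{N_{t-1}}{n_t}\geq 1$ in the parameter ranges we use (this is implied, e.g., whenever $n_t$ is held fixed, using the within-iteration feasibility $n_{t-1}\leq N_{t-1}$, and is trivial when $N_{t-1}=n_t$), this amounts exactly to the residual inequality $\properlossplus{t+1}{t}(S,\ve{w}_{t+1}) \leq \frac{p^*_t}{n_t}$, after which $Q_{t+1}\geq \frac{p^*_t\delta_t}{n_t(1+\delta_t)}$ follows by \eqref{eqBINFF2X}.

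To prove that residual inequality I would argue in three steps. First, write $\properlossplus{t+1}{t}(S,\ve{w}_{t+1}) = \expect_S[D_{U_t^\star}(y\|u_{t+1}(\ve{w}_{t+1}^\top\ve{x}))]$ and invoke the quadratic Bregman bound \eqref{propBREG1} of Lemma \ref{lemlip}, $D_{U_t^\star}(p\|p')\leq\frac{(p-p')^2}{2n_{t-1}}$, to get $\properlossplus{t+1}{t}(S,\ve{w}_{t+1}) \leq \frac{1}{2n_{t-1}}\expect_S[(y - u_{t+1}(\ve{w}_{t+1}^\top\ve{x}))^2]$. Second, since $y\in\{0,1\}$ and $u_{t+1}(\cdot)\in[0,1]$, split the square on $y=1$ versus $y=0$ and use $(1-u)^2\leq 1-u$ and $u^2\leq u$ to bound $\expect_S[(y - u_{t+1}(\ve{w}_{t+1}^\top\ve{x}))^2] \leq \expect_S[y] + \expect_S[u_{t+1}(\ve{w}_{t+1}^\top\ve{x})]$. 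Third, apply Lemma \ref{lab} together with the hypotheses of Lemma \ref{lemf} ($\alpha_t,\beta_t\leq f(\delta_t)$ and $\frac{N_t}{n_t}\leq 1+f(\delta_t)$) to obtain $u_{t+1}(\ve{w}_{t+1}^\top\ve{x})\leq (1+f(\delta_t))\,u_t(\ve{w}_{t+1}^\top\ve{x})$ pointwise, hence $\expect_S[u_{t+1}(\ve{w}_{t+1}^\top\ve{x})]\leq (1+f(\delta_t))\,\expect_S[u_t(\ve{w}_{t+1}^\top\ve{x})]\leq (1+f(\delta_t))\,p^*_t$; combined with $\expect_S[y]\leq p^*_t$ this gives $\expect_S[(y-u_{t+1}(\ve{w}_{t+1}^\top\ve{x}))^2]\leq (2+f(\delta_t))\,p^*_t$.

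The hard part will be the final reconciliation of constants: converting $\properlossplus{t+1}{t}(S,\ve{w}_{t+1}) \leq \frac{(2+f(\delta_t))p^*_t}{2n_{t-1}}$ into the required $\leq \frac{p^*_t}{n_t}$ forces one to track how the Lipschitz parameters of consecutive iterations relate through the standing constraints $\frac{N_t}{n_t},\frac{N_{t-1}}{n_t}\leq 1+f(\delta_t)$ and $n_{t-1}\leq N_{t-1}$ (together with $\delta_t p^*_t\leq 1$ from the remark after Lemma \ref{lemf}), and, if the crude $(1-u)^2\leq 1-u$, $u^2\leq u$ estimates turn out to be too lossy, retaining the $-2\,\expect_S[y\,u_{t+1}(\ve{w}_{t+1}^\top\ve{x})]$ cross-term and/or using the two-sided control of $u_{t+1}$ by $u_t$ supplied by Lemma \ref{lab}. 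Everything downstream — plugging into \eqref{eqBINFF2X}, handling the sign of $\frac{N_{t-1}}{n_t}-1$, and assembling the final lower bound on $Q_{t+1}$ — is routine.
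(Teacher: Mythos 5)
Your decomposition is exactly the paper's: unfold $Q_{t+1} = F_{t+1} - \bigl(\tfrac{N_{t-1}}{n_t}-1\bigr)\properlossplus{t+1}{t}(S,\ve{w}_{t+1})$, reduce to showing $\properlossplus{t+1}{t}(S,\ve{w}_{t+1}) \leq p^*_t/n_t$, and plug in the sharper form of Lemma~\ref{lemf} at~\eqref{eqBINFF2X}. You also correctly flag that $\tfrac{N_{t-1}}{n_t}\geq 1$ must hold for the substitution $\properlossplus{t+1}{t}\rightsquigarrow p^*_t/n_t$ to move the inequality the right way; the paper leaves this implicit.

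The step you label ``hard'' is where the argument genuinely does not close under the stated hypotheses, and your diagnosis is accurate. Under the paper's own definition, $\properlossplus{t+1}{t}(S,\ve{w}_{t+1}) = \expect_S[D_{U_t^\star}(y\|u_{t+1}(\ve{w}_{t+1}^\top\ve{x}))]$, Lemma~\ref{lemlip} gives $D_{U_t^\star}(p\|p') \leq (p-p')^2/(2n_{t-1})$ (note: $n_{t-1}$, not $n_t$, since $u_t$ is fit with parameters from iteration $t-1$), and your second and third steps then deliver $\properlossplus{t+1}{t}(S,\ve{w}_{t+1}) \leq (2+f(\delta_t))\,p^*_t/(2n_{t-1})$. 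To dominate this by $p^*_t/n_t$ you need $n_{t-1}/n_t \geq 1 + f(\delta_t)/2$, but the hypotheses only furnish the opposite-sided bound $n_{t-1}/n_t \leq N_{t-1}/n_t \leq 1 + f(\delta_t)$; no lower bound on $n_{t-1}/n_t$ is available. The paper's own proof of this Lemma reaches $p^*_t/n_t$ by two index slips that mask exactly this difficulty: it writes $\properlossplus{t+1}{t}(S,\ve{w}_{t+1})$ as $\expect_S[D_{U_t^\star}(y\|u_t(\ve{w}_{t+1}^\top\ve{x}))]$ (with $u_t$ in place of $u_{t+1}$, which removes the $(1+f(\delta_t))$ factor and matches the $u_t$ appearing in the definition of $p^*_t$), and it bounds $D_{U_t^\star}$ by $(p-p')^2/(2n_t)$ rather than the $(p-p')^2/(2n_{t-1})$ that Lemma~\ref{lemlip} actually provides. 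Neither substitution is licensed by the definitions as stated. So your proof, carried out rigorously with the paper's own conventions, does not yield $\properlossplus{t+1}{t}(S,\ve{w}_{t+1}) \leq p^*_t/n_t$ and hence not the claimed lower bound on $Q_{t+1}$, unless one adds an explicit constraint relating $n_{t-1}$ to $n_t$ (e.g., that $n_t$ is non-increasing and $n_{t-1} \geq (1+f(\delta_t)/2)\,n_t$).
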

\begin{proof}
Remind that it comes from Theorem \ref{thBTT}
\begin{eqnarray}
Q_{t+1} & \defeq & F_{t+1}  -                              \left(\frac{N_{t-1}}{n_t} -1\right)\cdot
                                                   \properlossplus{t+1}{t}(S,\ve{w}_{t+1}).\nonumber
\end{eqnarray}
We have using Lemma \ref{lemlip},
\begin{eqnarray}
\properlossplus{t+1}{t}(S,\ve{w}_{t+1}) & \defeq & \expect_{{S}}[D_{U_t^\star}(y
\| u_t(\ve{w}_{t+1}^\top\ve{x}))] \nonumber\\
& \leq & \frac{1}{2n_t} \cdot \expect_{{S}}[(y -  u_t(\ve{w}_{t+1}^\top\ve{x}))^2] \nonumber\\
& & = \frac{1}{2n_t} \cdot (\expect_{{S}}[y] -2 \expect_{{S}}[y
    u_t(\ve{w}_{t+1}^\top\ve{x})] + \expect_{{S}}[u_t(\ve{w}_{t+1}^\top\ve{x})^2] ) \nonumber\\
& \leq & \frac{\expect_{{S}}[y] + \expect_{{S}}[u_t(\ve{w}_{t+1}^\top\ve{x})]}{2n_t}\nonumber\\
& \leq & \frac{p^*_t}{n_t},
\end{eqnarray}
because $u_t(z)\leq 1$. We get
\begin{eqnarray}
\left(\frac{N_{t-1}}{n_t} -1\right) \cdot
                                                   \properlossplus{t+1}{t}(S,\ve{w}_{t+1})
  & \leq & \left(\frac{N_{t-1}}{n_t} -1\right)\cdot\frac{p^*_t}{n_t},
\end{eqnarray}
so using Lemma \ref{lemf}, we get
\begin{eqnarray}
Q_{t+1} & \geq & F_{t+1} - \left(\frac{N_{t-1}}{n_t} -1\right)\cdot\frac{p^*_t}{n_t} \nonumber\\
& \geq & \frac{p^*_t \delta_t}{n_t(1+\delta_t)},
\end{eqnarray}
as claimed.
\end{proof}
Remind from Theorem \ref{thBTT} that 
\begin{eqnarray}
\properlossplus{t+1}{t+1}(S,\ve{w}_{t+1}) & \leq & \properlossplus{t}{t}(S,\ve{w}_{t})
                                              - \expect_{{S}}[D_{U_t}(
        \ve{w}_{t}^\top\ve{x}\| u^{-1}_{t} \circ
                                              u_{t+1}(\ve{w}_{t+1}^\top\ve{x}))]
                                              - L_{t+1} - Q_{t+1},\nonumber
\end{eqnarray}
and we know that
\begin{itemize}
\item $\expect_{{S}}[D_{U_t}(
        \ve{w}_{t}^\top\ve{x}\| u^{-1}_{t} \circ
                                              u_{t+1}(\ve{w}_{t+1}^\top\ve{x}))]
                                              \geq 0$, because a
                                              Bregman divergence
                                              cannot be negative;
\item $L_{t+1}\geq 0$ from Lemma \ref{lir1};
\item $Q_{t+1} \geq p^*_t \delta_t/(n_t(1+\delta_t))$ from Lemma
  \ref{binfqn} (assuming the conditions of Lemma \ref{lemf}).
\end{itemize}
Putting this altogether, we get
\begin{eqnarray}
\properlossplus{t+1}{t+1}(S,\ve{w}_{t+1}) & \leq & \properlossplus{t}{t}(S,\ve{w}_{t})- \frac{p^*_t \delta_t}{n_t(1+\delta_t)},\nonumber
\end{eqnarray}
which then easily translates into the statement of Theorem
\ref{thCLU}.

\section{Proof of Corollary \ref{corCLU}}\label{proof_corCLU}

To make things explicit, we replace Step 3 in the \clu~by the
following new Step 3:

\begin{itemize}
\item [Step 3] fit $\hve{y}_{t+1}$ by solving for
  global optimum:
\begin{eqnarray}
\hve{y}_{t+1} & \defeq & \arg\min_{\hat{\ve{y}}}
\expect_{{S}}[D_{U^\star_{t}}(\hat{y}\|\hve{y}_{t})]\mbox{
                    \hspace{2cm}//proper composite fitting of
                         $\hve{y}_{t+1}$ given $\ve{w}_{t+1}, u_t$}\nonumber\\
& & \mbox{ s.t. }  \left\{\begin{array}{l}
\hat{y}_{i+1} - \hat{y}_{i}
\in [n_t\cdot (\ve{w}_{t+1}^\top(\ve{x}_{i+1} - \ve{x}_i)), N_t\cdot
(\ve{w}_{t+1}^\top(\ve{x}_{i+1} - \ve{x}_i))]\:\:, \forall i \in
                        [m-1]\\
                        \hat{y}_1 \in [(1-\beta_t)
u_{t}(\ve{w}_{t+1}^\top\ve{x}_1), (1+\alpha_t)
                            u_{t}(\ve{w}_{t+1}^\top\ve{x}_1)]\\
\hat{y}_m \leq 1
\end{array}\right. \label{pbregU2}.
\end{eqnarray}
\end{itemize}
The only step that needs update in the proof of Theorem \ref{thCLU} is
Lemma \ref{lir1}. We now show that the property still holds for this
new Step 3.
\begin{lemma}\label{lir2}
    $L_{t+1} \geq 0$, $\forall t$.
\end{lemma}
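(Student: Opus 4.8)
The plan is to mirror the KKT argument from the proof of Lemma~\ref{lir1}, exploiting the fact that problem~\eqref{pbregU2} differs from \eqref{pbregU} only in the first coordinate: the one-sided constraint $\hat{y}_1\geq 0$ becomes the box $\hat{y}_1\in[(1-\beta_t)q_1,(1+\alpha_t)q_1]$, where $q_1\defeq u_t(\ve{w}_{t+1}^\top\ve{x}_1)\geq 0$, while the slope constraints and $\hat{y}_m\leq 1$ are untouched. Consequently every part of the proof of Lemma~\ref{lir1} that invokes only those unchanged constraints transfers verbatim, and I would only need to redo the endpoint bookkeeping at $i=1$.

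Concretely, I would write the Lagrangian exactly as before, keeping $\ve{\lambdaleft},\ve{\lambdaright}$ for the slope constraints and $\rhoright$ for $\hat{y}_m\leq 1$, but replacing the single multiplier $\rholeft$ by two nonnegative multipliers $\rho_1^{-}$ (for $\hat{y}_1\geq(1-\beta_t)q_1$) and $\rho_1^{+}$ (for $\hat{y}_1\leq(1+\alpha_t)q_1$). The stationarity conditions for $i=2,\dots,m$ coincide with (\ref{eq1})--(\ref{eq3}), so with $\sigma_i\defeq\sum_{j=i}^m\omega_j(u_t^{-1}(\hat{y}_{tj})-u_t^{-1}(\hat{y}_j))$ one again obtains $\sigma_i=\rhoright+{\lambdaright}_{(i-1)}-{\lambdaleft}_{(i-1)}$ for $i\geq 2$, while now $\sigma_1=\rhoright-\rho_1^{-}+\rho_1^{+}$. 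In particular Cases~1.1 and 1.2 of Step~1 in the proof of Lemma~\ref{lir1} (which cover $i>1$ and use only complementary slackness of the slope multipliers together with Lemma~\ref{lemlip} applied to $u_t$) go through with no change whatsoever.

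The crux is to re-establish \eqref{bfinRHO1} at $i=1$. Since $\hat{y}_0=q_0$, this amounts to showing $(\sigma_1-\rhoright)(\hat{y}_1-q_1)=(\rho_1^{+}-\rho_1^{-})(\hat{y}_1-q_1)\geq 0$. If $\rho_1^{-}>0$, complementary slackness forces $\hat{y}_1=(1-\beta_t)q_1\leq q_1$ (using $\beta_t\geq 0$ and $q_1\geq 0$); if moreover $\rho_1^{+}>0$ then also $\hat{y}_1=(1+\alpha_t)q_1$, which forces $\hat{y}_1=q_1$ and makes the product vanish, while if $\rho_1^{+}=0$ the product equals $-\rho_1^{-}(\hat{y}_1-q_1)\geq 0$. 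Symmetrically, if $\rho_1^{+}>0$ then $\hat{y}_1=(1+\alpha_t)q_1\geq q_1$ and the product equals $\rho_1^{+}(\hat{y}_1-q_1)\geq 0$; and if both multipliers vanish the product is $0$. Hence \eqref{bfinRHO1} holds for $i=1$ as well. This sign-bookkeeping step, in particular the degenerate case $q_1=0$ or $\alpha_t=\beta_t=0$ in which both endpoint multipliers can be active at once, is the only place where care is needed; everything else is automatic.

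From here Step~2 of the proof of Lemma~\ref{lir1} applies unchanged: summing \eqref{bfinRHO1} over $i\in[m]$ telescopes the $\rhoright$ contribution to $\rhoright(\hat{y}_m-q_m)$, which is nonnegative because $\rhoright>0$ forces $\hat{y}_m=1\geq q_m=u_t(\ve{w}_{t+1}^\top\ve{x}_m)$ by complementary slackness~\eqref{kktpir} and $\mathrm{Im}\,u_t\subseteq[0,1]$. Plugging this into \eqref{prodlem} gives $\expect_{{S}}[(u_t^{-1}(\hat{y}_t)-u_t^{-1}(\hat{y}_{t+1}))\cdot(\hat{y}_{t+1}-u_t(\ve{w}_{t+1}^\top\ve{x}))]\geq 0$, and substituting $\hat{y}_{t+1}=u_{t+1}(\ve{w}_{t+1}^\top\ve{x})$ and $u_t^{-1}(\hat{y}_t)=\ve{w}_t^\top\ve{x}$, both of which hold by the interpolating design of \fit, turns the left-hand side into exactly $L_{t+1}$ of \eqref{defLT}. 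This yields $L_{t+1}\geq 0$ and completes the argument.
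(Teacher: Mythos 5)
Your proposal is correct and follows essentially the same KKT-based argument as the paper: replace the single multiplier on $\hat{y}_1\geq 0$ with two multipliers for the box constraint, note that only $\sigma_1$ changes and now equals $\rhoright-\rho_1^{-}+\rho_1^{+}$, then re-verify nonnegativity of the $i=1$ summand while leaving the $i>1$ cases and the final telescoping step untouched. One minor point in your favour: you explicitly handle the degenerate case $\rho_1^{-},\rho_1^{+}>0$ simultaneously (possible when $q_1=0$ or $\alpha_t=\beta_t=0$), whereas the paper's case analysis tacitly assumes the two endpoint constraints are distinct so that one multiplier must vanish; your reasoning that both being active forces $\hat{y}_1=q_1$ and kills the product closes this small gap cleanly.
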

\begin{proof}
The proof proceeds from the same steps as for Lemma \ref{lir1}. We reuse
the same notations. This time, we get the
Lagrangian,
\begin{eqnarray}
{\mathcal{L}}(\hve{y},S|\ve{{\lambdaleft}}, \ve{{\lambdaright}}, \rholeft, \rhoright) & \defeq & \expect_{{\mathcal{S}}}[
D_{U^\star_{t}}(\hat{y}\|y_t)] + \sum_{i=1}^{m-1} {\lambdaleft}_i \cdot (\hat{y}_i
                                                                          - \hat{y}_{i+1} + n_{i+1} - n_{i}) \nonumber\\
  & & + \sum_{i=1}^{m-1}{\lambdaright}_i \cdot
(\hat{y}_{i+1} - \hat{y}_i - N_{i+1} + N_i) + \rholeft \cdot
      ((1-\beta_t) q_1 - \hat{y}_1) \nonumber\\
  & & + \rholeft' \cdot
(\hat{y}_1 - (1+\alpha_t) q_1) + \rhoright \cdot
(\hat{y}_m-1) \:\:,
\end{eqnarray}
and
the following KKT conditions for the optimum:
\begin{eqnarray}
\omega_i(u^{-1}_{t}(\hat{y}_i) - u^{-1}_{t}(\hat{y}_{ti})) + {\lambdaleft}_i -
{\lambdaleft}_{i-1} + {\lambdaright}_{i-1} - {\lambdaright}_{i} & = & 0\:\:, \forall i = 2, 3,
..., m-1\:\:,\label{eq12}\\
\omega_i (u^{-1}_{t}(\hat{y}_1) - u^{-1}_{t}(\hat{y}_{t1})) + {\lambdaleft}_{1} -
{\lambdaright}_{1} -
  \rholeft + \rholeft' & = & 0\:\:, \label{eq22}\\
\omega_i (u^{-1}_{t}(\hat{y}_m) - u^{-1}_{t}(\hat{y}_{tm})) - {\lambdaleft}_{m-1} +
{\lambdaright}_{m-1} - \rhoright& = & 0\:\:, \label{eq32}\\
\hat{y}_{i+1} - \hat{y}_i & \in & [n_{i+1} - n_i, N_{i+1} -
 N_i]\:\:, \forall i \in [m-1]\:\:,\\
\hat{y}_1 & \in &q_1 \cdot  [1-\beta_t, 1+\alpha_t ]\:\:,\\
{\lambdaleft}_i \cdot  (\hat{y}_i
- \hat{y}_{i+1} +n_{i+1} - n_{i})  & = & 0\:\:, \forall i \in [m-1]\:\:,\label{kktlambda2}\\
{\lambdaright}_i \cdot (\hat{y}_{i+1} - \hat{y}_i - N_{i+1} + N_i)  & = & 0\:\:,
\forall i \in [m-1]\:\:,\label{kktrho2}\\
\rholeft \cdot  ((1-\beta) q_1 - \hat{y}_1)  & = & 0\:\:, \label{kktpil2}\\
\rholeft' \cdot  (\hat{y}_1 - (1+\alpha) q_1)  & = &
                                                     0\:\:, \label{kktpir2}\\
\rhoright \cdot
(\hat{y}_m-1) & = & 0\:\:, \label{kktpm}\\
\ve{{\lambdaleft}}, \ve{{\lambdaright}} & \succeq &
                                                    \ve{0}\:\:,\nonumber\\
\rholeft, \rholeft', \rhoright & \geq & 0\:\:.
\end{eqnarray}
Letting again $\sigma_i \defeq \sum_{j=i}^m \omega_j  (u^{-1}_{t}(\hat{y}_{tj}) -
u^{-1}_{t}(\hat{y}_j))$ (for $i = 1, 2, ..., m$) and $\hat{y}_0$ and $q_0$ 
any identical reals, we obtain this time:
\begin{eqnarray}
\sigma_i & = & {\lambdaright}_{i-1} - {\lambdaleft}_{i-1} + \rhoright\:\:, \forall i \in \{2, 3,
..., m\}\:\:,\\
\sigma_1 & = &- \rholeft + \rholeft'  + \rhoright  \:\:.\label{defsig1}
\end{eqnarray}
We now remark that just like in \eqref{bfinRHO1}, we still get
\begin{eqnarray}
(\sigma_i - \rhoright)\cdot ((\hat{y}_i - q_i)-(\hat{y}_{(i-1)} -
  q_{(i-1)})) & \geq & 0, \forall i>1,\label{bfinRHO11}
\end{eqnarray}
since the expression of the corresponding $\sigma$s does not change. The proof
changes for $\sigma_1$ as this time,
\begin{eqnarray}
(\sigma_1 - \rhoright)\cdot ((\hat{y}_1 - q_1)-(\hat{y}_{0} -
  q_{0})) & = & (- \rholeft + \rholeft')\cdot (\hat{y}_1 - q_1),
\end{eqnarray}
and we have the following possibilities:
\begin{itemize}
\item suppose $\rholeft > 0$. In this case,
KKT condition (\ref{kktpil2}) implies
$\hat{y}_1 = (1-\beta_t) q_1$,
implying $\hat{y}_1 - q_1 = -\beta_t q_1 \leq 0$, and also $\hat{y}_1
\neq (1+\alpha_t) q_1$, implying from KKT condition \eqref{kktpir2}
$\rholeft' = 0$, which gives us $(- \rholeft + \rholeft')\cdot
(\hat{y}_1 - q_1) = - \rholeft \cdot (\hat{y}_1 - q_1) \geq 0$.
\item suppose $\rholeft' > 0$. In this case,
the KKT condition (\ref{kktpir2}) implies
$\hat{y}_1 = (1+\alpha) q_1$ and so $\hat{y}_1 -q_1 = \alpha q_1 \geq 0$, but also so $\hat{y}_1 \neq (1-\beta) q_1$, so
$\rholeft = 0$, which gives us $(- \rholeft + \rholeft')\cdot
(\hat{y}_1 - q_1) = \rholeft' \cdot (\hat{y}_1 - q_1) \geq 0$.
\item If both $\rholeft = \rholeft' = 0$, we note $ (- \rholeft +
  \rholeft')\cdot (\hat{y}_1 - q_1) = 0$,
\end{itemize}
and so \eqref{bfinRHO1} also holds for $i=1$, which allows us to
conclude in the same way as we did for Lemma \ref{lir1}, and ends the proof of Lemma \ref{lir2}.
\end{proof}

\section{Proof of Lemma \ref{lemCONC}}\label{proof_lemCONC}

\begin{figure}[t]
\begin{center}
\includegraphics[trim=20bp 520bp 550bp
140bp,clip,width=0.99\linewidth]{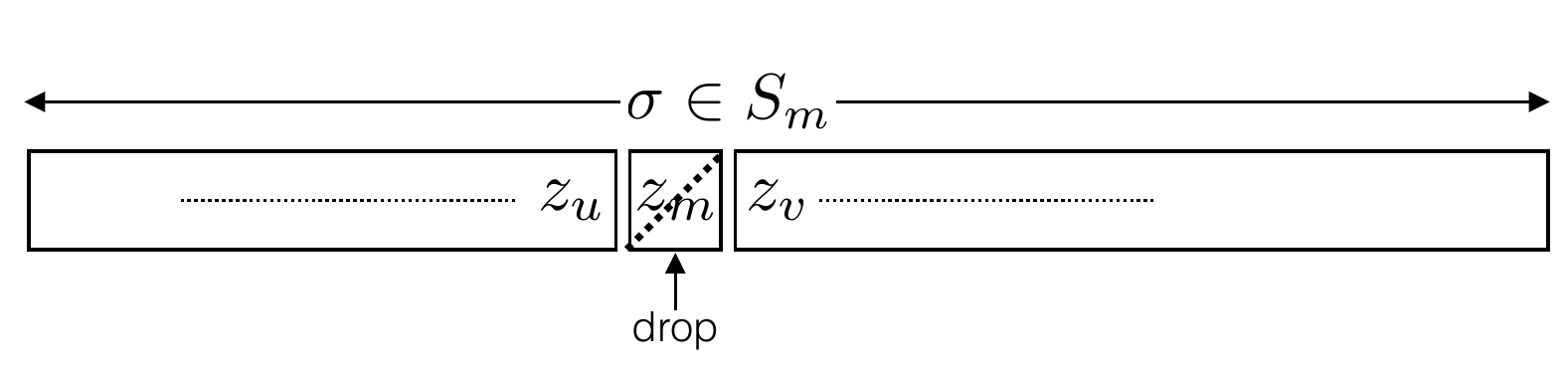}
\end{center}
\caption{Crafting from $\sigma \in S_m$ a subset of $m-1$ reals for which
  the induction hypothesis can be applied in the proof of Lemma \ref{lemCONC} (see text).}
  \label{f-perm}
\end{figure}

Let us drop the iteration index, thus letting $z_{i} \defeq z_{Ti}$
for $i=0, 1, ..., m+1$ (with $z_{0} \defeq
\zmin{T}$ and $z_{m+1} \defeq
\zmax{T}$). We thus have $z_{i} \leq z_{i+1}, \forall i$. We now pick one specific element in $\mathcal{U}(\ve{w},S)$, such that
\begin{eqnarray}
u(z_{i}) & = & (-
\cbr')^{-1}(z_{i}),\label{eqID1}
\end{eqnarray}
for $i\in [d]$, which complies with the definition of $\mathcal{U}$ as both $u$ and $(-
\cbr')^{-1}$ are non decreasing. We then have
\begin{eqnarray}
\int_{z_1}^{z_m} |(-
\cbr')^{-1}(z) - u(z)|\mathrm{d}z & = & \sum_{i=1}^{m-1} \int_{z_{i}}^{z_{i+1}} |(-
\cbr')^{-1}(z) - u(z)|\mathrm{d}z\nonumber\\
& \leq & \sum_{i=1}^{m-1}
                                           (u(z_{i+1}) -
                                           u(z_i))(z_{i+1} - z_i)\nonumber\\
& \leq & N \sum_{i=1}^{m-1} (z_{i+1} - z_i)^2,\label{bsup11}
\end{eqnarray}
where the first inequality holds because of \eqref{eqID1} and $u$ is non decreasing, and the second inequality holds because of the constraint in Step
3. Let $S_m \ni \sigma: [m] \rightarrow [m]$ be a
permutation of the indices. We now show
\begin{eqnarray}
\sum_{i=1}^{m-1} (z_{\sigma(i+1)} -
  z_{\sigma(i)})^2 & \geq & \sum_{i=1}^{m-1} (z_{i+1} - z_i)^2,
                            \forall m>1, \forall
                            \sigma \in S_m. \label{permS}
\end{eqnarray}
We show this by induction on $m$. The result is trivially true for
$m=2$. Considering any $m>2$ and any permutation $\sigma \in S_m$,
suppose the order of the $z$s in the permutation is as in Figure
\ref{f-perm}. Let $\Sigma_{\mathrm{tot}}\defeq \sum_{i=1}^{m-1} (z_{\sigma(i+1)} -
  z_{\sigma(i)})^2$, which therefore includes term $(z_m - z_u)^2 +
  (z_v - z_m)^2$. Now, drop $z_m$. This gives us a partial sum, $\Sigma_{\mathrm{partial}}$, over
  $\{z_1, z_2, ..., z_{m-1}\}$ described by a permutation $\sigma \in
  S_{m-1}$ for which the induction hypothesis applies. We then have
  two cases:\\
\noindent \textbf{Case 1}: $1<\sigma(m)<m$, which implies that $z_m$
is "inside" the ordering given by $\sigma$ and is in fact the case
depicted in Figure \ref{f-perm}. In this case and using notations from
Figure \ref{f-perm}, we get:
\begin{eqnarray}
\Sigma_{\mathrm{tot}} & = & \Sigma_{\mathrm{partial}} + (z_m - z_u)^2
                            +  (z_v - z_m)^2 - (z_v - z_u)^2,\label{lbound1}
\end{eqnarray}
and the induction hypothesis yields
\begin{eqnarray}
\Sigma_{\mathrm{partial}} & \geq & \sum_{i=1}^{m-2} (z_{i+1} - z_i)^2. \label{lbound2}
\end{eqnarray}
So to show \eqref{permS} we just need to show
\begin{eqnarray}
\underbrace{\sum_{i=1}^{m-2} (z_{i+1} - z_i)^2 + (z_m - z_u)^2
                            +  (z_v - z_m)^2 - (z_v -
  z_u)^2}_{\mbox{Lowerbound on $\Sigma_{\mathrm{tot}}$ from
  \eqref{lbound1} and \eqref{lbound2}}} & \geq & \sum_{i=1}^{m-1} (z_{i+1} - z_i)^2,
\end{eqnarray}
which equivalently gives
\begin{eqnarray}
(z_m - z_u)^2 + (z_m - z_v)^2 & \geq & (z_v - z_u)^2 +(z_m - z_{m-1})^2.\label{sEQ1}
\end{eqnarray}
After putting $(z_v - z_u)^2$ in the LHS and simplifying, we get
equivalently that the induction holds if 
\begin{eqnarray}
2z_m^2 -2z_mz_u -2z_mz_v +2z_vz_u & \geq & (z_m - z_{m-1})^2. \label{sEQ2}
\end{eqnarray}
The LHS factorizes conveniently as $2z_m^2 -2z_mz_u -2z_mz_v +2z_vz_u
= 2(z_m - z_u)(z_m - z_v)$. Since by hypothesis $z_1\leq z_2 ... \leq
z_{m-1}\leq z_m$, we get $2(z_m - z_u)(z_m - z_v) \geq
2(z_m-z_{m-1})^2$, which implies \eqref{sEQ2} holds and the induction
is proven.\\
\noindent \textbf{Case 2}: $\sigma(m)=m$ (the case $\sigma(m)=1$ give
the same proof). In this case, $z_m$ is at
the "right" of the permutation's ordering. Using notations from Figure
\ref{f-perm}, we get in lieu of \eqref{lbound1},
\begin{eqnarray}
\Sigma_{\mathrm{tot}} & = & \Sigma_{\mathrm{partial}} + (z_m - z_u)^2,\label{lbound22}
\end{eqnarray}
and leaves us with the following result to show:
\begin{eqnarray}
\sum_{i=1}^{m-2} (z_{i+1} - z_i)^2 + (z_m - z_u)^2 & \geq & \sum_{i=1}^{m-1} (z_{i+1} - z_i)^2,
\end{eqnarray}
which simplifies in $(z_m - z_u)^2 \geq (z_m - z_{m-1})^2$, which is
true by assumption ($z_u\leq z_{m-1} \leq z_m$).\\

To summarize, we have shown that $\forall \sigma: [m] \rightarrow
[m]$, 
\begin{eqnarray}
\int_{z_1}^{z_m} |(-
\cbr')^{-1}(z) - u(z)|\mathrm{d}z & \leq & \sum_{i=1}^{m-1} (z_{\sigma(i+1)} -
  z_{\sigma(i)})^2 .
\end{eqnarray}
Assuming the $\varepsilon$-NN graph is 2-vertex-connected, we square
the graph. Because of the triangle inequality on norm $\|.\|$, every
edge has now length at most $2\varepsilon$ and the graph is
Hamiltonian, a result known as Fleischner's Theorem
\citep{fTS}, \cite[p. 265, F17]{gyHO}. Consider any Hamiltonian path and the permutation $\ve{\sigma}$ of
$[m]$ it induces. We thus get $\|\ve{x}_{\sigma(i+1)} -
\ve{x}_{\sigma(i)}\|\leq 2\varepsilon, \forall i$, and so Cauchy-Schwarz
inequality yields:
\begin{eqnarray}
\sum_{i=1}^{m-1} (z_{\sigma(i+1)} -
  z_{\sigma(i)})^2 & \defeq & \sum_{i=1}^{m-1} (\ve{w}^\top\ve{x}_{\sigma(i+1)}
  -\ve{w}^\top\ve{x}_{\sigma(i)})^2 \nonumber\\
& \leq & \|\ve{w}\|_*^2 \sum_{i=1}^{m-1} \|\ve{x}_{\sigma(i+1)}
  -\ve{x}_{\sigma(i)}\|^2 \nonumber\\
& \leq & 2m\varepsilon^2 \cdot \|\ve{w}\|_*^2,\label{bsup22}
\end{eqnarray}
as claimed, where $\|.\|_*$ is the dual norm of $\|.\|$. We assemble \eqref{bsup11} and \eqref{bsup22} and get:
\begin{eqnarray*}
\int_{z_1}^{z_m} |(-
\cbr')^{-1}(z) - u(z)|\mathrm{d}z & \leq & 2 N m\varepsilon^2 \cdot \|\ve{w}\|_*^2, 
\end{eqnarray*}
which is the statement of the Lemma.\\

\noindent \textbf{Remark}: had we measured the $\ell_1$ discrepancy using the loss and not its link (and adding a second order differentiability condition), we could have used the fact that a Bregman divergence between two points is proportional to the square loss to get a result similar to the Lemma (see Section \ref{sec_FactSheetBreg}).

\end{document}